\documentclass[a4paper,fleqn]{cas-dc}

\usepackage[numbers]{natbib}

\usepackage{adjustbox}
\usepackage{bm}
\usepackage{pifont}
\usepackage{algorithm}
\usepackage{algpseudocode}
\usepackage{amsthm}
\usepackage{capt-of}  

\algrenewcommand\algorithmicrequire{\textbf{Input:}}
\algrenewcommand\algorithmicensure{\textbf{Output:}}

\definecolor{lightgray}{gray}{0.9}
\definecolor{changehl}{RGB}{255,245,157}
\definecolor{changetext}{RGB}{190,90,0}

\newcommand{\cmark}{\ding{51}}
\newcommand{\xmark}{\ding{55}}

\newtheorem{assumption}{Assumption}
\newtheorem{lemma}{Lemma}
\newtheorem{theorem}{Theorem}

\begin{document}
\let\WriteBookmarks\relax
\def\floatpagepagefraction{1}
\def\textpagefraction{.001}

\shorttitle{Continuous-Time Probabilistic Correctors for Spacecraft Trajectory Forecasting}
\shortauthors{M. B. Shahid et~al.}

\title[mode=title]{Continuous-Time Probabilistic Correctors for Uncertainty-Aware Physics-Based Spacecraft Trajectory Forecasting}

\author[1]{Muhammad Bilal Shahid}
\ead{belal@iastate.edu}
\credit{Conceptualization, Methodology, Software, Data curation, Writing -- original draft, Visualization}

\author[1]{Zhanhong Jiang}
\ead{zhjiang@iastate.edu}
\credit{Formal analysis, Writing -- review and editing}

\author[1]{Soumik Sarkar}
\ead{soumiks@iastate.edu}
\credit{Supervision, Writing -- review and editing}

\author[1]{Cody Fleming}
\cormark[1]
\ead{flemingc@iastate.edu}
\credit{Conceptualization, Supervision, Funding acquisition, Writing -- review and editing}

\affiliation[1]{organization={Iowa State University},
    city={Ames},
    state={Iowa},
    postcode={50011},
    country={USA}}

\cortext[cor1]{Corresponding author. Tel.\ 515-294-6938, Howe Hall, 537 Bissell Rd, Ames, IA 50011-1096, United States of America.}

\begin{abstract}
Long-horizon spacecraft trajectory forecasting suffers from error accumulation due to the absence of corrective observations in the forecast regime, making reliable uncertainty estimation crucial for safety-critical decision-making such as space domain awareness and conjunction assessment. While high-fidelity physics-based orbit propagators provide accurate deterministic forecasts, they typically lack calibrated uncertainty estimates over long horizons. We introduce a Predictor--Corrector framework in which a physics-based continuous-time \textit{deterministic} forecaster is augmented with a learned continuous-time \textit{probabilistic} Corrector that models forecast errors. The proposed Corrector can be wrapped around an existing deterministic propagator to improve forecast accuracy while producing sharp and calibrated full-covariance uncertainty estimates. The Corrector is based on Latent Neural Controlled Differential Equations (Latent NCDEs) and models the probabilistic temporal evolution of forecast errors in continuous time, naturally supporting irregular sampling and missing features. We further introduce a loss function that promotes calibration and sharpness in long-horizon uncertainty propagation. We evaluate the proposed framework on long-horizon spacecraft trajectory forecasting using real-world data from NASA's Crustal Dynamics Data Information System (CDDIS), wrapping the Corrector around NASA's General Mission Analysis Tool (GMAT). Across forecast horizons of 2--4 days without observations and six rolling test windows, the proposed approach consistently improves accuracy and uncertainty calibration compared to deterministic baselines and Latent ODE-based correctors, demonstrating the effectiveness of the continuous-time probabilistic Corrector for trajectory forecasting.
\end{abstract}

\begin{highlights}
\item A Predictor--Corrector framework wraps GMAT for uncertainty-aware orbit forecasting.
\item Latent Neural CDEs model forecast-error evolution in continuous time.
\item Student-$t$ likelihood with CRPS yields sharp, calibrated uncertainty.
\item Validated on real NASA CDDIS spacecraft data over 2--4 day open-loop horizons.
\item Improves accuracy and calibration over GMAT and Latent ODE baselines.
\end{highlights}

\begin{keywords}
long-horizon forecasting \sep spacecraft trajectory forecasting \sep uncertainty quantification \sep orbit determination \sep continuous-time correctors \sep neural controlled differential equations
\end{keywords}

\maketitle

\section{Introduction}\label{sec:intro}

Spacecraft trajectory forecasting underpins core space domain awareness (SDA) functions, including orbit determination, collision-risk assessment, and the planning of collision-avoidance maneuvers \cite{cara_handbook,aida2016conjunction}. These functions demand not only accurate long-horizon state predictions but also trustworthy uncertainty estimates, because downstream conjunction-risk decisions depend directly on the predicted error covariance \cite{kerr2021state,hill2012conjunction}. Although high-fidelity physics-based propagators, most notably NASA's General Mission Analysis Tool (GMAT) \cite{hughes2016general}, generate accurate deterministic trajectories, they do not natively produce calibrated uncertainty over the long, observation-free horizons typical of operational forecasting. Consequently, unmodeled dynamics, parametric mismatch, and numerical approximations accumulate over the forecast window, and the reported error covariance grows increasingly overconfident, understating the true uncertainty and the associated collision risk. This gap has motivated growing interest in machine-learning methods that endow physics-based propagators with reliable, well-calibrated uncertainty for SDA \cite{RR-A2318-1}.

To that end, this work introduces a Predictor--Corrector framework based on a newly proposed Latent Neural Controlled Differential Equation (Latent NCDE) architecture. In this framework, the Predictor is a physics-based GMAT dynamics model, and the Corrector is based on Latent NCDE. Latent NCDE is an encoder--decoder architecture in which both the encoder and the decoder are parametrized using NCDE. The encoder encodes the GMAT Predictor's past error history into a latent distribution, which serves as a prior for the decoder in modeling the probabilistic temporal evolution of GMAT's forecast errors. The goal of the Corrector is to augment the Predictor with a continuous-time, probabilistic representation of its forecast errors over time. The benefits of the proposed Corrector are two-fold. It corrects the GMAT Predictor's deterministic forecasts, improving accuracy while generating reliable uncertainty ellipsoids that capture predictive uncertainty (Fig.~\ref{fig:intro}). Applied to spacecraft trajectory forecasting, the resulting Predictor--Corrector framework produces more accurate and well-calibrated long-horizon forecasts.

\paragraph{Contributions} We:
(i) propose \textbf{Continuous-Time Probabilistic Correctors (CTPC)} that wrap deterministic physics simulators (e.g., GMAT) for uncertainty-aware long-horizon forecasting;
(ii) introduce CTPC-CDE++ based on a newly proposed architecture, \textbf{Latent NCDE}, as a promising CTPC for real-world datasets that supports irregularly sampled observations with missing features;
(iii) design a \textbf{loss function} combining Student-$t$ likelihood and CRPS to produce sharp, full-covariance, calibrated uncertainty over long horizons;
(iv) demonstrate consistent gains in \textbf{accuracy} and \textbf{calibration} of GMAT on \textbf{real-world spacecraft trajectories} across six rolling test windows and horizons up to 4 days, including robust error modeling across Earth-Centered Inertial (ECI) and Radial--Transverse--Normal (RTN) frames.

\par\medskip
\noindent\begin{minipage}{\linewidth}
  \centering
  \includegraphics[width=0.8\linewidth]{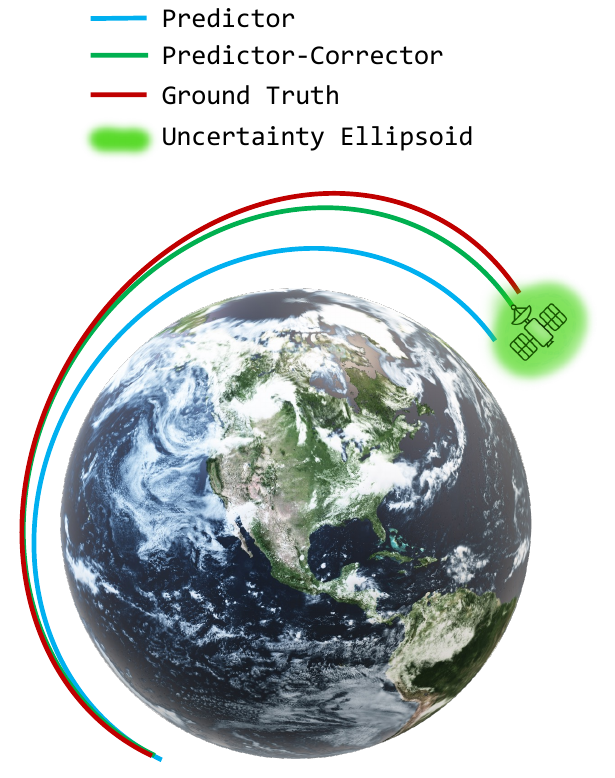}
  \captionof{figure}{Spacecraft trajectory forecasting with continuous-time probabilistic correction.}
  \label{fig:intro}
\end{minipage}
\par\medskip

\section{Related Work}\label{sec:related}

\paragraph{Predictor--Corrector framework}
We briefly review work related to Predictor--Corrector ideas in time-series modeling. In the forecasting literature, several approaches introduce auxiliary models that learn to predict and compensate for forecast errors, although they are not explicitly framed as Predictor--Corrector methods \citep{chen2022machine,slater2023hybrid,liu2025explainable,zhang2022hybrid}. Early examples include hybrid ARIMA--MLP models \citep{zhang2003time} and ARIMA--LSTM approaches \citep{xu2022application}, where a learned component corrects the errors of a statistical forecaster. More recent works formalize this idea through Predictor--Corrector frameworks, where a learned Corrector corrects the deterministic Predictor's forecasts. HopCast is a retrieval-based Corrector but is restricted to regularly sampled data, exhibits weak extrapolation performance, and cannot model full predictive covariance \citep{shahid2025hopcast}. NCDE-based \textit{deterministic} Correctors extend correction to irregularly sampled data, but provide no uncertainty quantification \citep{shahid2025neuralcdescorrectorslearned}. Our work unifies and extends these approaches by introducing continuous-time \textit{probabilistic} Correctors that both correct deterministic forecasts and produce full-covariance uncertainty ellipsoids, while naturally supporting irregular sampling and missing features.

\paragraph{Continuous-time models} Continuous-time sequence models are commonly formulated using Neural Ordinary Differential Equations (NODEs), which model hidden-state dynamics as ODEs and naturally handle irregular sampling \cite{chen2019neuralordinarydifferentialequations}. Variants such as GRU-ODE-Bayes and Latent ODEs combine NODEs with recurrent updates or latent-variable inference to better handle sporadic observations \cite{debrouwer2019gruodebayescontinuousmodelingsporadicallyobserved,rubanova2019latentodesirregularlysampledtime,lechner2020learninglongtermdependenciesirregularlysampled}, but remain limited by uncontrolled dynamics. NCDEs address this by explicitly conditioning hidden-state evolution on an input-driven control path, yielding a continuous-time analogue of recurrent neural networks (RNNs) with improved modeling of long-term dependencies and irregularly sampled data \cite{kidger2020neuralcontrolleddifferentialequations}. Extensions include variants for long sequences and streaming settings \cite{morrill2021neuralroughdifferentialequations,morrill2021neural}. Recently, Structured Linear CDEs (SLiCE) introduced linear CDE formulations inspired by classical and selective state-space models \cite{walker2025structuredlinearcdesmaximally,gu2024mamba,muca2024theoretical}, enabling parallelism and maximal expressivity. However, their linear structure was insufficient for capturing the nonlinear, trajectory-dependent error dynamics in our long-horizon spacecraft trajectory forecasting experiments, motivating our focus on nonlinear NCDE-based probabilistic correctors.

\paragraph{Uncertainty quantification for SDA} Reliable SDA requires uncertainty estimates that accurately characterize the prediction error, as collision-risk screening is governed directly by the predicted state covariance \cite{newman2019requirements}. The fidelity of this covariance is critical: an excessively diffuse covariance can attenuate the computed probability of collision and consequently lead to the underestimation of a genuine close-approach hazard \cite{hejduk2019dilution}. Classical estimators such as the Extended Kalman Filter propagate uncertainty under linearized dynamics and Gaussian assumptions, which frequently degrade over long, observation-free horizons \cite{mashiku2012statistical}. Probabilistic forecasting, by contrast, seeks to characterize the full predictive distribution and is assessed using strictly proper scoring rules such as the Continuous Ranked Probability Score (CRPS) \cite{gneiting2007strictly}, which quantify the discrepancy between the predicted dispersion and the observed error statistics. Recent machine-learning assessments for SDA report that uncertainty-aware models outperform deterministic neural baselines and provide more reliable support for operational decision-making \cite{RR-A2318-2}.

\section{Problem Description}\label{sec:problem}

We consider the problem of uncertainty-aware long-horizon (up to 4 days ahead) spacecraft trajectory forecasting in regimes where \textit{no observations} are available for correction during propagation.
We assume that a high-fidelity physics-based simulator, such as GMAT, propagates an initial observed state of a spacecraft in the ECI frame, $\mathbf{x}_{\text{ECI}}(t_0) \in \mathbb{R}^3$, producing deterministic forecasts of spacecraft states
$\{\mathbf{\hat{x}}_{\text{ECI}}(t_0),\ldots,\mathbf{\hat{x}}_{\text{ECI}}(t_{T^{\prime}}),$ $ \mathbf{\hat{x}}_{\text{ECI}}(t_{T^{\prime}+1}),\ldots,\mathbf{\hat{x}}_{\text{ECI}}(t_{T})\}$
over long horizons $[t_0,\ldots,t_{T^{\prime}},$ $t_{T^{\prime}+1},\ldots,t_T]$, where $\mathbf{\hat{x}}_{\text{ECI}}(t) \in \mathbb{R}^{3}$
\footnote{GMAT internally propagates a full Cartesian state; here, we denote the position component of the spacecraft state by $\hat{\mathbf{x}}_{\text{ECI}}(t)$.}
denotes the spacecraft state's forecast at time $t_i \in \mathbb{R}$ and $t_0 < \ldots < t_{T^{\prime}} < t_{T^{\prime}+1} < \ldots < t_{T}$.
Let $\mathbf{e}_{\text{ECI}}(t) = \mathbf{x}_{\text{ECI}}(t) - \hat{\mathbf{x}}_{\text{ECI}}(t)$ denote the error at time $t$.
We assume that the spacecraft states are observed over the time interval $[t_0,\ldots,t_{T^\prime}]$.
Therefore, the GMAT errors, $\mathbf{e}_{\text{ECI}}(t_{0:T^{\prime}})$ \footnote{We use the shorthand $\mathbf{u}(t_{a:b}) := \{\mathbf{u}(t_i)\}_{i=a}^{b}$ for any time-indexed quantity $\mathbf{u}(t)$.}, over this time interval are known.
The CTPC-CDE++ models the probabilistic evolution of the errors of GMAT over the forecast horizon $[t_{T^{\prime}+1},\ldots,t_T]$.
Specifically, the CTPC-CDE++ predicts errors, $\mathbf{\hat{e}}_{\text{ECI}}(t) \mid \mathbf{e}_{\text{ECI}}(t_{0:T^{\prime}}),\mathbf{\hat{x}}_{\text{ECI}}(t_{T^{\prime}+1:T}) \sim \mathcal{T}_{\nu}(\boldsymbol{\mu}_t,\boldsymbol{\Sigma}_t)$ for each forecast $\mathbf{\hat{x}}_{\text{ECI}}(t)$ from $t \in [t_{T^{\prime}+1},t_{T}]$, where $\mathcal{T}_{\nu}$ denotes the Student-$t$ distribution with $\nu$ degrees of freedom.
For a forecast $\hat{\mathbf{x}}_{\text{ECI}}(t) \in [t_{T^{\prime}+1},\ldots,t_T]$, the predicted mean $\boldsymbol{\mu}_t$ corrects, and the predicted covariance $\boldsymbol{\Sigma}_t$ defines the uncertainty ellipsoids as follows:

\begin{align}
    \hat{\mathbf{x}}_{\text{ECI}}^{\text{corr}}(t) &=
                                    \hat{\mathbf{x}}_{\text{ECI}}(t)
                                    + \boldsymbol{\mu}_t \\
    \mathcal{E}_{\alpha}(t) &=
                        \left\{
                        \mathbf{e}\in\mathbb{R}^3:
                        (\mathbf{e}-\boldsymbol{\mu}_t)^{\top}
                        \boldsymbol{\Sigma}_{t}^{-1}
                        (\mathbf{e}-\boldsymbol{\mu}_t)
                        \le
                        \chi^{2}_{3}(\alpha)
                        \right\}
\end{align}

The uncertainty ellipsoid $\mathcal{E}_{\alpha}(t)$ defines the uncertainty associated with the GMAT forecast $\hat{\mathbf{x}}_{\text{ECI}}(t)$ at an arbitrary confidence level $\alpha \in (0,1)$. The $\chi^{2}_{3}$ denotes the three-dimensional chi-squared distribution and $\hat{\mathbf{x}}^{\text{corr}}_{\text{ECI}}$ denotes the GMAT's corrected forecast.

\section{Methodology}\label{sec:method}

We introduce a Predictor--Corrector framework in which the Predictor is a continuous-time \textit{deterministic forecaster}, and the Corrector provides continuous-time \textit{probabilistic corrections} for uncertainty-aware long-horizon spacecraft trajectory forecasting, shown in Fig.~\ref{fig:prob_desc}. In this framework, a high-fidelity physics-based simulator, such as GMAT, serves as the Predictor and Latent NCDE as a Corrector.

\par\medskip
\noindent\begin{minipage}{\linewidth}
  \centering
  \includegraphics[width=0.9\linewidth]{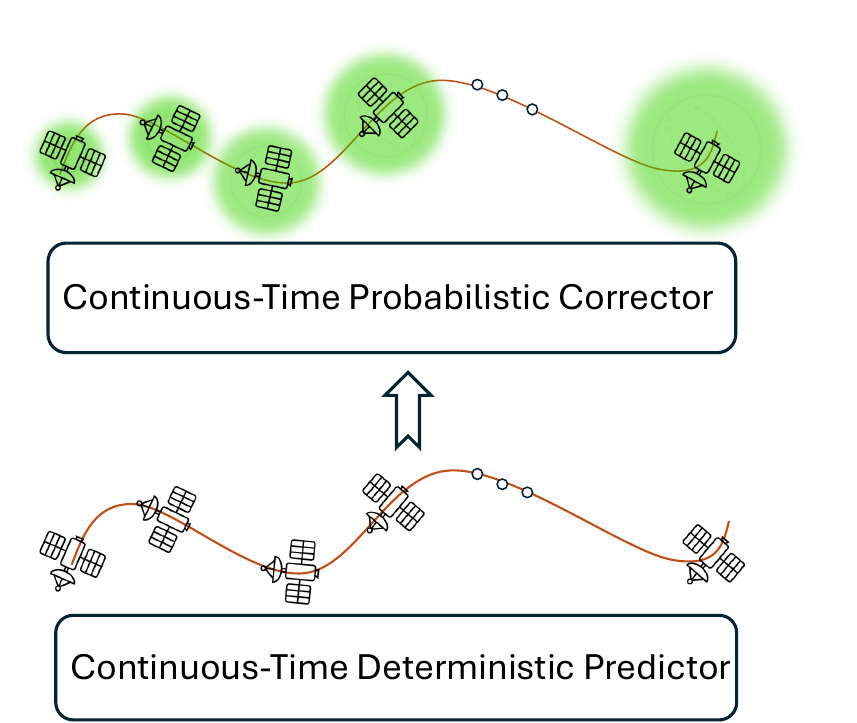}
  \captionof{figure}{The Predictor--Corrector framework. The green glow denotes the uncertainty ellipsoids around the Predictor's forecast.}
  \label{fig:prob_desc}
\end{minipage}
\par\medskip

We propose several variants of CTPC (see Table~\ref{tab:architectures}), and the most generic one, based on Latent NCDE, i.e., CTPC-CDE++, is shown in Fig.~\ref{fig:CTPC_CDE++}. The Latent NCDE is an encoder--decoder architecture wherein the encoder and decoder are based on NCDE. The NCDE is a continuous-time RNN, whose hidden state $\mathbf{z}(t)$ evolves continuously in time driven by the control path $\mathbf{X}(t)$. We explain the Latent NCDE's architecture, shown in Fig.~\ref{fig:CTPC_CDE++}, in the following section.

\begin{figure*}[t]
  \centering
  \includegraphics[width=0.95\textwidth]{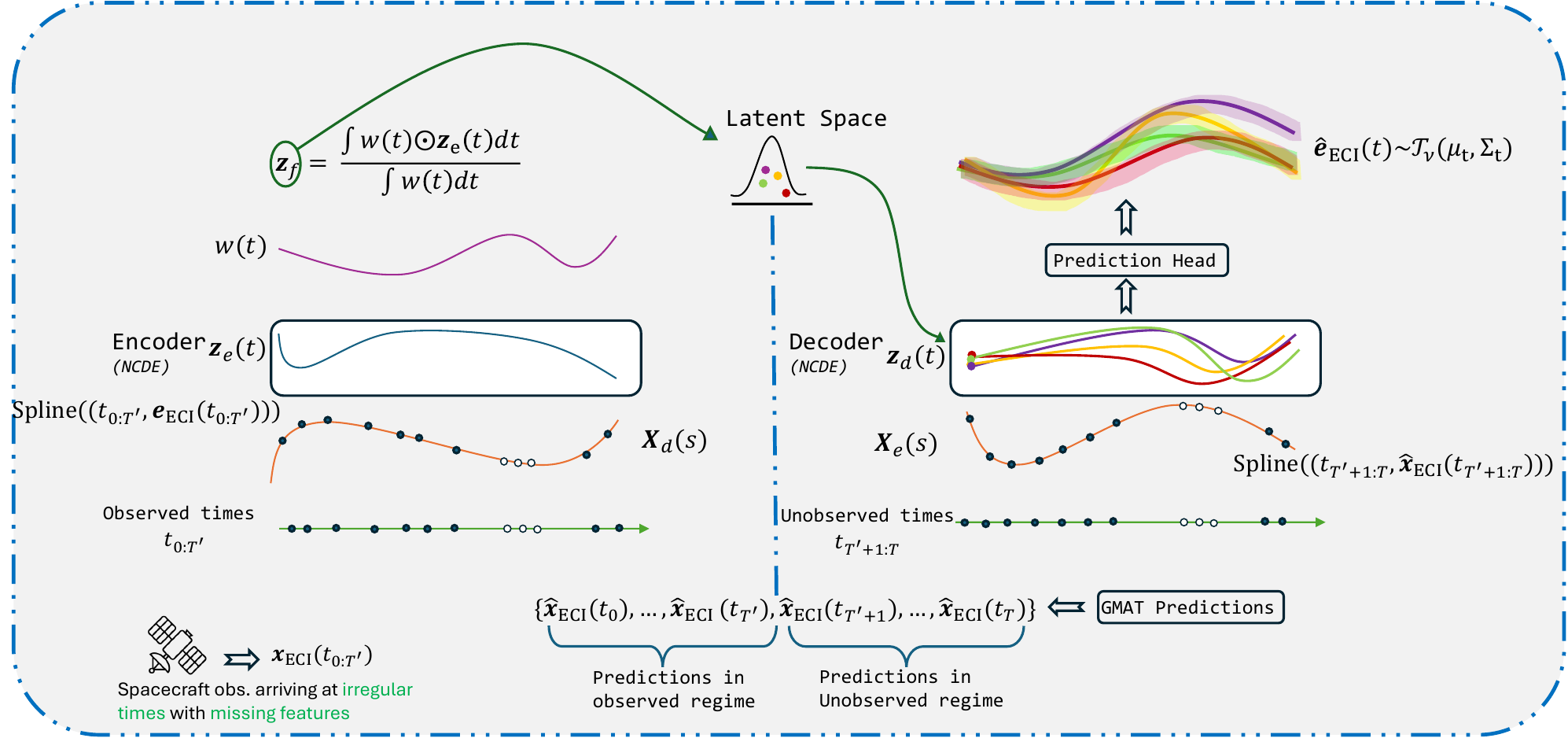}
  \caption{The illustration demonstrates the most general and flexible formulation of our proposed Corrector, i.e., CTPC-CDE++, based on Latent Neural Controlled Differential Equations (Latent NCDE). Though the predicted error $\hat{\mathbf{e}}_{\text{ECI}}(t)$ is shown in the ECI frame, CTPC-CDE++ can model the error equally well in the RTN frame.}
  \label{fig:CTPC_CDE++}
\end{figure*}

\subsection{CTPC-CDE++}\label{sec:CTPC_CDE++}
\paragraph{Encoder (NCDE).} The encoder, consisting of an NCDE, processes the past errors of the GMAT Predictor to learn a latent space distribution. Let $\mathbf{z}_{e}: [t_0,t_{{T}^{\prime}}] \rightarrow \mathbb{R}^{h_e}$, where $t_i \in \mathbb{R}$, denote the hidden states of the encoder. The control path can be written as $\mathbf{X}_{e}(t): [t_0,t_{{T}^{\prime}}] \rightarrow \mathbb{R}^{4}$. Mathematically, for $t \in (t_0,t_{T^{\prime}}]$,

\begin{align}
    \mathbf{z}_{e}(t) &= \mathbf{z}_{e}(t_0) + \int_{t_0}^{t} f_{\theta_{e}}(\mathbf{z}_{e}(s))\,d\mathbf{X}_{e}(s)
    \label{eq:ncde_encoder}
\end{align}

The integral is a Riemann--Stieltjes integral. The vector field $f_{\theta_e}: \mathbb{R}^{h_e} \rightarrow \mathbb{R}^{h_e \times 4}$ is parametrized by a multi-layer perceptron (MLP). The $\mathbf{z}_{e}(t_0) \in \mathbb{R}^{h_e}$ denotes the initial hidden state of the encoder. Based on $\mathbf{z}_{e}(t_0)$ and $\mathbf{X}_{e}(t)$, the encoder learns the evolution of the hidden state $\mathbf{z}_{e} \in (t_0,t_{T^{\prime}}]$. The normalized weighted combination of the $\mathbf{z}_{e}(t)$ gives

\begin{align}
    \mathbf{z}_{f} &= \frac{\int_{t_{0}}^{t_{T^{\prime}}} w(t) \odot \mathbf{z}_{e}(t) dt}{\int_{t_0}^{t_{T^{\prime}}} w(t)dt}
    \label{eq:trapezoidal}
\end{align}

The hidden state $\mathbf{z}_f$ is transformed into the latent space distribution, which is an $L$-dimensional diagonal Gaussian with the following parameters: ${\boldsymbol\mu}_L = (\mu_1,\ldots,\mu_L) \in \mathbb{R}^{L}$ and $\boldsymbol{\Sigma}_L = \mathrm{diag}(\sigma_1^{2},\ldots,\sigma_L^{2}) \in \mathbb{R}^{L \times L}$. The weights $w(t)$ are parametrized via an MLP, $f_{\theta_w}$, called the weight net.

\paragraph{Decoder (NCDE).} The decoder, consisting of an NCDE, processes the forecasts of the GMAT Predictor to predict their time-varying probabilistic corrections. Let $\mathbf{z}_{d}: [t_{T^{\prime}+1},t_{T}] \rightarrow \mathbb{R}^{h_d}$ represent the evolution of the hidden states of the decoder. Mathematically, for $t \in (t_{T^{\prime}+1},t_{T}]$,

\begin{align}
    \mathbf{z}_{d}(t) &= \mathbf{z}_{d}(t_{T^{\prime}+1}) + \int_{t_{T^{\prime}+1}}^{t} f_{\theta_{d}}(\mathbf{z}_{d}(s))\,d\mathbf{X}_{d}(s)
    \label{eq:ncde_decoder}
\end{align}

$\mathbf{X}_{d}(t): [t_{T^{\prime}+1},t_{T}] \rightarrow \mathbb{R}^{4}$ represents the decoder's control path. The vector field $f_{\theta_d}: \mathbb{R}^{h_d} \rightarrow \mathbb{R}^{h_d \times 4}$ is parametrized by an MLP. The initial hidden state $\mathbf{z}_{d}(t_{{T^{\prime}+1}}) \in \mathbb{R}^{h_d}$ is a linear transformation of $s_L$, where $s_L \sim \mathcal{N}(\boldsymbol{\mu}_{L}, \boldsymbol{\Sigma}_{L})$. By utilizing the samples $\mathbf{z}_{d}(t_{T^{\prime}+1})$ as initial hidden states driven by the control path $\mathbf{X}_d(t)$ consisting of deterministic forecasts, the decoder learns to generate diverse evolutions of hidden states $\mathbf{z}_d(t)$ over the forecast horizon $t\in(t_{T^{\prime}+1}, t_T]$, effectively capturing the uncertainty associated with the forecasts.

\paragraph{Prediction head.} At each forecast time $t\in(t_{T^{\prime}+1}, t_T]$, a probabilistic MLP ($f_{\theta_p}$) maps the decoder hidden state $\mathbf{z}_d(t)$ to the parameters of a multivariate Student-$t$ distribution over the error $\hat{\mathbf{e}}_{\text{ECI}}(t)\in\mathbb{R}^3$. The prediction head outputs a mean $\boldsymbol{\mu}_t\in\mathbb{R}^3$ and a lower-triangular matrix $\mathbf{L}_t\in\mathbb{R}^{3\times 3}$ with strictly positive diagonal entries, which parameterizes a full covariance matrix
\begin{equation}
\boldsymbol{\Sigma}_t = \mathbf{L}_t \mathbf{L}_t^{\top}, \qquad \hat{\mathbf{e}}_{\text{ECI}}(t) \mid \mathbf{z}_d(t) \sim \mathcal{T}_{\nu}\!\left(\boldsymbol{\mu}_t, \boldsymbol{\Sigma}_t\right)
\label{eq:student_t_head}
\end{equation}
where $\nu>0$ denotes the degrees of freedom. Let $\mathbf{r}_t = \mathbf{e}_{\text{ECI}}(t) - \boldsymbol{\mu}_t$ and define $\mathbf{y}_t$ via the triangular solve as

\begin{equation}
\mathbf{y}_t = \mathbf{L}_t^{-1}\mathbf{r}_t,
\qquad
\delta_t = \|\mathbf{y}_t\|_2^2
= \mathbf{r}_t^{\top}\boldsymbol{\Sigma}_t^{-1}\mathbf{r}_t,
\end{equation}
where $\delta_t$ is the squared Mahalanobis distance. The log-determinant is computed from the Cholesky factor as
\begin{equation}
\log|\boldsymbol{\Sigma}_t| = 2\sum_{i=1}^{3}\log\big((\mathbf{L}_t)_{ii}\big).
\end{equation}

\subsection{Training}\label{sec:training}

The training procedure is summarized in Algorithm~\ref{alg:train}. The loss consists of three terms, i.e., negative log-likelihood (NLL), continuous ranked probability score (CRPS), and KL regularization.

\paragraph{NLL.} To capture heavy-tailed error distributions \cite{mashiku2012statistical}, we employ the Student-$t$ NLL \cite{murphy2023probabilistic},
\begin{align}
\mathcal{L}_t ^{\text{NLL}}
&= -\log p(\mathbf{r}_t) \nonumber\\
&= \log\Gamma\!\left(\frac{\nu + D}{2}\right)
-\log\Gamma\!\left(\frac{\nu}{2}\right) \nonumber\\
&\quad +\frac{1}{2}\left(D\log(\nu\pi) + \log|\boldsymbol{\Sigma}_t|\right) \nonumber\\
&\quad +\frac{\nu + D}{2}\log\left(1 + \frac{\delta_t}{\nu}\right) \\
\colorbox{green!12}{$\mathcal{L} ^{\text{NLL}}$} &= \frac{1}{T-T^{\prime}-1}\sum_{t=T^{\prime}+1}^{T}(\mathcal{L}_t ^{\text{NLL}})
\label{eq:studentt_nll_chol}
\end{align}

Here, $\Gamma(\cdot)$ denotes the Gamma function and $D = 3$. We learn $\nu$ via a constrained parameter $\nu_{\mathrm{raw}}$ using $\nu = \nu_{\min} + \mathrm{softplus}(\nu_{\mathrm{raw}})$ with $\nu_{\min}=4.5>0$ to avoid degenerate heavy tails \cite{dugas2001incorporating}.

\paragraph{CRPS.} To encourage well-calibrated predictive distributions \cite{gneiting2007strictly}, we also minimize the CRPS as a reconstruction loss. Let $\{\hat{\mathbf{e}}_{\text{ECI}}^{k}(t_{T^{\prime}+1:T})\}_{k=1}^{K}$ denote Monte Carlo samples of predicted forecast errors generated using $K$ samples ($s_L$) from the latent space $\mathcal{N}(\boldsymbol{\mu}_L, \boldsymbol{\Sigma}_L)$, where $\hat{\mathbf{e}}_{\text{ECI}}^{k}(t_i) \in \mathbb{R}^{3}$.
The CRPS loss is defined as
\begin{equation}
\colorbox{blue!12}{$\mathcal{L}^{\mathrm{CRPS}}$}
=
\frac{1}{K}
\sum_{k=1}^{K}
\sum_{j=T^{\prime}+1}^{T}
\sum_{i=1}^{D=3}\!\left(\,\big|\hat{e}_{i,j}^{k} - e_{i,j}\big|
-
\frac{1}{2} \big|\hat{e}_{i,j}^{k} - \hat{e}_{i,j}^{k'}\big|
\right),
\end{equation}
where $\hat{e}_{i,j}^{k}$ and $\hat{e}_{i,j}^{k'}$ are any two independent samples from the latent space $\mathcal{N}(\boldsymbol{\mu}_L, \boldsymbol{\Sigma}_L)$. The $e_{i,j}$ denotes the ground-truth error of the $i^\text{th}$ dimension at the $j^\text{th}$ time.

\paragraph{KL regularization \cite{kingma2013auto}.} The encoder defines a diagonal Gaussian latent posterior $q(\mathbf{z}_0) = \mathcal{N}\!\left(\boldsymbol{\mu}_L,\boldsymbol{\Sigma}_L\right)$
with prior $p(\mathbf{z}_0)=\mathcal{N}(\mathbf{0},\mathbf{I})$.
The KL divergence is
\begin{equation}
\colorbox{red!12}{$\mathcal{L}^{\mathrm{KL}}$} =\frac{1}{2}\sum_{i=1}^{L}\left(\mu_{L,i}^2+\sigma_{L,i}^2-\log \sigma_{L,i}-1\right),
\end{equation}
where $\mu_{L,i}$ and $\sigma_{L,i}^2$ denote the mean and variance of the $i$-th latent dimension, $i=1,\dots,L$, respectively. The final training objective is
\begin{equation}
\mathcal{L}_{\mathrm{total}}
= \colorbox{green!12}{$\mathcal{L}^{\mathrm{NLL}}_t$}
+ \colorbox{blue!12}{$\mathcal{L}^{\mathrm{CRPS}}$}+
\colorbox{red!12}{$\mathcal{L}^{\mathrm{KL}}$}
\end{equation}

\subsection{Inference}\label{para:moment_aggr}
The inference procedure is summarized in Algorithm~\ref{alg:infer}. The moment aggregation details are deferred to Appendix~\ref{adx:app_inf}.

\paragraph{Coordinate-frame convention.} For the main paper, we report results for CTPCs trained and evaluated on the errors in the RTN frame (Table~\ref{tab:main_results}), while the observations and GMAT's forecasts \textit{always} remain in the ECI frame. This is standard for orbit-relative error analysis \cite{kerr2021state}. The conversion of errors between ECI and RTN frames is described in Appendix~\ref{adx:frame_conversion}.
To assess robustness to coordinate-frame choice, we additionally report results for CTPCs trained and evaluated on the errors in the ECI frame in Appendix~\ref{adx:eci_frame} (Table~\ref{tab:main_results_eci}). These results show that the NCDE-based CTPC variants (e.g., CTPC-CDE++) generalize reliably across coordinate frames (see Section~\ref{adx:corr_archi}).

\section{Theoretical Analysis}\label{sec:theory_analysis}
Define the forecast error $\mathbf{e}_\text{ECI}(t)=\mathbf{x}_\text{ECI}(t)-\hat{\mathbf{x}}_\text{ECI}(t)$. We assume access to observations over a warm-up interval $[t_0,t_{T'}]$, and open-loop forecasting over $(t_{T'},t_T]$. The Probabilistic Corrector models the conditional distribution of forecast errors in the forecast regime:
$
    \hat{\mathbf{e}}_\text{ECI}(t)\mid\mathcal{F}_{T'}\sim\mathcal{T}_\nu(\bm{\mu}_t,\Sigma_t), \;t\in(t_{T'},t_T]
$
where $\mathcal{F}_{T'}$ is the sigma-algebra generated by past errors and predictor forecasts, and $\bm{\mu}_t\in\mathbb R^D$ and $\Sigma_t\in\mathbb R^{D\times D}$ (with D=3 the spacecraft-position error dimension) are outputs of a Latent NCDE decoder. Therefore, the corrected forecast is
$
    \hat{\mathbf{x}}^{\text{corr}}_\text{ECI}(t)=\hat{\mathbf{x}}_\text{ECI}(t)+\bm{\mu}_t.
$

\begin{algorithm}[t]
\caption{Training with CTPC-CDE++}
\label{alg:train}
\begin{algorithmic}[1]
\Require 
$N$ trajectories of observed spacecraft states $\mathbf{x}_{\text{ECI}}(t_{0:T})$ and GMAT deterministic forecasts $\hat{\mathbf{x}}_{\text{ECI}}(t_{0:T})$
\Ensure 
Trained Corrector parameters $(\theta_e,\theta_w,\theta_d,\theta_p)$

\State Initialize encoder $f_{\theta_e}$, decoder $f_{\theta_d}$, weight net $f_{\theta_w}$, and prediction head $f_{\theta_p}$
\For{epoch $=1,\dots,E$}
    \State $\mathbf{e}_{\text{ECI}}(t_{0:T}) \leftarrow \mathbf{x}_{\text{ECI}}(t_{0:T}) - \hat{\mathbf{x}}_{\text{ECI}}(t_{0:T})$
    \State $\mathbf{X}_e(s) \leftarrow \mathtt{CubicSpline}\!\big((t_{0:T^\prime}, \mathbf{e}_{\text{ECI}}(t_{0:T^{\prime}}))\big)$
    \State $\mathbf{z}_e(t_0) \leftarrow \mathbf{0}_{\mathbb{R}^{h_e}}$
    \State $\mathbf{z}_e(t_{0:T'}) \leftarrow f_{\theta_e}\!\left(\mathbf{z}_e(t_0), \mathbf{X}_e(s)\right)$
    \State $w(t_{0:T'}) \leftarrow f_{\theta_w}\!\big(\mathbf{z}_e(t_{0:T'})\big)$
    \State $\mathbf{z}_f \leftarrow \mathtt{Trapezoidal}\!\left((w(t_{0:T'}), \mathbf{z}_e(t_{0:T'}))\right)$ \quad // Eq.~\ref{eq:trapezoidal}
    \State $(\boldsymbol{\mu}_L,\boldsymbol{\Sigma}_L) \leftarrow \mathbf{W}_{h_e \rightarrow L}\mathbf{z}_f$
    \For{$k=1,\dots,K$}
        \State Sample $s_L^{(k)} \sim \mathcal{N}(\boldsymbol{\mu}_L,\boldsymbol{\Sigma}_L)$
        \State $\mathbf{z}_d^{(k)}(t_{T'+1}) \leftarrow \mathbf{W}_{L \rightarrow h_d}s_L^{(k)}$
        \State $\mathbf{X}_d(s) \leftarrow \mathtt{CubicSpline}\!\big((t_{T'+1:T}, \hat{\mathbf{x}}_{\text{ECI}}(t_{T'+1:T}))\big)$
        \State $\mathbf{z}_d^{(k)}(t) \leftarrow f_{\theta_d}\!\left(\mathbf{z}_d^{(k)}(t_{T'+1}), \mathbf{X}_d(s)\right)$
        \State $(\boldsymbol{\mu}_t^{(k)},\mathbf{L}_t^{(k)}) \leftarrow f_{\theta_p}\!\big(\mathbf{z}_d^{(k)}(t)\big)$
    \EndFor
    \State $\mathcal{L}_{\mathrm{total}} \leftarrow \mathcal{L}^{\mathrm{NLL}} + \mathcal{L}^{\mathrm{CRPS}} + \mathcal{L}^{\mathrm{KL}}$
    \State Update $(\theta_e,\theta_w,\theta_d,\theta_p)$ via gradient descent
\EndFor
\State \textbf{return} $(\theta_e,\theta_w,\theta_d,\theta_p)$
\end{algorithmic}
\end{algorithm}

\begin{algorithm}[t]
\caption{Inference with CTPC-CDE++}
\label{alg:infer}
\begin{algorithmic}[1]
\Require 
Observed spacecraft states for warm-up $\mathbf{x}_{\text{ECI}}(t_{0:T'})$, GMAT deterministic forecasts $\hat{\mathbf{x}}_{\text{ECI}}(t_{0:T})$
\Ensure 
Corrected forecasts $\hat{\mathbf{x}}_{\text{ECI}}^{\text{corr}}(t_{T'+1:T})$ and uncertainty ellipsoids $\mathcal{E}_{\alpha}(t_{T'+1:T})$

\State $\mathbf{e}_{\text{ECI}}(t_{0:T'}) \leftarrow \mathbf{x}_{\text{ECI}}(t_{0:T'}) - \hat{\mathbf{x}}_{\text{ECI}}(t_{0:T'})$
\State $\mathbf{X}_e(s) \leftarrow \mathtt{CubicSpline}\!\big((t_{0:T'}, \mathbf{e}_{\text{ECI}}(t_{0:T'}))\big)$
\State $\mathbf{z}_e(t_0) \leftarrow \mathbf{0}_{\mathbb{R}^{h_e}}$
\State $\mathbf{z}_e(t_{0:T'}) \leftarrow f_{\theta_e}\!\left(\mathbf{z}_e(t_0), \mathbf{X}_e(s)\right)$
\State $w(t_{0:T'}) \leftarrow f_{\theta_w}\!\big(\mathbf{z}_e(t_{0:T'})\big)$
\State $\mathbf{z}_f \leftarrow \mathtt{Trapezoidal}\!\left((w(t_{0:T'}), \mathbf{z}_e(t_{0:T'}))\right)$
\State $(\boldsymbol{\mu}_L,\boldsymbol{\Sigma}_L) \leftarrow \mathbf{W}_{h_e \rightarrow L}\mathbf{z}_f$

\For{$k=1,\dots,K$}
    \State Sample $s_L^{(k)} \sim \mathcal{N}(\boldsymbol{\mu}_L,\boldsymbol{\Sigma}_L)$
    \State $\mathbf{z}_d^{(k)}(t_{T'+1}) \leftarrow \mathbf{W}_{L\rightarrow h_d}s_L^{(k)}$
    \State $\mathbf{X}_d(s) \leftarrow \mathtt{CubicSpline}\!\big((t_{T'+1:T}, \hat{\mathbf{x}}_{\text{ECI}}(t_{T'+1:T}))\big)$
    \State $\mathbf{z}_d^{(k)}(t) \leftarrow f_{\theta_d}\!\left(\mathbf{z}_d^{(k)}(t_{T'+1}), \mathbf{X}_d(s)\right)$
    \State $(\boldsymbol{\mu}_{t}^{(k)},\mathbf{L}_{t}^{(k)}) \leftarrow f_{\theta_p}\!\big(\mathbf{z}_d^{(k)}(t)\big)$
\EndFor

\State $\{\bar{\boldsymbol{\mu}}_t,\bar{\boldsymbol{\Sigma}}_t\}_{t=T'+1}^{T}$ from $\{(\boldsymbol{\mu}_t^{(k)},\mathbf{L}_t^{(k)})\}_{k=1}^{K}$ \quad // Appendix~\ref{adx:app_inf}

\State $\hat{\mathbf{x}}_{\text{ECI}}^{\text{corr}}(t) \leftarrow \hat{\mathbf{x}}_{\text{ECI}}(t) + \bar{\boldsymbol{\mu}}_t$ \& $\mathcal{E}_{\alpha}(t)$ from $(\bar{\boldsymbol{\mu}}_t,\bar{\boldsymbol{\Sigma}}_t)$ 

\State \textbf{return} $\hat{\mathbf{x}}^{\text{corr}}_{\text{ECI}}(t_{T'+1:T})$, $\mathcal{E}_{\alpha}(t_{T'+1:T})$
\end{algorithmic}
\end{algorithm}

As shown in Eq.~\ref{eq:ncde_encoder}, the final encoder state $\mathbf{z}_e(t_{T'})$ parameterizes a latent distribution for the decoder initialization $\mathbf{z}_d(t_{T'+1})$.
Likewise, the decoder evolves latent states forward over $(t_{T'},t_T]$ in Eq.~\ref{eq:ncde_decoder}.
The outputs are $\bm{\mu}_t$ and $\Sigma_t$ as shown in Eq.~\ref{eq:student_t_head}.
Given these notations and setup, we are now ready to state assumptions to characterize the analysis. $\|\cdot\|$ is the vector norm or matrix norm.
\begin{assumption}\label{assumption_4}
    The decoder vector field $f_{\theta_d}$ satisfies
    \[
    \|f_{\theta_d}(\mathbf{z}_d)-f_{\theta_d}(\mathbf{z}'_d)\|\leq L_d\|\mathbf{z}_d-\mathbf{z}'_d\|,
    \]
    for all $\mathbf{z}_d, \mathbf{z}'_d$, with $L_d>0$.
\end{assumption}
This assumption ensures well-posedness of controlled differential equations and can be enforced via spectral normalization.
\begin{assumption}\label{assumption_5}
    The control paths $\mathbf{X}_e$ and $\mathbf{X}_d$ driving the NCDE encoder and decoder have bounded total variation: $\|\mathbf{X}_e\|_{TV}<\infty$ and $\|\mathbf{X}_d\|_{TV}<\infty$.
\end{assumption}
As the predictor trajectories are spline-interpolated, they satisfy this condition automatically. Next, we also impose another assumption on the prediction head.
\begin{assumption}\label{assumption_6}
    There exist constants $L_\mu,L_\Sigma>0$ such that $\|\bm{\mu}(\mathbf{z}_d)-\bm{\mu}(\mathbf{z}'_d)\|\leq L_\mu\|\mathbf{z}_d-\mathbf{z}'_d\|$ and $\|\Sigma(\mathbf{z}_d)-\Sigma(\mathbf{z}'_d)\|\leq L_\Sigma\|\mathbf{z}_d-\mathbf{z}'_d\|$.
\end{assumption}
The above assumption plays the role of a regularity condition, particularly in a standard neural network, which can be enforced by spectral normalization. The final assumption to be made is the bounded latent distribution for the decoder hidden state initialization.
\begin{assumption}\label{assumption_7}
    The latent distribution induced by the encoder has a bounded second moment such that $\|\mathbf{z}_d(t_{T'+1})\|^2\leq M_0<\infty$.
\end{assumption}
Under these assumptions, we establish theoretical guarantees for CTPC. We first analyze decoder stability and long-horizon covariance growth, showing that the open-loop behavior is governed entirely by the decoder.
\begin{lemma}\label{lemma_3}
    Let all assumptions hold. The decoder states satisfy the following relationship for all $t\in(t_{T'},t_T]$,
    \begin{equation}
        \mathbb E\|\mathbf{z}_d(t)\|^2\leq M_0\exp(2L_d\|\mathbf{X}_d\|_{TV;[t_{T'},t]}).
    \end{equation}
\end{lemma}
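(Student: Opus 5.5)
The plan is a pathwise Grönwall argument for the controlled differential equation in Eq.~\ref{eq:ncde_decoder}, with the Riemann--Stieltjes measure $d\mathbf{X}_d$ replaced by its total-variation measure. Squaring and taking expectations over the latent initialization then gives the moment bound. Fix a realization of the latent sample $s_L$, and hence of $\mathbf{z}_d(t_{T'+1})$. Write $V(t):=\|\mathbf{X}_d\|_{TV;[t_{T'+1},t]}$. Since $\mathbf{X}_d$ is a cubic spline, $V$ is continuous and nondecreasing, and Assumption~\ref{assumption_5} makes it finite. Also $V(t)\le \|\mathbf{X}_d\|_{TV;[t_{T'},t]}$, because enlarging the interval can only increase the variation. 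It therefore suffices to prove the bound with $V(t)$ in the exponent.

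\textbf{Step 1 (linear growth of the vector field).} From Assumption~\ref{assumption_4}, $\|f_{\theta_d}(\mathbf{z})\|\le \|f_{\theta_d}(\mathbf{0})\|+L_d\|\mathbf{z}\|$. The stated bound has no additive term, so I will read the assumption as also giving $f_{\theta_d}(\mathbf{0})=\mathbf{0}$, i.e.\ $\|f_{\theta_d}(\mathbf{z})\|\le L_d\|\mathbf{z}\|$. This holds, for example, for a bias-free MLP with activations vanishing at the origin, and I will state it explicitly as part of the setup. Otherwise the conclusion acquires an extra term of order $\|f_{\theta_d}(\mathbf{0})\|\,V(t)\,e^{L_dV(t)}$ and must be stated with that term.

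\textbf{Step 2 (integral inequality).} Take norms in Eq.~\ref{eq:ncde_decoder} and use the standard estimate $\bigl\|\int g\,d\mathbf{X}\bigr\|\le\int\|g\|\,dV$ for Riemann--Stieltjes integrals against paths of bounded variation. This gives
\[
\|\mathbf{z}_d(t)\|\le \|\mathbf{z}_d(t_{T'+1})\|+L_d\int_{t_{T'+1}}^{t}\|\mathbf{z}_d(s)\|\,dV(s).
\]

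\textbf{Step 3 (Stieltjes Grönwall).} Apply Grönwall's inequality with respect to the continuous nondecreasing measure $dV$. One route is to substitute the time change $u=V(s)$. Another is to differentiate $e^{-L_dV(t)}\int_{t_{T'+1}}^{t}\|\mathbf{z}_d\|\,dV$, which works because $V$ is continuous. Either way we obtain $\|\mathbf{z}_d(t)\|\le\|\mathbf{z}_d(t_{T'+1})\|\,e^{L_dV(t)}$. I expect this step to be the main technical point. Continuity of $V$ is what guarantees the clean exponential; a path with jumps would instead give a product $\prod(1+L_d\Delta V)$. The spline interpolation of the control path removes this issue.

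\textbf{Step 4 (moments).} Square the pathwise bound to get $\|\mathbf{z}_d(t)\|^2\le\|\mathbf{z}_d(t_{T'+1})\|^2e^{2L_dV(t)}$. The factor $e^{2L_dV(t)}$ is deterministic, because $\mathbf{X}_d$ is built from the GMAT forecasts and does not depend on $s_L$. Taking expectation over $s_L$ and using Assumption~\ref{assumption_7}, in the form $\mathbb E\|\mathbf{z}_d(t_{T'+1})\|^2\le M_0$, yields
\[
\mathbb E\|\mathbf{z}_d(t)\|^2\le M_0\exp\bigl(2L_d\|\mathbf{X}_d\|_{TV;[t_{T'},t]}\bigr).
\]
For $t\in(t_{T'},t_{T'+1}]$, where the decoder has not yet started, the bound holds trivially from the initial condition.
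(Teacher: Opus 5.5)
Your proposal is correct and follows the paper's proof: take norms in the decoder NCDE to get $\|\mathbf{z}_d(t)\|\le\|\mathbf{z}_d(t_{T'+1})\|+L_d\int\|\mathbf{z}_d(s)\|\,d\|\mathbf{X}_d(s)\|$, apply Gr\"onwall pathwise, then square, take expectations, and invoke Assumption~\ref{assumption_7}. Your explicit hypothesis $f_{\theta_d}(\mathbf{0})=\mathbf{0}$ is exactly what the paper's norm step uses without saying so; Assumption~\ref{assumption_4} alone does not give it, and without it the bound fails (e.g.\ for a constant vector field with zero initial state), so adding it is a needed correction rather than a deviation.
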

\begin{theorem}\label{theorem_1}
    Let $\mathbf{r}_t:=\mathbf{e}_\text{ECI}(t)-\bm{\mu}_t$.
    Under all assumptions, minimizing the population loss yields
    $
        \mathbb E[\mathbf{r}_t^\top\Sigma_t^{-1}\mathbf{r}_t]=D,\;\forall t\in(t_{T'},t_T].
    $
\end{theorem}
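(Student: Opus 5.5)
The plan is to combine a first-order (stationarity) analysis of the population loss with the trace identity $\mathbb E[\mathbf r_t^\top\Sigma_t^{-1}\mathbf r_t]=\operatorname{tr}\big(\Sigma_t^{-1}\mathbb E[\mathbf r_t\mathbf r_t^\top]\big)$. The whole argument therefore reduces to showing that, at a population minimizer, $\mathbb E[\mathbf r_t\mathbf r_t^\top\mid\mathcal F_{T'}]=\Sigma_t$. The identity then gives $\operatorname{tr}(I_D)=D$ conditionally, and the tower property removes the conditioning.

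\textbf{Step 1 (well-posedness and integrability).} I first check that every quantity in the population loss is finite. By Assumption~\ref{assumption_7} and Lemma~\ref{lemma_3}, $\mathbb E\|\mathbf z_d(t)\|^2$ is finite uniformly on $(t_{T'},t_T]$. Assumption~\ref{assumption_6} then gives linear growth of $\bm\mu_t$ and $\Sigma_t$ in $\mathbf z_d(t)$, so both have finite second moments. This justifies exchanging derivatives with respect to $(\bm\mu_t,\Sigma_t)$ with expectations (dominated convergence). The KL term does not depend on $(\bm\mu_t,\Sigma_t)$ once the latent law is fixed. I will argue the CRPS term is a strictly proper scoring rule, so it shares the same minimizer when the model class is rich enough. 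The problem thus reduces to the NLL, minimized pointwise in $t$ and conditionally on $\mathcal F_{T'}$.

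\textbf{Step 2 (stationarity in $\bm\mu_t$ and $\Sigma_t$).} Write $w_t=(\nu+D)/(\nu+\delta_t)$. Differentiating $\mathcal L^{\mathrm{NLL}}_t$ in $\bm\mu_t$ gives $\mathbb E[w_t\Sigma_t^{-1}\mathbf r_t\mid\mathcal F_{T'}]=0$. Differentiating in $\Sigma_t^{-1}$, using $\partial\log|\Sigma|/\partial\Sigma^{-1}=-\Sigma$, gives
\begin{equation*}
\Sigma_t=\mathbb E\!\left[w_t\,\mathbf r_t\mathbf r_t^\top\mid\mathcal F_{T'}\right].
\end{equation*}
Multiplying by $\Sigma_t^{-1}$ and taking the trace yields the exact identity $\mathbb E[w_t\delta_t]=D$.

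\textbf{Step 3 (passing from the weighted identity to $\mathbb E[\delta_t]=D$).} This is the main obstacle, because $w_t\neq1$ in general. The first route I would try is a well-specified argument. The Student-$t$ log score is strictly proper, so under realizability the population minimizer recovers the true conditional law of $\mathbf e_{\text{ECI}}(t)$ given $\mathcal F_{T'}$. With $\Sigma_t$ read as the predictive covariance (i.e., the scale matrix rescaled by $\nu/(\nu-2)$, which is finite since $\nu>\nu_{\min}=4.5$), this gives $\mathbb E[\mathbf r_t\mathbf r_t^\top\mid\mathcal F_{T'}]=\Sigma_t$ directly, and the trace identity closes the proof. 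As a fallback, I would note that $w_t\to1$ as $\nu\to\infty$ (the Gaussian limit), where Step 2 gives the claim directly. I would also bound the gap $|\mathbb E\delta_t-D|$ by $O(\mathbb E[\delta_t^2]/\nu)$ using a Taylor expansion of $w_t$ around $\delta_t=D$. I expect the final write-up to state the covariance interpretation of $\Sigma_t$ explicitly, since this is exactly where the identity could otherwise fail.
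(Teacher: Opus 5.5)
Your decisive step is the paper's own. Its proof uses Lemma~\ref{lemma_4} plus realizability to get $\bm\mu_t=\mathbb E[\mathbf e_\text{ECI}(t)\mid\mathcal F_{T'}]$ and $\Sigma_t=\mathrm{Cov}[\mathbf e_\text{ECI}(t)\mid\mathcal F_{T'}]$. It then applies exactly your trace identity, $\mathrm{tr}(\Sigma_t^{-1}\mathbb E[\mathbf r_t\mathbf r_t^\top])=\mathrm{tr}(I_D)=D$. Your additions are useful, though.

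Your Step~2 computation is correct. It shows that the first-order conditions of the Student-$t$ NLL alone give only $\mathbb E[w_t\delta_t]=D$ for the scale matrix, so the conclusion rests entirely on realizability and on how $\Sigma_t$ is interpreted.

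Your covariance reading of $\Sigma_t$ is genuinely needed. With $\Sigma_t=\mathbf L_t\mathbf L_t^\top$ taken as the scale matrix of $\mathcal T_\nu$ (Eq.~\ref{eq:student_t_head}), a correctly specified model gives $\mathbb E[\mathbf r_t^\top\Sigma_t^{-1}\mathbf r_t]=\frac{\nu}{\nu-2}D$. That is exactly the value the paper itself normalizes by in Section~\ref{sec:metrics}. The paper's proof makes this rescaling silently by writing $\Sigma_t=\mathrm{Cov}[\cdot]$; you state it, and yours is the accurate form of the theorem.

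Otherwise, Steps~1--2 are not used by your final route. The $\nu\to\infty$ and Taylor fallback gives only an approximation, not the identity, so drop it.

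The step that remains unproved, in your proposal and equally in the paper's Lemma~\ref{lemma_4}, is reducing $\mathcal L^{\mathrm{NLL}}+\mathcal L^{\mathrm{CRPS}}+\mathcal L^{\mathrm{KL}}$ to the NLL minimizer. Two things block it:
\begin{itemize}
\item The CRPS term is evaluated on latent-sampled predictions, so it scores the latent-induced distribution, not the per-sample Student-$t$ scored by the NLL. Strict properness of each does not by itself give a common minimizer.
\item Your remark that ``the KL term does not depend on $(\bm\mu_t,\Sigma_t)$ once the latent law is fixed'' does not settle the joint minimization. Past-error information reaches the decoder only through $s_L\sim\mathcal N(\bm\mu_L,\bm\Sigma_L)$. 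Reproducing the true conditional law therefore needs an informative latent, which has positive KL to $\mathcal N(\mathbf 0,\mathbf I)$, so the population minimizer trades likelihood against that penalty.
\end{itemize}
The paper's assertion that the KL term is itself a strictly proper scoring rule does not repair this. To make the argument rigorous, state as an explicit hypothesis that the population minimizer of the total loss outputs the true conditional mean and covariance. With that hypothesis and your covariance convention, your proof is complete.
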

Lemma~\ref{lemma_3} shows that under Lipschitz-continuous decoder dynamics and bounded-variation control paths, latent trajectories remain well-posed and grow at most exponentially with the forecast horizon. Since the predictive mean and covariance are smooth functions of the latent state, this ensures controlled propagation of uncertainty and prevents numerical divergence in long-horizon prediction.
Building on this stability, Theorem~\ref{theorem_1} establishes asymptotic calibration consistency. In the population limit, the predictive distribution produced by the Corrector matches the true conditional error distribution, and the normalized Mahalanobis distance converges to one. This follows from the use of strictly proper scoring rules, which ensure that the true conditional mean and covariance uniquely minimize the training objective, yielding uniformly calibrated uncertainty over the forecast horizon.

\begin{theorem}\label{theorem_2}
    Let $\tilde{\mathbf{e}}_\text{ECI}(t)=\mathbf{x}_\text{ECI}(t)-\hat{\mathbf{x}}^\text{corr}_\text{ECI}(t)$. Then,
    \[
        \mathbb E\|\tilde{\mathbf{e}}_\text{ECI}(t)\|^2=\mathbb E\|\mathbf{e}_\text{ECI}(t)\|^2 - \mathbb E\|\bm{\mu}_t\|^2.
    \]
\end{theorem}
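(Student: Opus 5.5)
Since $\hat{\mathbf{x}}^{\text{corr}}_\text{ECI}(t)=\hat{\mathbf{x}}_\text{ECI}(t)+\bm{\mu}_t$, the corrected error is simply $\tilde{\mathbf{e}}_\text{ECI}(t)=\mathbf{e}_\text{ECI}(t)-\bm{\mu}_t=\mathbf{r}_t$. The plan is therefore a bias--variance (Pythagorean) decomposition in $L^2$. First we expand
\[
\mathbb E\|\tilde{\mathbf{e}}_\text{ECI}(t)\|^2=\mathbb E\|\mathbf{e}_\text{ECI}(t)\|^2-2\,\mathbb E[\mathbf{e}_\text{ECI}(t)^\top\bm{\mu}_t]+\mathbb E\|\bm{\mu}_t\|^2 .
\]
The identity then reduces to showing that $\mathbb E[\mathbf{e}_\text{ECI}(t)^\top\bm{\mu}_t]=\mathbb E\|\bm{\mu}_t\|^2$, that is, $\mathbb E[\mathbf{r}_t^\top\bm{\mu}_t]=0$.

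The key step is to identify $\bm{\mu}_t$ with the conditional mean $\mathbb E[\mathbf{e}_\text{ECI}(t)\mid\mathcal{F}_{T'}]$. For this we invoke the same population-optimality reasoning used for Theorem~\ref{theorem_1}. The training objective combines the Student-$t$ NLL with the CRPS, and both are strictly proper. In the population limit the minimizer therefore reproduces the true conditional error law, whose location parameter (with $\nu\ge\nu_{\min}=4.5>1$, so the mean exists) equals the conditional mean. We also need $\bm{\mu}_t$ to be $\mathcal{F}_{T'}$-measurable. This holds because $\bm{\mu}_t$ is a deterministic function of the encoder output and the decoder control path $\mathbf{X}_d$, and both are built from past errors and predictor forecasts. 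If the Monte Carlo latent sample is taken into account, we use the aggregated mean $\bar{\bm{\mu}}_t$ instead, which integrates the latent sample out.

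With this in hand, the tower property gives
\[
\mathbb E[\mathbf{r}_t^\top\bm{\mu}_t]=\mathbb E\bigl[\mathbb E[\mathbf{e}_\text{ECI}(t)-\bm{\mu}_t\mid\mathcal{F}_{T'}]^\top\bm{\mu}_t\bigr]=0 .
\]
Substituting this into the expansion yields the claim. For integrability, we need $\mathbb E\|\bm{\mu}_t\|^2<\infty$. This follows from Assumption~\ref{assumption_6}, which gives $\|\bm{\mu}(\mathbf{z}_d)\|\le\|\bm{\mu}(0)\|+L_\mu\|\mathbf{z}_d\|$, combined with Lemma~\ref{lemma_3}. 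We also assume that $\mathbb E\|\mathbf{e}_\text{ECI}(t)\|^2<\infty$.

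The main obstacle is the orthogonality step. The identity is false for an arbitrary $\bm{\mu}_t$, so it must be made explicit that the theorem holds at the population minimizer, where $\bm{\mu}_t$ is the conditional mean. The step needing care is the claim that the minimizer of the combined objective (NLL $+$ CRPS $+$ KL) has exactly the conditional mean as its location. My approach would be to argue that, under a well-specified model class, the true conditional distribution simultaneously minimizes each proper-score term. Then the mean, as a functional of the fitted distribution, coincides with the true conditional mean. Finally, I would note that this also recovers the interpretation of $\mathbb E\|\bm{\mu}_t\|^2$ as the reduction in mean-squared error achieved by the Corrector.
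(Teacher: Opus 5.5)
Your proposal is correct and is essentially the paper's proof: expand $\mathbb E\|\mathbf{e}_{\text{ECI}}(t)-\bm{\mu}_t\|^2$ and eliminate the cross term using $\bm{\mu}_t=\mathbb E[\mathbf{e}_{\text{ECI}}(t)\mid\mathcal{F}_{T'}]$, which the paper likewise takes from Lemma~\ref{lemma_4} plus realizability, exactly as in Theorem~\ref{theorem_1}. Your tower-property step is cleaner than the paper's, which writes $\|\bm{\mu}_t\|^2$ and $\bm{\mu}_t^\top\bm{\mu}_t$ as though $\bm{\mu}_t$ were deterministic; however, your aside about $\bar{\bm{\mu}}_t$ holds only for the exact latent expectation, since a finite-$K$ Monte Carlo average is not the conditional mean and adds an extra $2\,\mathbb E\|\bar{\bm{\mu}}_t-\mathbb E[\mathbf{e}_{\text{ECI}}(t)\mid\mathcal{F}_{T'}]\|^2$ to the identity.
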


Theorem~\ref{theorem_2} implies that the Corrector strictly reduces expected forecast error unless $\bm{\mu}_t=0$, which theoretically justifies our proposed framework. We next present a result to show the long-horizon covariance growth of our proposed framework.
\begin{theorem}\label{theorem_3}
    Let all assumptions hold.
    There exist explicit constants $K_1:=L_\Sigma^2M_0$ and $K_2:=L^2_\Sigma\sigma^2_{\mathbf{z}_d}$, where $\sigma^2_{\mathbf{z}_d}>0$, such that
    \begin{equation}
        \|\Sigma_t\|\leq K_1e^{2L_d\|\mathbf{X}_d\|_{TV;[t_{T'},t]}}+K_2\int_{t_{T'}}^te^{2L_d\|\mathbf{X}_d\|_{TV;[s,t]}}ds.
    \end{equation}
\end{theorem}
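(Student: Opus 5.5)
Theorem~\ref{theorem_3} follows by transferring a bound on the decoder state to $\Sigma_t$ through the Lipschitz prediction head. The form of the bound (a term from the initial condition plus an integrated ``source'' term weighted by $e^{2L_d\|\mathbf{X}_d\|_{TV;[s,t]}}$) suggests a Duhamel/variation-of-constants argument. We therefore read $\sigma^2_{\mathbf{z}_d}$ as the rate at which the latent sampling contributes second-moment mass to the decoder state, and keep it as a free positive constant.

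\textbf{Step 1 (reduce to the latent state).} Take $\mathbf{z}'_d=\mathbf{0}$ in Assumption~\ref{assumption_6} and absorb $\Sigma(\mathbf{0})$. Then
\[
\|\Sigma_t\|\le \|\Sigma(\mathbf{0})\|+L_\Sigma\|\mathbf{z}_d(t)\|.
\]
The stated constants carry no additive $\Sigma(\mathbf{0})$ term, so we will need $\Sigma(\mathbf{0})$ to be negligible or folded in. The cleanest route is to bound the squared quantity, using $\|\Sigma_t\|\le\|\Sigma_t\|^2$ once normalised, or to work with $\mathbb{E}\|\Sigma_t-\Sigma(\mathbf{0})\|^2\le L_\Sigma^2\,\mathbb{E}\|\mathbf{z}_d(t)\|^2$. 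This makes both $K_1$ and $K_2$ carry the factor $L_\Sigma^2$. So the task reduces to proving
\[
\mathbb{E}\|\mathbf{z}_d(t)\|^2\le M_0e^{2L_d V(t_{T'},t)}+\sigma^2_{\mathbf{z}_d}\int_{t_{T'}}^t e^{2L_d V(s,t)}\,ds,
\qquad V(s,t):=\|\mathbf{X}_d\|_{TV;[s,t]}.
\]

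\textbf{Step 2 (energy inequality with a source).} We reuse the proof mechanism of Lemma~\ref{lemma_3}. Let $\phi(t)=\mathbb{E}\|\mathbf{z}_d(t)\|^2$. Differentiating along the CDE in the Riemann--Stieltjes sense gives
\[
d\phi\le 2\,\mathbb{E}\langle \mathbf{z}_d,f_{\theta_d}(\mathbf{z}_d)\,d\mathbf{X}_d\rangle .
\]
Lipschitzness (Assumption~\ref{assumption_4}) bounds $\|f_{\theta_d}(\mathbf{z})\|\le L_d\|\mathbf{z}\|+\|f_{\theta_d}(\mathbf{0})\|$. By Young's inequality the cross term is at most $\|\mathbf{z}\|^2$ times the variation plus a constant contribution. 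This yields the linear Stieltjes differential inequality
\[
d\phi(t)\le 2L_d\,\phi(t)\,dV(t_{T'},t)+\sigma^2_{\mathbf{z}_d}\,dt .
\]
Here $\sigma^2_{\mathbf{z}_d}$ collects $\|f_{\theta_d}(\mathbf{0})\|^2$ together with the variance injected by the latent sample (the stochastic input whose intensity is independent of the state). Assumption~\ref{assumption_5} guarantees that $V$ is finite, so the measure $dV$ is finite.

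\textbf{Step 3 (Gr\"onwall for Stieltjes measures).} Apply the integrating factor $e^{-2L_dV(t_{T'},t)}$ and use the additivity
\[
V(t_{T'},t)=V(t_{T'},s)+V(s,t).
\]
Integrating from $t_{T'}$ to $t$ and invoking $\phi(t_{T'})\le M_0$ (Assumption~\ref{assumption_7}) gives exactly the two-term bound above. Multiplying by $L_\Sigma^2$ then produces $K_1$ and $K_2$.

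\textbf{Main obstacle.} The hardest step is Step 2. Rigorously justifying the chain rule for $\|\mathbf{z}_d\|^2$ against a bounded-variation control, and attributing a genuine $dt$-rate source $\sigma^2_{\mathbf{z}_d}$, requires interpreting the decoder noise as an additive diffusion or drift term that the bare CDE does not literally contain. I would first try modelling the sampled initial-state spread, together with $f_{\theta_d}(\mathbf{0})$, as an additive forcing bounded per unit time. Failing that, I would simply posit $\sigma^2_{\mathbf{z}_d}$ as an assumed bound on that forcing.
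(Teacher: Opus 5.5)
Your skeleton is the paper's: push the Lipschitz head through to $\mathbb{E}\|\mathbf{z}_d(t)\|^2$, then apply Lemma~\ref{lemma_3} plus a source term. Your Step~3 integrating-factor computation is correct, and it actually derives the two-term second-moment bound that the paper only asserts (the paper's display even drops the total variation, writing $M_0e^{2L_d}$). The genuine gap is Step~1, which is the step that carries the theorem. You rightly notice the $\Sigma(\mathbf{0})$ offset, since Assumption~\ref{assumption_6} gives only $\|\Sigma(\mathbf{z})\|\le\|\Sigma(\mathbf{0})\|+L_\Sigma\|\mathbf{z}\|$. Neither of your fixes removes it:
\begin{itemize}
\item $\|\Sigma_t\|\le\|\Sigma_t\|^2$ is false whenever $\|\Sigma_t\|<1$, and normalising rescales $L_\Sigma$.
\item $\mathbb{E}\|\Sigma_t-\Sigma(\mathbf{0})\|^2\le L_\Sigma^2\,\mathbb{E}\|\mathbf{z}_d(t)\|^2$ controls a different quantity. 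Converted back it gives $\|\Sigma(\mathbf{0})\|+L_\Sigma(\mathbb{E}\|\mathbf{z}_d(t)\|^2)^{1/2}$, with an additive offset, one power of $L_\Sigma$, and exponent $L_d$ rather than $2L_d$.
\end{itemize}
In fact the reduction $\|\Sigma_t\|\le L_\Sigma^2\,\mathbb{E}\|\mathbf{z}_d(t)\|^2$ cannot follow from Assumption~\ref{assumption_6} alone. The constant head $\Sigma\equiv\mathbf{I}$ satisfies that assumption for every $L_\Sigma>0$, yet $\|\Sigma_t\|=1$ exceeds $L_\Sigma^2\,\mathbb{E}\|\mathbf{z}_d(t)\|^2$ for small $L_\Sigma$. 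You need an extra quadratic-growth hypothesis on the head, e.g.\ $\|\mathbf{L}(\mathbf{z})\|\le L_\Sigma\|\mathbf{z}\|$ for the Cholesky factor, so that $\|\Sigma(\mathbf{z})\|\le\|\mathbf{L}(\mathbf{z})\|^2\le L_\Sigma^2\|\mathbf{z}\|^2$. You also need to read $\Sigma_t$ as $\mathbb{E}[\Sigma(\mathbf{z}_d(t))]$ (Jensen), or use Assumption~\ref{assumption_7} almost surely. The paper's proof writes $\|\Sigma_t\|\le L_\Sigma^2\,\mathbb{E}\|\mathbf{z}_d(t)\|^2$ directly ``from Assumption~\ref{assumption_6}'', so this hypothesis is implicit there too.

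The ``main obstacle'' you single out, by contrast, is self-inflicted. Because $K_2\int_{t_{T'}}^t e^{2L_d\|\mathbf{X}_d\|_{TV;[s,t]}}\,ds\ge 0$, once Step~1 holds, Lemma~\ref{lemma_3} already gives $\|\Sigma_t\|\le K_1e^{2L_d\|\mathbf{X}_d\|_{TV;[t_{T'},t]}}$. That implies the stated inequality for every $\sigma^2_{\mathbf{z}_d}>0$. It is exactly the bound the paper derives first; the paper then introduces $\sigma^2_{\mathbf{z}_d}$ only under the extra supposition that the latent carries additive stochasticity, i.e.\ your fallback of positing the forcing. Your Step~2 as written would not deliver the claimed inequality in any case, for two reasons. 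First, the decoder CDE is deterministic given $\mathbf{z}_d(t_{T'+1})$, so latent sampling enters only through the initial condition (already inside $M_0$) and injects nothing per unit time. Second, Young's inequality applied to $2\|f_{\theta_d}(\mathbf{0})\|\,\|\mathbf{z}_d\|\,|d\mathbf{X}_d|$ yields rate $2L_d+\varepsilon$ and a source against $dV$, not rate $2L_d$ with a $dt$ source. If you want to allow $c:=\|f_{\theta_d}(\mathbf{0})\|>0$ (which the proof of Lemma~\ref{lemma_3} tacitly excludes by bounding $\|f_{\theta_d}(\mathbf{z})\|$ by $L_d\|\mathbf{z}\|$), bound the cross term differently. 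Use the a priori estimate $\mathbb{E}\|\mathbf{z}_d(s)\|\le(\sqrt{M_0}+c/L_d)\,e^{L_d\|\mathbf{X}_d\|_{TV}}$ together with $\|\dot{\mathbf{X}}_d(s)\|\le\|\dot{\mathbf{X}}_d\|_\infty$, which holds because the control is a $C^1$ spline. This gives $d\phi\le 2L_d\phi\,dV+\sigma^2_{\mathbf{z}_d}\,dt$ with an explicit constant, after which your Step~3 goes through unchanged.
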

Long-horizon forecasting requires uncertainty to grow in a controlled and interpretable manner. Our analysis shows that the predicted covariance $\Sigma_t$ grows at most exponentially with the horizon, with a rate determined by the decoder's Lipschitz constant and the total variation of the predictor trajectory. This bound rules out both covariance collapse and uncontrolled explosion. Moreover, uncertainty growth is driven by the forecast trajectory's geometry rather than stochastic amplification, yielding smooth, well-calibrated uncertainty over long horizons. Because the bound depends on decoder properties rather than specific data instances, it provides a model-level guarantee on long-horizon uncertainty behavior. We also formalize system-level input--output stability, which characterizes robustness to past observation perturbations and implies calibration consistency across trajectories. The system-level results are deferred to Appendix~\ref{adx:missing_theory}.

\section{A Framework for Continuous-Time Probabilistic Forecasting}\label{sec:framework}

\begin{table*}[!b] 
\centering
\small 
\caption{The CTPC variants based on the type of encoder, decoder, prediction head, and loss function. Additionally, these variants are classified by the functionalities they possess. \cmark = “supported / implemented”; \xmark = “not supported / not implemented”}
\vspace{0.15cm}
\renewcommand{\arraystretch}{1.2}
\setlength{\tabcolsep}{4pt} 
\begin{adjustbox}{max width=1.0\textwidth} 
\begin{tabular}{c|c|c|c|cccc|c|c|c|c|c}
\toprule
\multicolumn{1}{c|}{{\multirow{3}{*}{{\parbox[c][1cm][c]{2.5cm}{\centering \textbf{CTPC}\\ \textbf{Variants}}}}}} & 
\multicolumn{1}{c|}{\multirow{3}{*}{\textbf{Encoder}}} &
\multicolumn{1}{c|}{\multirow{3}{*}{\textbf{Decoder}}} &
\multicolumn{1}{c|}{{\multirow{3}{*}{{\parbox[c][1cm][c]{2.5cm}{\centering \textbf{Prediction}\\ \textbf{Head}}}}}} &
\multicolumn{4}{c|}{\multirow{3}{*}{\textbf{Loss function}}} &
\multicolumn{1}{c|}{\multirow{3}{*}{\textbf{Accuracy}}} &
\multicolumn{1}{c|}{\multirow{3}{*}{\textbf{Calibration}}} &
\multicolumn{1}{c|}{\multirow{3}{*}{\textbf{Irregular Sampling}}} &
\multicolumn{1}{c|}{\multirow{3}{*}{\textbf{Missing Features}}} &
\multicolumn{1}{c}{\multirow{3}{*}{{\parbox[c][1cm][c]{2.5cm}{\centering \textbf{Robust to}\\ \textbf{Coordinate } \\ \textbf{Transformation}}}}}\\
& & & & & & & & & & &\\
& & & & & & & & & & &\\
\midrule
\multicolumn{1}{c|}{\multirow{1}{*}{$\text{Latent ODE}^\star$}} & 
\multicolumn{1}{c|}{\multirow{1}{*}{ODE-RNN}} &
\multicolumn{1}{c|}{\multirow{1}{*}{NODE}} &
\multicolumn{1}{c|}{\multirow{1}{*}{Deterministic MLP}} &
\multicolumn{4}{c|}{\multirow{1}{*}{$\mathcal{L}^{\text{MSE}} + \mathcal{L}^{\text{KL}}$}} &
\multicolumn{1}{c|}{\multirow{1}{*}{\cmark}} &
\multicolumn{1}{c|}{\multirow{1}{*}{\xmark}} &
\multicolumn{1}{c|}{\multirow{1}{*}{\cmark}} &
\multicolumn{1}{c|}{\multirow{1}{*}{\xmark}} &
\multicolumn{1}{c}{\multirow{1}{*}{\xmark}}\\
\midrule
\multicolumn{1}{c|}{\multirow{1}{*}{$\text{Latent ODE}^\dagger$}} & 
\multicolumn{1}{c|}{\multirow{1}{*}{ODE-RNN}} &
\multicolumn{1}{c|}{\multirow{1}{*}{NODE}} &
\multicolumn{1}{c|}{\multirow{1}{*}{Deterministic MLP}} &
\multicolumn{4}{c|}{\multirow{1}{*}{$\mathcal{L}^{\text{CRPS}} + \mathcal{L}^{\text{KL}}$}} &
\multicolumn{1}{c|}{\multirow{1}{*}{\cmark}} &
\multicolumn{1}{c|}{\multirow{1}{*}{\xmark}} &
\multicolumn{1}{c|}{\multirow{1}{*}{\cmark}} &
\multicolumn{1}{c|}{\multirow{1}{*}{\xmark}} &
\multicolumn{1}{c}{\multirow{1}{*}{\xmark}}\\
\midrule
\multicolumn{1}{c|}{\multirow{1}{*}{CTPC-ODE (Fig.~\ref{fig:CTPC_ODE})}} & 
\multicolumn{1}{c|}{\multirow{1}{*}{ODE-RNN}} &
\multicolumn{1}{c|}{\multirow{1}{*}{NODE}} &
\multicolumn{1}{c|}{\multirow{1}{*}{Probabilistic MLP}} &
\multicolumn{4}{c|}{\multirow{1}{*}{$\mathcal{L}^{\text{CRPS}} + \mathcal{L}^{\text{KL}} + \mathcal{L}^{\text{NLL}}$}} &
\multicolumn{1}{c|}{\multirow{1}{*}{\cmark}} &
\multicolumn{1}{c|}{\multirow{1}{*}{\cmark}} &
\multicolumn{1}{c|}{\multirow{1}{*}{\cmark}} &
\multicolumn{1}{c|}{\multirow{1}{*}{\xmark}} & 
\multicolumn{1}{c}{\multirow{1}{*}{\xmark}}\\
\midrule
\multicolumn{1}{c|}{\multirow{1}{*}{CTPC-CDE (Fig.~\ref{fig:CTPC_CDE})}} & 
\multicolumn{1}{c|}{\multirow{1}{*}{ODE-RNN}} &
\multicolumn{1}{c|}{\multirow{1}{*}{NCDE}} &
\multicolumn{1}{c|}{\multirow{1}{*}{Probabilistic MLP}} &
\multicolumn{4}{c|}{\multirow{1}{*}{$\mathcal{L}^{\text{CRPS}} + \mathcal{L}^{\text{KL}} + \mathcal{L}^{\text{NLL}}$}} &
\multicolumn{1}{c|}{\multirow{1}{*}{\cmark}} &
\multicolumn{1}{c|}{\multirow{1}{*}{\cmark}} &
\multicolumn{1}{c|}{\multirow{1}{*}{\cmark}} &
\multicolumn{1}{c|}{\multirow{1}{*}{\xmark}} & 
\multicolumn{1}{c}{\multirow{1}{*}{\cmark}}\\
\midrule
\multicolumn{1}{c|}{\multirow{1}{*}{CTPC-CDE++ (Fig.~\ref{fig:CTPC_CDE++})}} & 
\multicolumn{1}{c|}{\multirow{1}{*}{NCDE}} &
\multicolumn{1}{c|}{\multirow{1}{*}{NCDE}} &
\multicolumn{1}{c|}{\multirow{1}{*}{Probabilistic MLP}} &
\multicolumn{4}{c|}{\multirow{1}{*}{$\mathcal{L}^{\text{CRPS}} + \mathcal{L}^{\text{KL}} + \mathcal{L}^{\text{NLL}}$}} &
\multicolumn{1}{c|}{\multirow{1}{*}{\cmark}} &
\multicolumn{1}{c|}{\multirow{1}{*}{\cmark}} &
\multicolumn{1}{c|}{\multirow{1}{*}{\cmark}} &
\multicolumn{1}{c|}{\multirow{1}{*}{\cmark}} & 
\multicolumn{1}{c}{\multirow{1}{*}{\cmark}}\\
\bottomrule

\end{tabular}
\end{adjustbox}
\label{tab:architectures}
\end{table*}

This section discusses the CTPC variants tested on the spacecraft trajectory forecasting problem. We start with Latent ODE and gradually progress to our most practical variant based on Latent NCDE.

Table~\ref{tab:architectures} summarizes the CTPC variants considered in this work and positions them relative to existing latent continuous-time ODE models. The baseline ($\text{Latent ODE}^{\star}$ \cite{rubanova2019latentodesirregularlysampledtime}) follows the standard ODE-RNN encoder and NODE decoder with a deterministic prediction head optimized via MSE as the reconstruction loss and KL regularization, yielding accurate point forecasts but lacking calibrated uncertainty estimates. We then replace the reconstruction loss with the CRPS, while keeping the latent KL regularization unchanged ($\text{Latent ODE}^{\dagger}$). This modification isolates the effect of CRPS on calibration, allowing us to directly assess its impact relative to the standard $\text{Latent ODE}$ training objective. Later, we introduce CTPC-ODE, our first variant of CTPC, which augments the Latent ODE architecture with a probabilistic prediction head and likelihood-based training, enabling calibrated uncertainty estimation while retaining support for irregular sampling. Replacing the NODE decoder with an NCDE yields CTPC-CDE, our second variant of CTPC, which further improves robustness to coordinate transformations by modeling forecast corrections as a continuous-time function of a control path consisting of GMAT's deterministic forecasts (Appendix~\ref{adx:eci_frame}).

While ODE-RNN encoders are naturally suited to irregular timestamps via continuous-time hidden dynamics \textit{between} observations, handling partially missing features requires explicit missingness indicators (e.g., mask/time-gap channels) and masked GRU updates. NCDEs admit the same strategy more directly by augmenting the spline-based control path with missingness channels, enabling continuous-time modeling without discarding partially observed measurements. To that end, we introduce CTPC-CDE++, our third variant of CTPC, which employs NCDEs in both the encoder and decoder, providing native support for irregularly sampled observations with missing features via control paths, while maintaining accurate, calibrated, and coordinate-robust probabilistic forecasts.

This progression highlights how each architectural choice contributes specific capabilities, culminating in a flexible CTPC variant, i.e., CTPC-CDE++, for long-horizon continuous-time probabilistic forecasting.

\section{Evaluation Metrics}\label{sec:metrics}

We evaluate the proposed Predictor--Corrector framework using metrics that jointly assess
\textit{forecast accuracy}, \textit{uncertainty calibration}, and \textit{uncertainty sharpness}
for long-horizon, open-loop trajectory forecasting. Since no observations are available in the
forecast regime, all metrics are computed without corrective feedback.

\paragraph{Accuracy (MSE).}
Forecast accuracy is quantified using the mean squared error (MSE) between the true spacecraft
state $\mathbf{x}_{\text{ECI}}(t)$ and the corrected forecast $\hat{\mathbf{x}}_{\text{ECI}}^{\text{corr}}(t)$,
\begin{equation}
\mathrm{MSE}
=
\frac{1}{T}
\sum_{t=T^{\prime}+1}^{T}
\left\|
\mathbf{x}_{\text{ECI}}(t) - \hat{\mathbf{x}}_{\text{ECI}}^{\text{corr}}(t)
\right\|_2^2.
\end{equation}

\paragraph{Calibration (Normalized Mahalanobis Distance).}
Calibration is assessed using the squared Mahalanobis distance between the true forecast error
$\mathbf{e}_{\text{ECI}}(t)$ and the predicted error distribution,
\begin{equation}
d_t^2
=
(\mathbf{e}_{\text{ECI}}(t) - \boldsymbol{\mu}_t)^\top
\boldsymbol{\Sigma}_t^{-1}
(\mathbf{e}_{\text{ECI}}(t) - \boldsymbol{\mu}_t),
\end{equation}
where $\boldsymbol{\mu}_t$ and $\boldsymbol{\Sigma}_t$ denote the predicted mean and covariance
of the forecast error at time $t$.

For a $D$-dimensional multivariate Student-$t$ distribution with degrees of freedom $\nu>2$,
the expected squared Mahalanobis distance satisfies
\begin{equation}
\mathbb{E}[d_t^2] = \frac{\nu}{\nu-2}\,D.
\end{equation}
We therefore report the \emph{normalized Mahalanobis distance}
\begin{equation}
\bar{d}_t^2
=
\frac{d_t^2}{\frac{\nu}{\nu-2}\,D},
\end{equation}
for which a well-calibrated predictive distribution satisfies
\begin{equation}
\mathbb{E}[\bar{d}_t^2] = 1,
\end{equation}
based on Theorem~\ref{theorem_1}.

Averaged over time and trajectories, values close to one
($\bar{d}_t^2 \approx 1$) indicate well-calibrated uncertainty estimates.
Larger values ($\bar{d}_t^2 > 1$) indicate \emph{overconfident} predictions
(i.e., underestimated uncertainty), while smaller values
($\bar{d}_t^2 < 1$) indicate \emph{underconfident} predictions
(i.e., overly diffuse uncertainty).

\paragraph{Sharpness (Log-Determinant).}
Sharpness measures the concentration of the predictive uncertainty independently of the ground
truth. We quantify sharpness using the log-determinant of the predicted covariance matrix, $\log \left| \boldsymbol{\Sigma}_t \right|$.
Lower values correspond to tighter uncertainty ellipsoids. Since sharpness alone can be misleading, this metric is interpreted jointly with calibration to ensure that uncertainty estimates are both sharp and calibrated.

\section{Results}\label{sec:results}

\subsection{Comparing Corrector Architectures}\label{adx:corr_archi}
Table~\ref{tab:architectures} summarizes the evaluated CTPC variants.
We consider two Latent ODE baselines that match the original encoder--decoder architecture of Latent ODE \cite{rubanova2019latentodesirregularlysampledtime} (ODE-RNN encoder + NODE decoder), differing only in the training objective:
(i) $\text{Latent ODE}^{\star}$ uses the default $\mathcal{L}^{\mathrm{MSE}}+\mathcal{L}^{\mathrm{KL}}$ objective, and
(ii) $\text{Latent ODE}^{\dagger}$ replaces the MSE term with CRPS, i.e., $\mathcal{L}^{\mathrm{CRPS}}+\mathcal{L}^{\mathrm{KL}}$, to isolate the effect of a proper scoring rule, that is, CRPS, on calibration. Both architectures use a deterministic multi-layer perceptron as the prediction head.
We then evaluate our proposed CTPC variants, which augment the latent encoder--decoder with a probabilistic prediction head trained with $\mathcal{L}^{\mathrm{NLL}}$ in addition to CRPS and KL.
Finally, we compare decoder choices (NODE vs.\ NCDE) and an end-to-end NCDE encoder--decoder variant (CTPC-CDE++), which additionally supports missing features via spline-based control paths.

\begin{table*}[!ht]
\centering
\small 
\caption{The performance of GMAT Predictor with our proposed Latent Corrector and baseline Latent ODE Corrector. The performance is evaluated based on three metrics, i.e., accuracy (MSE), calibration (squared Mahalanobis distance, $d^2$), and sharpness (log-determinant, $\log \Sigma$). D = days, H = hours, M = minutes. We present results of Latent ODE with three loss functions, each denoted with different symbols in the superscript. The results reported are in RTN frame with total variance ($\bar{\Sigma}$).}
\vspace{0.15cm}
\renewcommand{\arraystretch}{1.2}
\setlength{\tabcolsep}{4pt} 
\begin{adjustbox}{max width=0.9\textwidth} 
\begin{tabular}{c|c|cccccc|cccccc}
\toprule
\multicolumn{1}{c|}{\multirow{3}{*}{\parbox[c][1cm][c]{2.5cm}{\centering \textbf{Test}\\ \textbf{Period}}}} & 
\multicolumn{1}{c|}{\multirow{3}{*}{\textbf{Model}}} &
\multicolumn{6}{c|}{\textbf{Interpolation Horizons}} &
\multicolumn{6}{c}{\textbf{Extrapolation Horizons}} \\
\cline{3-14} 
& & \multicolumn{3}{c|}{1000M (0D 16H 40M)} & \multicolumn{3}{c|}{2000M (1D 9H 20M)} & \multicolumn{3}{c|}{4000M (2D 18H 40M)} & \multicolumn{3}{c}{5760M (4D 0H 0M)} \\
\cline{3-14} 
& & MSE (\%$\downarrow$) & $\bar{d}^2$ & \multicolumn{1}{c|}{$-\log|\bar{\Sigma}|$} & MSE (\%$\downarrow$) & $\bar{d}^2$ & $-\log|\bar{\Sigma}|$ & MSE (\%$\downarrow$) & $\bar{d}^2$ & \multicolumn{1}{c|}{$-\log|\bar{\Sigma}|$} & MSE (\%$\downarrow$) & $\bar{d}^2$ & $-\log|\bar{\Sigma}|$ \\
\midrule
\multirow{6}{*}{\parbox[c][1cm][c]{2.5cm}{\centering 2017-02-15\\ \textbf{to} 2017-02-28}}
& \multicolumn{1}{c|}{GMAT}
& 0.0105 (---) & --- & \multicolumn{1}{c|}{---} 
& 0.0396 (---) & --- & --- 
& 0.1465 (---) & --- & \multicolumn{1}{c|}{---} 
& 0.2963 (---) & --- & --- \\
\cline{2-14}
& \multicolumn{1}{c|}{GMAT+Latent $\text{ODE}^{\star}$}
& 0.0037 (65\%) & 53.34 & \multicolumn{1}{c|}{37.57} 
& 0.0139 (65\%) & 180.21 & 38.35 
& 0.0717 (51\%) & 2271.43 & \multicolumn{1}{c|}{39.52} 
& 0.1745 (41\%) & 48812.39 & 40.07 \\
\cline{2-14}
& \multicolumn{1}{c|}{GMAT+Latent $\text{ODE}^{\dagger}$}
& 0.0032 (69\%) & 5.22 & \multicolumn{1}{c|}{23.10} 
& 0.0154 (61\%) & 33.07 & 22.47 
& 0.0788 (46\%) & 564.40 & \multicolumn{1}{c|}{22.32} 
& 0.1854 (37\%) & 4481.04 & 22.43 \\
\cline{2-14}
& \multicolumn{1}{c|}{GMAT+CTPC-ODE}
& 0.0021 (80\%) & 0.45 & \multicolumn{1}{c|}{17.85} 
& 0.0081 (79\%) & 0.59 & 17.19 
& 0.0414 (71\%) & 0.75 & \multicolumn{1}{c|}{16.40} 
& 0.1174 (60\%) & 1.03 & 16.04 \\
\cline{2-14}
& \multicolumn{1}{c|}{GMAT+CTPC-CDE}
& 0.0033 (69\%) & 0.43 & \multicolumn{1}{c|}{17.91} 
& 0.0097 (75\%) & 0.60 & 17.34 
& 0.0330 (77\%) & 0.84 & \multicolumn{1}{c|}{16.64} 
& 0.0881 (70\%) & 1.21 & 16.33 \\
\cline{2-14}
& \multicolumn{1}{c|}{GMAT+CTPC-CDE++}
& 0.0028 (73\%) & 0.43 & \multicolumn{1}{c|}{17.74} 
& 0.0083 (78\%) & 0.64 & 17.28 
& 0.0263 (82\%) & 0.98 & \multicolumn{1}{c|}{16.70} 
& 0.0693 (76\%) & 1.47 & 16.46 \\
\midrule
\multirow{6}{*}{\parbox[c][1cm][c]{2.5cm}{\centering 2017-04-15\\ \textbf{to} 2017-04-30}}
& \multicolumn{1}{c|}{GMAT}
& 0.0121 (---) & --- & \multicolumn{1}{c|}{---} 
& 0.0393 (---) & --- & --- 
& 0.1437 (---) & --- & \multicolumn{1}{c|}{---} 
& 0.2841 (---) & --- & --- \\
\cline{2-14}
& \multicolumn{1}{c|}{GMAT+Latent $\text{ODE}^{\star}$}
& 0.0051 (58\%) & 40.57 & \multicolumn{1}{c|}{37.02} 
& 0.0164 (58\%) & 39.57 & 36.92 
& 0.0765 (46\%) & 248.74 & \multicolumn{1}{c|}{36.44} 
& 0.1802 (36\%) & 4219.96 & 36.36 \\
\cline{2-14}
& \multicolumn{1}{c|}{GMAT+Latent $\text{ODE}^{\dagger}$}
& 0.0042 (65\%) & 4.14 & \multicolumn{1}{c|}{23.35} 
& 0.0160 (59\%) & 29.79 & 22.60 
& 0.0773 (46\%) & 998.13 & \multicolumn{1}{c|}{22.51}
& 0.1816 (36\%) & 6172.01 & 22.76 \\
\cline{2-14}
& \multicolumn{1}{c|}{GMAT+CTPC-ODE}
& 0.0038 (68\%) & 0.51 & \multicolumn{1}{c|}{18.25} 
& 0.0139 (64\%) & 0.58 & 17.34 
& 0.0562 (60\%) & 0.74 & \multicolumn{1}{c|}{16.37} 
& 0.1312 (53\%) & 1.02 & 15.97 \\
\cline{2-14}
& \multicolumn{1}{c|}{GMAT+CTPC-CDE}
& 0.0050 (58\%) & 0.42 & \multicolumn{1}{c|}{17.32} 
& 0.0155 (60\%) & 0.55 & 16.86 
& 0.0544 (62\%) & 0.87 & \multicolumn{1}{c|}{16.22} 
& 0.1261 (55\%) & 1.27 & 15.94 \\
\cline{2-14}
& \multicolumn{1}{c|}{GMAT+CTPC-CDE++}
& 0.0047 (60\%) & 0.44 & \multicolumn{1}{c|}{17.62} 
& 0.0129 (67\%) & 0.62 & 17.10 
& 0.0480 (66\%) & 0.99 & \multicolumn{1}{c|}{16.45} 
& 0.1154 (59\%) & 1.44 & 16.18 \\
\midrule
\multirow{6}{*}{\parbox[c][1cm][c]{2.5cm}{\centering 2017-06-15\\ \textbf{to} 2017-06-30}}
& \multicolumn{1}{c|}{GMAT}
& 0.0126 (---) & --- & \multicolumn{1}{c|}{---} 
& 0.0468 (---) & --- & --- 
& 0.1647 (---) & --- & \multicolumn{1}{c|}{---} 
& 0.3200 (---) & --- & --- \\
\cline{2-14}
& \multicolumn{1}{c|}{GMAT+Latent $\text{ODE}^{\star}$}
& 0.0035 (72\%) & 35.09 & \multicolumn{1}{c|}{37.47} 
& 0.0147 (68\%) & 954.35 & 37.97 
& 0.0770 (53\%) & 15814.05 & \multicolumn{1}{c|}{38.82} 
& 0.1843 (42\%) & 52871.08 & 39.01 \\
\cline{2-14}
& \multicolumn{1}{c|}{GMAT+Latent $\text{ODE}^{\dagger}$}
& 0.0031 (75\%) & 5.05 & \multicolumn{1}{c|}{22.98} 
& 0.0161 (65\%) & 101.91 & 22.44 
& 0.0837 (49\%) & 1391.71 & \multicolumn{1}{c|}{22.73} 
& 0.1974 (38\%) & 7429.85 & 23.05 \\
\cline{2-14}
& \multicolumn{1}{c|}{GMAT+CTPC-ODE}
& 0.0030 (76\%) & 0.48 & \multicolumn{1}{c|}{18.51} 
& 0.0097 (79\%) & 0.65 & 17.62 
& 0.0336 (79\%) & 0.84 & \multicolumn{1}{c|}{16.62} 
& 0.0859 (73\%) & 1.12 & 16.18 \\
\cline{2-14}
& \multicolumn{1}{c|}{GMAT+CTPC-CDE}
& 0.0034 (72\%) & 0.39 & \multicolumn{1}{c|}{17.71} 
& 0.0114 (75\%) & 0.56 & 17.15 
& 0.0359 (78\%) & 0.99 & \multicolumn{1}{c|}{16.67} 
& 0.0830 (74\%) & 1.32 & 16.46 \\
\cline{2-14}
& \multicolumn{1}{c|}{GMAT+CTPC-CDE++}
& 0.0040 (68\%) & 0.34 & \multicolumn{1}{c|}{17.09} 
& 0.0116 (75\%) & 0.51 & 16.66 
& 0.0395 (76\%) & 0.85 & \multicolumn{1}{c|}{16.07} 
& 0.0933 (70\%) & 1.20 & 15.75 \\
\midrule
\multirow{6}{*}{\parbox[c][1cm][c]{2.5cm}{\centering 2017-08-15\\ \textbf{to} 2017-08-31}}
& \multicolumn{1}{c|}{GMAT}
& 0.0100 (---) & --- & \multicolumn{1}{c|}{---} 
& 0.0324 (---) & --- & --- 
& 0.1075 (---) & --- & \multicolumn{1}{c|}{---} 
& 0.2003 (---) & --- & --- \\
\cline{2-14}
& \multicolumn{1}{c|}{GMAT+Latent $\text{ODE}^{\star}$}
& 0.0049 (51\%) & 155.89 & \multicolumn{1}{c|}{38.01} 
& 0.0135 (58\%) & 85.08 & 36.93 
& 0.0499 (53\%) & 300.31 & \multicolumn{1}{c|}{37.06} 
& 0.1100 (45\%) & 3029.25 & 37.37 \\
\cline{2-14}
& \multicolumn{1}{c|}{GMAT+Latent $\text{ODE}^{\dagger}$}
& 0.0037 (63\%) & 3.01 & \multicolumn{1}{c|}{23.36} 
& 0.0142 (56\%) & 5.85 & 22.57
& 0.0561 (47\%) & 105.81 & \multicolumn{1}{c|}{22.20} 
& 0.1210 (39\%) & 1107.70 & 22.19 \\
\cline{2-14}
& \multicolumn{1}{c|}{GMAT+CTPC-ODE}
& 0.0037 (63\%) & 0.37 & \multicolumn{1}{c|}{17.85} 
& 0.0136 (57\%) & 0.44 & 17.28
& 0.0413 (61\%) & 0.54 & \multicolumn{1}{c|}{16.54} 
& 0.0885 (55\%) & 0.69 & 16.20 \\
\cline{2-14}
& \multicolumn{1}{c|}{GMAT+CTPC-CDE}
& 0.0041 (58\%) & 0.38 & \multicolumn{1}{c|}{17.93} 
& 0.0150 (53\%) & 0.45 & 17.41 
& 0.0430 (60\%) & 0.57 & \multicolumn{1}{c|}{16.72} 
& 0.0883 (55\%) & 0.74 & 16.40 \\
\cline{2-14}
& \multicolumn{1}{c|}{GMAT+CTPC-CDE++}
& 0.0042 (58\%) & 0.41 & \multicolumn{1}{c|}{18.22} 
& 0.0140 (56\%) & 0.54 & 17.77 
& 0.0461 (57\%) & 0.67 & \multicolumn{1}{c|}{17.12} 
& 0.0937 (53\%) & 0.83 & 16.80 \\
\midrule
\multirow{6}{*}{\parbox[c][1cm][c]{2.5cm}{\centering 2017-10-15\\ \textbf{to} 2017-10-30}}
& \multicolumn{1}{c|}{GMAT}
& 0.0088 (---) & --- & \multicolumn{1}{c|}{---} 
& 0.0334 (---) & --- & --- 
& 0.1201 (---) & --- & \multicolumn{1}{c|}{---} 
& 0.2400 (---) & --- & --- \\
\cline{2-14}
& \multicolumn{1}{c|}{GMAT+Latent $\text{ODE}^{\star}$}
& 0.0030 (65\%) & 51.73 & \multicolumn{1}{c|}{37.76} 
& 0.0124 (62\%) & 81.18 & 38.07 
& 0.0621 (48\%) & 94.43 & \multicolumn{1}{c|}{37.80} 
& 0.1483 (38\%) & 169.35 & 37.48 \\
\cline{2-14}
& \multicolumn{1}{c|}{GMAT+Latent $\text{ODE}^{\dagger}$}
& 0.0029 (66\%) & 2.88 & \multicolumn{1}{c|}{23.39} 
& 0.0127 (62\%) & 10.25 & 22.73 
& 0.0640 (46\%) & 33.69 & \multicolumn{1}{c|}{22.50} 
& 0.1525 (36\%) & 84.81 & 22.50 \\
\cline{2-14}
& \multicolumn{1}{c|}{GMAT+CTPC-ODE}
& 0.0025 (71\%) & 0.36 & \multicolumn{1}{c|}{18.15} 
& 0.0094 (71\%) & 0.47 & 17.53 
& 0.0423 (64\%) & 0.70 & \multicolumn{1}{c|}{16.86} 
& 0.1078 (55\%) & 1.02 & 16.55 \\
\cline{2-14}
& \multicolumn{1}{c|}{GMAT+CTPC-CDE}
& 0.0028 (68\%) & 0.37 & \multicolumn{1}{c|}{18.61} 
& 0.0105 (68\%) & 0.52 & 18.01 
& 0.0412 (65\%) & 0.74 & \multicolumn{1}{c|}{17.28} 
& 0.0970 (59\%) & 1.05 & 16.92 \\
\cline{2-14}
& \multicolumn{1}{c|}{GMAT+CTPC-CDE++}
& 0.0027 (69\%) & 0.31 & \multicolumn{1}{c|}{17.55} 
& 0.0089 (73\%) & 0.43 & 17.12 
& 0.0351 (70\%) & 0.62 & \multicolumn{1}{c|}{16.47} 
& 0.0956 (60\%) & 0.88 & 16.16 \\
\midrule
\multirow{6}{*}{\parbox[c][1cm][c]{2.5cm}{\centering 2017-12-15\\ \textbf{to} 2017-12-31}}
& \multicolumn{1}{c|}{GMAT}
& 0.0124 (---) & --- & \multicolumn{1}{c|}{---} 
& 0.0390 (---) & --- & --- 
& 0.1420 (---) & --- & \multicolumn{1}{c|}{---} 
& 0.2819 (---) & --- & --- \\
\cline{2-14}
& \multicolumn{1}{c|}{GMAT+Latent $\text{ODE}^{\star}$}
& 0.0038 (69\%) & 118.81 & \multicolumn{1}{c|}{38.39} 
& 0.0138 (64\%) & 150.28 & 38.18 
& 0.0755 (46\%) & 2877.62 & \multicolumn{1}{c|}{37.92} 
& 0.1811 (35\%) & 10904.01 & 37.87 \\
\cline{2-14}
& \multicolumn{1}{c|}{GMAT+Latent $\text{ODE}^{\dagger}$}
& 0.0042 (66\%) & 4.32 & \multicolumn{1}{c|}{23.98} 
& 0.0150 (61\%) & 10.99 & 23.04 
& 0.0786 (44\%) & 310.34 & \multicolumn{1}{c|}{22.64} 
& 0.1855 (34\%) & 1176.57 & 22.67 \\
\cline{2-14}
& \multicolumn{1}{c|}{GMAT+CTPC-ODE}
& 0.0034 (72\%) & 0.55 & \multicolumn{1}{c|}{18.98} 
& 0.0106 (72\%) & 0.61 & 18.00 
& 0.0456 (67\%) & 0.70 & \multicolumn{1}{c|}{16.94} 
& 0.1132 (59\%) & 0.94 & 16.52 \\
\cline{2-14}
& \multicolumn{1}{c|}{GMAT+CTPC-CDE}
& 0.0038 (69\%) & 0.51 & \multicolumn{1}{c|}{18.82} 
& 0.0112 (71\%) & 0.66 & 18.18 
& 0.0460 (67\%) & 0.80 & \multicolumn{1}{c|}{17.41} 
& 0.1140 (59\%) & 1.05 & 17.03 \\
\cline{2-14}
& \multicolumn{1}{c|}{GMAT+CTPC-CDE++}
& 0.0038 (69\%) & 0.59 & \multicolumn{1}{c|}{19.11} 
& 0.0122 (68\%) & 0.75 & 18.41 
& 0.0486 (65\%) & 0.85 & \multicolumn{1}{c|}{17.46} 
& 0.1127 (60\%) & 1.09 & 17.01 \\
\midrule
\multirow{6}{*}{\parbox[c][1cm][c]{2.5cm}{\centering \textbf{Average} \\ (All Periods)}}
& \multicolumn{1}{c|}{GMAT}
& 0.0111 (---) & --- & \multicolumn{1}{c|}{---}
& 0.0384 (---) & --- & ---
& 0.1374 (---) & --- & \multicolumn{1}{c|}{---}
& 0.2704 (---) & --- & --- \\
\cline{2-14}
& \multicolumn{1}{c|}{GMAT+Latent $\text{ODE}^{\star}$}
& 0.0040 (64\%) & 75.91 & \multicolumn{1}{c|}{37.54} 
& 0.0141 (63\%) & 248.44 & 37.73
& 0.0687 (49\%) & 3601.10 & \multicolumn{1}{c|}{37.93} 
& 0.1631 (39\%) & 20000.04 & 38.03 \\
\cline{2-14}
& \multicolumn{1}{c|}{GMAT+Latent $\text{ODE}^{\dagger}$}
& 0.0031 (72\%) & 4.10 & \multicolumn{1}{c|}{23.36}
& 0.0149 (61\%) & 31.97 & 22.64
& 0.0730 (46\%) & 567.35 & \multicolumn{1}{c|}{22.48}
& 0.1706 (36\%) & 3408.66 & 22.60 \\
\cline{2-14}
& \multicolumn{1}{c|}{GMAT+CTPC-ODE}
& 0.0030 (72\%) & 0.45 & \multicolumn{1}{c|}{18.27} 
& 0.0108 (71\%) & 0.55 & 17.49
& 0.0434 (68\%) & 0.71 & \multicolumn{1}{c|}{16.79} 
& 0.1073 (60\%) & 1.14 & 16.24 \\
\cline{2-14}
& \multicolumn{1}{c|}{GMAT+CTPC-CDE}
& 0.0037 (66\%) & 0.42 & \multicolumn{1}{c|}{17.88}
& 0.0122 (68\%) & 0.55 & 17.49
& 0.0422 (69\%) & 0.80 & \multicolumn{1}{c|}{16.82}
& 0.0994 (63\%) & 1.11 & 16.51 \\
\cline{2-14}
& \multicolumn{1}{c|}{GMAT+CTPC-CDE++}
& 0.0037 (67\%) & 0.42 & \multicolumn{1}{c|}{17.89}
& 0.0113 (70\%) & 0.58 & 17.39
& 0.0406 (70\%) & 0.82 & \multicolumn{1}{c|}{16.71}
& 0.0967 (64\%) & 1.15 & 16.39 \\
\bottomrule

\end{tabular}
\end{adjustbox}
\label{tab:main_results}
\end{table*}

\subsection{Long-Horizon Forecasting Performance}\label{sec:long_horizon}
Table~\ref{tab:main_results} reports accuracy (MSE), calibration ($\bar d^2$), and sharpness ($-\log|\Sigma|$) across six test windows and four forecast horizons in both interpolation and extrapolation regimes.
The relevant details are given in Appendix~\ref{adx:train_test}.
Across all horizons, our CTPC models consistently improve both accuracy and calibration relative to GMAT and Latent ODE baselines. We further analyze the role of latent-induced variability in Table~\ref{tab:latent_variance}, where we compare calibration using total predictive variance ($\bar{\boldsymbol{\Sigma}}$) against using only mean predictive variance ($\frac{1}{K}\sum_{k=1}^{K}\boldsymbol{\Sigma}_t^{(k)}$).

\paragraph{Accuracy improvements.}
All learned Correctors reduce MSE relative to GMAT across horizons, demonstrating that modeling forecast error dynamics improves deterministic forecasting.
However, the magnitude of improvement depends strongly on architecture: CTPC variants achieve the largest and most consistent reductions, particularly at long horizons where error accumulation dominates.
For example, at the longest horizon (5760M), the averaged MSE across periods decreases from $0.2704$ (GMAT) to $0.1073$ (CTPC-ODE), $0.0994$ (CTPC-CDE), and $0.0967$ (CTPC-CDE++), indicating substantial long-horizon error mitigation.
Moreover, replacing a NODE decoder with an NCDE decoder yields further gains, suggesting that continuous-time control-path conditioning in the decoder improves long-horizon error evolution modeling.

\paragraph{Calibration and Latent ODE baselines.}
A key observation is that Latent ODE baselines can achieve non-trivial MSE reduction while producing severely miscalibrated uncertainty.
In particular, $\text{Latent ODE}^{\star}$ yields extremely large $\bar d^2$ values (often $\gg 1$ and reaching orders of magnitude larger), indicating catastrophic overconfidence despite lower MSE.
This behavior is expected as optimizing MSE encourages point-estimate accuracy and does not constrain predictive covariance to match residual statistics, so the model can collapse uncertainty while still reducing error.
Replacing MSE with CRPS in $\text{Latent ODE}^{\dagger}$ improves calibration substantially (reducing $\bar d^2$ by orders of magnitude), but still remains far from well-calibrated at long horizons.

\paragraph{Calibration and CTPC variants.}
In contrast, CTPC variants maintain $\bar d^2$ values close to unity across horizons (typically $\approx 0.3$--$1.5$ depending on horizon), indicating well-calibrated uncertainty propagation in the open-loop forecast regime.
At the same time, CTPC produces sharp distributions as reflected by consistently large $-\log|\Sigma|$, demonstrating that improved calibration is not obtained by inflating covariance.
Overall, these results validate the proposed training objective (CRPS + NLL + KL) and probabilistic prediction head for long-horizon uncertainty-aware forecasting.

\paragraph{Decoder choice.}
Comparing CTPC-ODE to CTPC-CDE isolates the effect of using an NCDE decoder.
CTPC-CDE improves MSE and calibration across horizons and test windows in the extrapolation regime, suggesting that the NCDE decoder better captures the temporal evolution of forecast errors under continuous-time control paths.
Finally, CTPC-CDE++ provides the most flexible formulation, extending to missing-feature settings (via spline-based control paths) while retaining strong accuracy and calibration, making it the most general variant of our proposed CTPCs in Table~\ref{tab:architectures}.

\section{Conclusion}\label{sec:conclusion}
Physics-based propagators, such as GMAT, provide strong deterministic forecasts but lack reliable uncertainty estimates in open-loop, long-horizon forecast regimes where errors accumulate without corrective observations. We proposed \emph{Continuous-Time Probabilistic Correctors} (CTPC), a Predictor--Corrector framework that learns continuous-time probabilistic error dynamics and wraps around an existing deterministic simulator to improve accuracy while producing full-covariance uncertainty estimates. On real-world spacecraft ephemeris data (NASA CDDIS) across six rolling test windows and horizons up to 4 days, CTPC consistently reduces MSE relative to GMAT and yields markedly better-calibrated uncertainty than Latent ODE baselines. In particular, the standard Latent ODE objective (MSE+KL) can reduce error but becomes severely overconfident, whereas CTPC maintains a normalized Mahalanobis distance closer to the ideal value of one while remaining sharp. We also isolate the source of calibration gains: moment aggregation over $K$ latent samples (total predictive variance) improves calibration compared to using only the mean predictive variance, confirming that latent sampling provides an ensemble-like effect critical for long-horizon uncertainty propagation.

\paragraph{Limitations and future work.}
This work focuses on a single spacecraft forecasting dataset and evaluates calibration via normalized Mahalanobis distance and sharpness via log-determinant; broader validation on downstream risk metrics useful for spacecraft collision avoidance (e.g., conjunction probability sensitivity) is needed. Future directions include (i) extending CTPC to explicitly incorporate parameter uncertainty and/or environmental covariates (space weather), (ii) coupling the CTPCs with differentiable physics for end-to-end adaptation, and (iii) developing formal verification and safety guarantees for the proposed CTPC variants.

\appendix

\section{Moment Aggregation}\label{adx:app_inf}

At inference time, we draw $K$ Monte Carlo samples from the latent posterior,
$s_L^{(k)}\sim \mathcal{N}(\boldsymbol{\mu}_L,\boldsymbol{\Sigma}_L)$, and decode each sample to obtain a probabilistic prediction
$(\boldsymbol{\mu}_t^{(k)},\boldsymbol{\Sigma}_t^{(k)})$ over the forecast error $\hat{\mathbf{e}}_{\text{ECI}}(t)\in\mathbb{R}^3$.
This induces a mixture distribution
\begin{equation}
p(\hat{\mathbf{e}}_{\text{ECI}}(t) \mid \mathbf{z}_{d}(t))
\approx \frac{1}{K}\sum_{k=1}^{K}\mathcal{T}_{\nu}\!\left(\hat{\mathbf{e}}_{\text{ECI}}(t);\boldsymbol{\mu}_t^{(k)},\boldsymbol{\Sigma}_t^{(k)}\right)
\end{equation}
For $\nu > 2$, the Student-$t$ distribution admits finite first and second moments; we therefore summarize the mixture via moment matching, yielding a Gaussian approximation with matched mean and covariance. The aggregated mean is
\begin{equation}
\bar{\boldsymbol{\mu}}_t=\frac{1}{K}\sum_{k=1}^{K}\boldsymbol{\mu}_t^{(k)},
\end{equation}
and the aggregated covariance decomposes as
\begin{equation}
\bar{\boldsymbol{\Sigma}}_t
=
\underbrace{\frac{1}{K}\sum_{k=1}^{K}\boldsymbol{\Sigma}_t^{(k)}}_{\text{within-sample (aleatoric)}}
+
\underbrace{\frac{1}{K}\sum_{k=1}^{K}
\big(\boldsymbol{\mu}_t^{(k)}-\bar{\boldsymbol{\mu}}_t\big)
\big(\boldsymbol{\mu}_t^{(k)}-\bar{\boldsymbol{\mu}}_t\big)^{\!\top}}_{\text{between-sample (latent-induced)}}.
\end{equation}
The second term captures variability induced by latent sampling and is \emph{epistemic-like}, though it does not truly represent parameter uncertainty.

This aggregation mirrors the law of total variance used in deep ensembles \cite{wilson2022bayesiandeeplearningprobabilistic}, which suggests training multiple probabilistic neural networks to quantify uncertainty. However, we obtain an ensemble-like effect by taking $K$ samples from the latent space and propagating them through a single decoder equipped with a single probabilistic prediction head \cite{lakshminarayanan2017simplescalablepredictiveuncertainty}.

\section{CTPC-ODE}\label{adx:ctpc_ode}

Here, we describe the CTPC-ODE variant of our proposed Corrector, shown in Fig.~\ref{fig:CTPC_ODE}.

\paragraph{Encoder (ODE-RNN).}
The encoder processes past forecast errors from the GMAT Predictor to infer a latent-space distribution.
Unlike NCDEs, the hidden state of an ODE-RNN \cite{rubanova2019latentodesirregularlysampledtime} is only continuous \emph{between} observation times. Specifically, the encoder hidden state $\mathbf{z}_e : [t_{i-1}, t_i] \rightarrow \mathbb{R}^{h_e}$ evolves according to the NODE parameterized by an MLP ($f_{\theta_e}$), and is discretely updated at observation times via a GRU. Formally, between two consecutive observations $t_{i-1}$ and $t_i$, the hidden state evolves as

\begin{align}
\mathbf{z}_e(t)
&= \mathbf{z}_e(t_{i-1})
+ \int_{t_{i-1}}^{t} f_{\theta_e}\!\left(\mathbf{z}_e(s)\right)\, ds,
\quad t \in (t_{i-1}, t_i),
\end{align}

yielding a pre-update state $\tilde{\mathbf{z}}_e(t_i)$. At each observation time $t_i$, the hidden state is updated using the observed forecast error $\mathbf{e}_{\text{ECI}}(t_i)$ via
\begin{align}
    \mathbf{z}_e(t_i) &= \mathrm{GRU}_{\phi}\!\left(\tilde{\mathbf{z}}_e(t_i), \mathbf{e}_{\text{ECI}}(t_i)\right).
\end{align}

After sequentially processing the past error trajectory $\mathbf{e}_{\text{ECI}}(t_{0:T^\prime})$, the final hidden state
$\mathbf{z}_e(t_{T'})$ is mapped to the parameters of a latent Gaussian distribution via a linear projection,
\begin{align}
    (\boldsymbol{\mu}_L, \boldsymbol{\Sigma}_L)
    &\leftarrow \mathbf{W}_{h_e \rightarrow L}\,\mathbf{z}_e(t_{T'}),
    \quad \mathbf{W}_{h_e \rightarrow L} \in \mathbb{R}^{2L \times h_e},
\end{align}
where $\boldsymbol{\Sigma}_L$ is parameterized as a diagonal covariance matrix.

\begin{figure*}[ht]
  \centering
  \includegraphics[width=\textwidth]{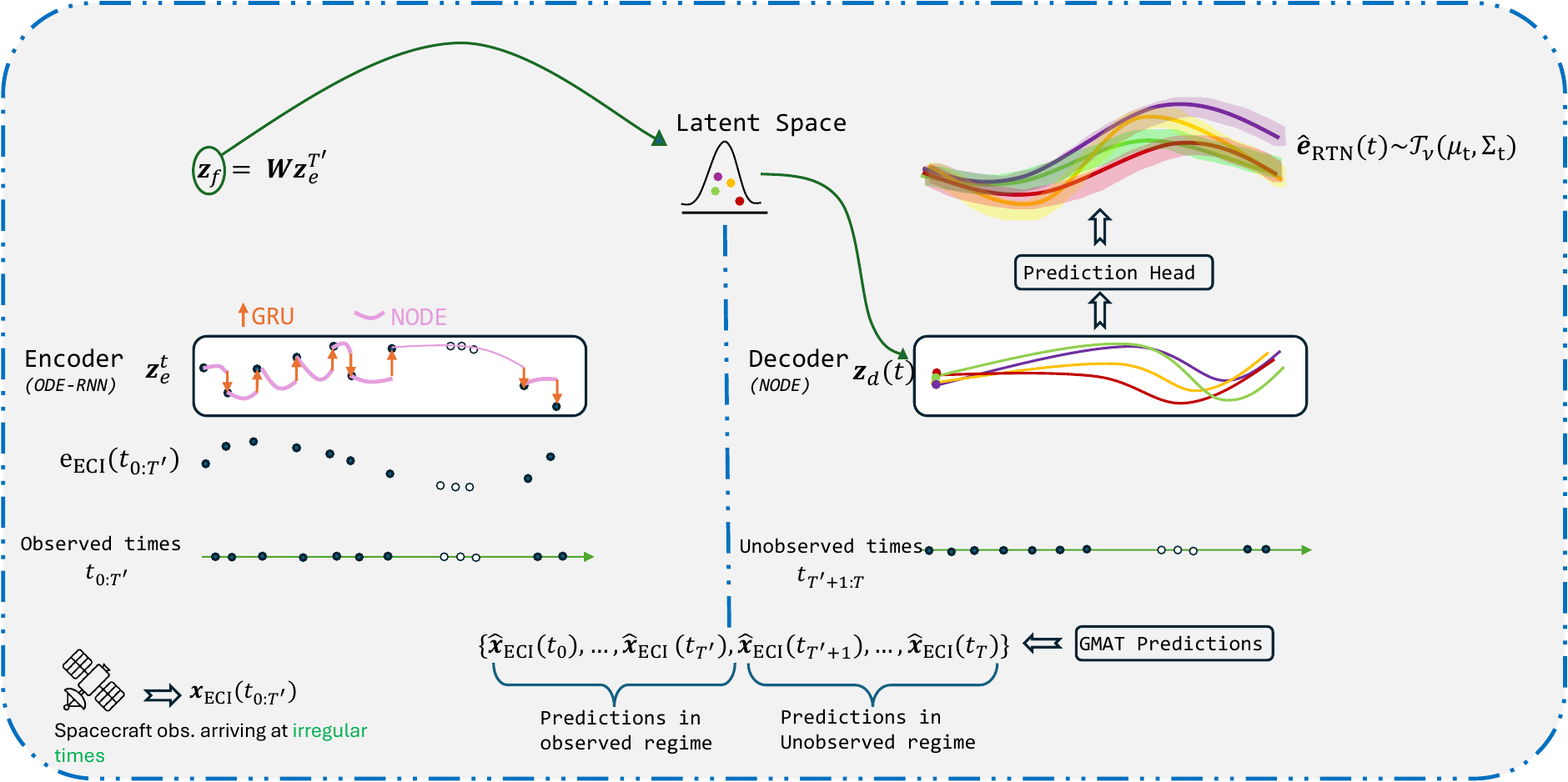}
  \caption{The CTPC-ODE variant of CTPC with ODE-RNN as the encoder and NODE as the decoder. CTPC-ODE can model the error in the RTN frame, $\hat{\mathbf{e}}_{\text{RTN}}(t)$, only.}
  \label{fig:CTPC_ODE}
\end{figure*}

\paragraph{Decoder (NODE).}
The decoder propagates samples from the latent space forward in continuous time to model the evolution of forecast errors over the forecast horizon, $[t_{T^{\prime}+1}, T]$.
Given a latent sample $s_L \sim \mathcal{N}(\boldsymbol{\mu}_L,\boldsymbol{\Sigma}_L)$, the initial decoder hidden state is obtained via a linear mapping
\begin{equation}
\mathbf{z}_d(t_{T'+1}) \leftarrow \mathbf{W}_{L \rightarrow h_d}\, s_L,
\qquad
\mathbf{W}_{L \rightarrow h_d} \in \mathbb{R}^{h_d \times L}.
\end{equation}

The decoder hidden state then evolves (without control path) according to the NODE parameterized by $f_{\theta_d}$,
\begin{equation}
\mathbf{z}_d(t)
=
\mathbf{z}_d(t_{T'+1})
+
\int_{t_{T'+1}}^{t}
f_{\theta_d}\!\left(\mathbf{z}_d(s)\right)\, ds,
\qquad
t \in (t_{T'+1}, t_T].
\end{equation}

\paragraph{Prediction head.} At each forecast time $t\in(t_{T^{\prime}+1}, t_T]$, a probabilistic MLP ($f_{\theta_p}$) maps the decoder hidden state $\mathbf{z}_d(t)$ to the parameters of a multivariate Student-$t$ distribution over the error $\hat{\mathbf{e}}_{\text{ECI}}(t)\in\mathbb{R}^3$. The prediction head outputs a mean $\boldsymbol{\mu}_t\in\mathbb{R}^3$ and a lower-triangular matrix $\mathbf{L}_t\in\mathbb{R}^{3\times 3}$ with strictly positive diagonal entries, which parametrizes a full covariance matrix

\begin{equation}
\boldsymbol{\Sigma}_t = \mathbf{L}_t \mathbf{L}_t^{\top}.
\end{equation}
We then model
\begin{equation}
\hat{\mathbf{e}}_{\text{ECI}}(t) \mid \mathbf{z}_d(t) \sim \mathcal{T}_{\nu}\!\left(\boldsymbol{\mu}_t, \boldsymbol{\Sigma}_t\right)
\end{equation}
where $\nu>0$ denotes the degrees of freedom. Let $\mathbf{r}_t = \mathbf{e}_{\text{ECI}}(t) - \boldsymbol{\mu}_t$ and define $\mathbf{y}_t$ via the triangular solve as

\begin{equation}
\mathbf{y}_t = \mathbf{L}_t^{-1}\mathbf{r}_t,
\qquad
\delta_t = \|\mathbf{y}_t\|_2^2
= \mathbf{r}_t^{\top}\boldsymbol{\Sigma}_t^{-1}\mathbf{r}_t,
\end{equation}
where $\delta_t$ is the squared Mahalanobis distance. The log-determinant is computed from the Cholesky factor as
\begin{equation}
\log|\boldsymbol{\Sigma}_t| = 2\sum_{i=1}^{3}\log\big((\mathbf{L}_t)_{ii}\big).
\end{equation}

\section{CTPC-CDE}\label{adx:ctpc_cde}

Here, we describe the CTPC-ODE variant of our proposed Corrector, shown in Fig.~\ref{fig:CTPC_CDE}.

\paragraph{Encoder (ODE-RNN).}
The encoder processes past forecast errors from the GMAT Predictor to infer a latent-space distribution.
Unlike NCDEs, the hidden state of an ODE-RNN \cite{rubanova2019latentodesirregularlysampledtime} is only continuous \emph{between} observation times. Specifically, the encoder hidden state $\mathbf{z}_e : [t_{i-1}, t_i] \rightarrow \mathbb{R}^{h_e}$ evolves according to a neural ordinary differential equation (NODE) parameterized by $f_{\theta_e}$, and is discretely updated at observation times via a GRU. Formally, between two consecutive observations $t_{i-1}$ and $t_i$, the hidden state evolves as

\begin{align}
\mathbf{z}_e(t)
&= \mathbf{z}_e(t_{i-1})
+ \int_{t_{i-1}}^{t} f_{\theta_e}\!\left(\mathbf{z}_e(s)\right)\, ds,
\quad t \in (t_{i-1}, t_i),
\end{align}

yielding a pre-update state $\tilde{\mathbf{z}}_e(t_i)$. At each observation time $t_i$, the hidden state is updated using the observed forecast error $\mathbf{e}_{\text{ECI}}(t_i)$ via
\begin{align}
    \mathbf{z}_e(t_i) &= \mathrm{GRU}_{\phi}\!\left(\tilde{\mathbf{z}}_e(t_i), \mathbf{e}_{\text{ECI}}(t_i)\right).
\end{align}

After sequentially processing the past error trajectory $\mathbf{e}_{\text{ECI}}(t_{0:T^\prime})$, the final hidden state
$\mathbf{z}_e(t_{T'})$ is mapped to the parameters of a latent Gaussian distribution via a linear projection,
\begin{align}
    (\boldsymbol{\mu}_L, \boldsymbol{\Sigma}_L)
    &\leftarrow \mathbf{W}_{h_e \rightarrow L}\,\mathbf{z}_e(t_{T'}),
    \quad \mathbf{W}_{h_e \rightarrow L} \in \mathbb{R}^{2L \times h_e},
\end{align}
where $\boldsymbol{\Sigma}_L$ is parameterized as a diagonal covariance matrix.

\begin{figure*}[ht]
  \centering
  \includegraphics[width=\textwidth]{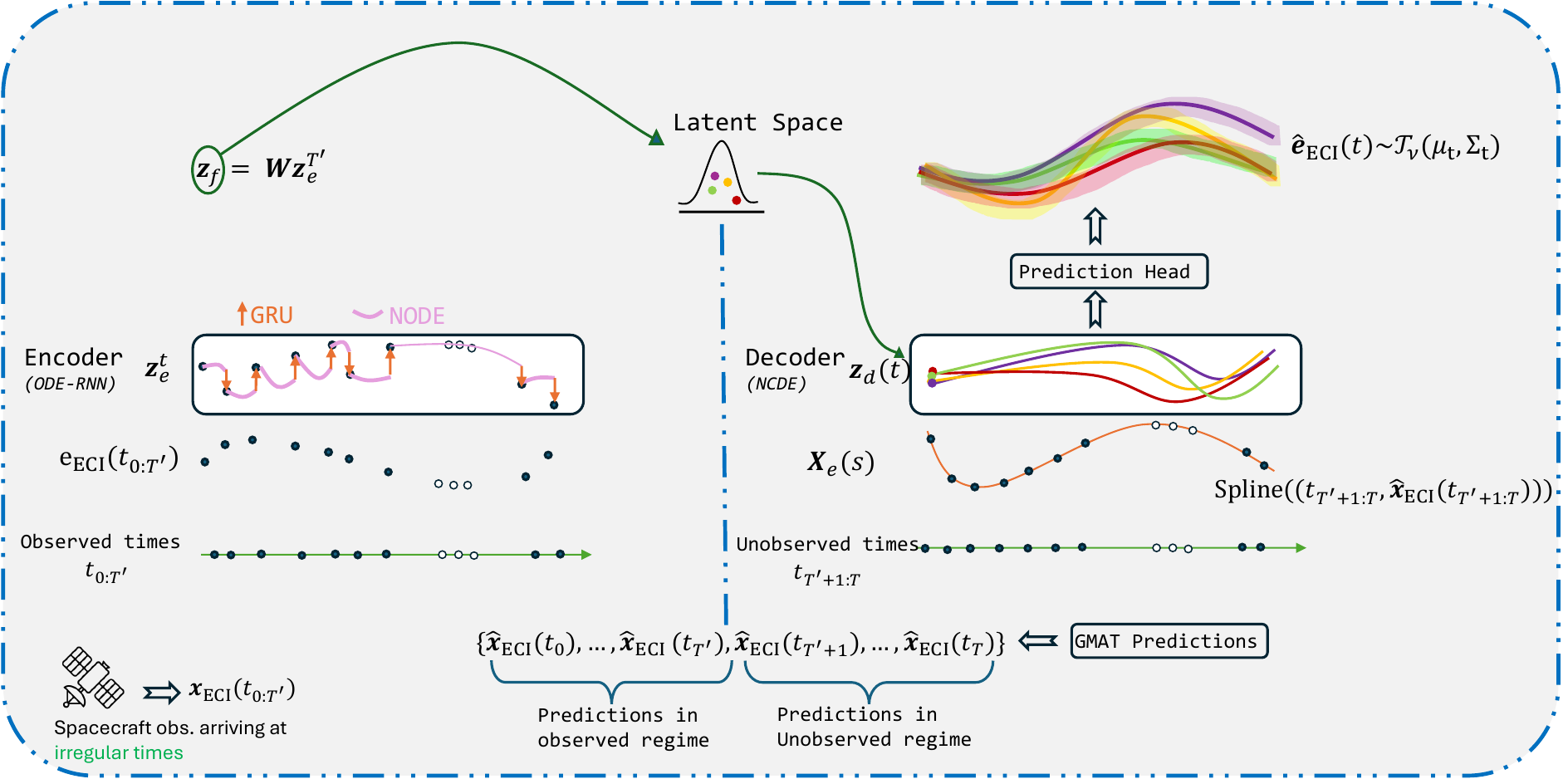}
  \caption{The CTPC-CDE variant of CTPC with ODE-RNN as the encoder and NCDE as the decoder. Though the predicted error $\hat{\mathbf{e}}_{\text{ECI}}(t)$ is shown in the ECI frame, CTPC-CDE can model the error equally well in the RTN frame.}
  \label{fig:CTPC_CDE}
\end{figure*}

\paragraph{Decoder (NCDE).} The decoder, consisting of an NCDE, processes the forecasts of the GMAT Predictor to predict their time-varying probabilistic corrections. Let $\mathbf{z}_{d}: [t_{T^{\prime}+1},t_{T}] \rightarrow \mathbb{R}^{h_d}$ represent the evolution of the hidden states of the decoder. Mathematically, for $t \in (t_{T^{\prime}+1},t_{T}]$,

\begin{align}
    \mathbf{z}_{d}(t) &= \mathbf{z}_{d}(t_{T^{\prime}+1}) + \int_{t_{T^{\prime}+1}}^{t} f_{\theta_{d}}(\mathbf{z}_{d}(s))\,d\mathbf{X}_{d}(s)
    \label{eq:ncde_decoder_1}
\end{align}

The control path, $\mathbf{X}_{d}(t): [t_{T^{\prime}+1},t_{T}] \rightarrow \mathbb{R}^{4}$, is a spline interpolant constructed using the GMAT Predictor's forecasts $\mathbf{\hat{x}}_{\text{ECI}}(t_{T^{\prime}+1:T})$. The initial hidden state $\mathbf{z}_{d}(t_{{T^{\prime}+1}}) \in \mathbb{R}^{h_d}$ is sampled as follows:

\begin{align}
    s_L &\sim \mathcal{N}(\boldsymbol{\mu}_{L}, \boldsymbol{\Sigma}_{L}) \\
    \mathbf{z}_{d}(t_{T^{\prime}+1}) &= \mathbf{W}_{L \rightarrow h_d}s_L \,\,\, \text{with} \,\,\, \mathbf{W}_{L \rightarrow h_d} \in \mathbb{R}^{h_d \times L}
\end{align}

The $\mathbf{W}_{L\rightarrow h_d}$ is a linear mapping from the latent space to the hidden space of the decoder. By utilizing the samples $\mathbf{z}_{d}(t_{T^{\prime}+1})$ as initial hidden states driven by the control path $\mathbf{X}_d(t)$ consisting of deterministic forecasts, the decoder learns to generate diverse evolutions of hidden states $\mathbf{z}_d(t)$ over the forecast horizon $t\in(t_{T^{\prime}+1}, t_T]$, effectively capturing the uncertainty associated with the forecasts.

\paragraph{Prediction head.} At each forecast time $t\in(t_{T^{\prime}+1}, t_T]$, a probabilistic multi-layer perceptron ($f_{\theta_p}$) maps the decoder hidden state $\mathbf{z}_d(t)$ to the parameters of a multivariate Student-$t$ distribution over the error $\hat{\mathbf{e}}_{\text{ECI}}(t)\in\mathbb{R}^3$. The prediction head outputs a mean $\boldsymbol{\mu}_t\in\mathbb{R}^3$ and a lower-triangular matrix $\mathbf{L}_t\in\mathbb{R}^{3\times 3}$ with positive diagonal entries, which parameterizes a full covariance matrix

\begin{equation}
\boldsymbol{\Sigma}_t = \mathbf{L}_t \mathbf{L}_t^{\top}.
\end{equation}
We then model
\begin{equation}
\hat{\mathbf{e}}_{\text{ECI}}(t) \mid \mathbf{z}_d(t) \sim \mathcal{T}_{\nu}\!\left(\boldsymbol{\mu}_t, \boldsymbol{\Sigma}_t\right)
\end{equation}
where $\nu>0$ denotes the degrees of freedom. Let $\mathbf{r}_t = \mathbf{e}_{\text{ECI}}(t) - \boldsymbol{\mu}_t$ and define $\mathbf{y}_t$ via the triangular solve as

\begin{equation}
\mathbf{y}_t = \mathbf{L}_t^{-1}\mathbf{r}_t,
\qquad
\delta_t = \|\mathbf{y}_t\|_2^2
= \mathbf{r}_t^{\top}\boldsymbol{\Sigma}_t^{-1}\mathbf{r}_t,
\end{equation}
where $\delta_t$ is the squared Mahalanobis distance. The log-determinant is computed from the Cholesky factor as
\begin{equation}
\log|\boldsymbol{\Sigma}_t| = 2\sum_{i=1}^{3}\log\big((\mathbf{L}_t)_{ii}\big).
\end{equation}

\section{Additional Analysis and Missing Proofs}\label{adx:missing_theory}
This section presents additional analysis and missing proofs to characterize theoretical guarantees for the proposed framework.
We first show some properties for the nominal dynamics by imposing the following two assumptions.
\begin{assumption}\label{assumption_1}
    The nominal dynamics is Lipschitz continuous, i.e., $\|f(\mathbf{x})-f(\mathbf{y})\|\leq L_f\|\mathbf{x}-\mathbf{y}\|$, $\forall \mathbf{x}, \mathbf{y}$.
\end{assumption}
This is standard in continuous-time control and guarantees existence, uniqueness, and stability bounds for both the true system and the predictor.
\begin{assumption}\label{assumption_2}
    The disturbance process satisfies $\mathbb E\|\eta_t\|^2\leq \sigma^2_\eta<\infty,\;\forall t\in[t_0,t_T]$.
\end{assumption}
Finite second moments are required to analyze mean-square stability and covariance propagation. Assumptions~\ref{assumption_1} and \ref{assumption_2} are primarily for establishing the mean-squared error growth, which motivates the learned corrector in this work. We first characterize the error dynamics in the following result.

\noindent\textbf{Notation and setup.} We consider a continuous-time dynamical system over a finite horizon $[t_0,t_T]$.
We first define the true system state as $\mathbf{x}_t\in\mathbb R^D$, which evolves as
$
    \dot{\mathbf{x}}_\text{ECI}(t)=f(\mathbf{x}_\text{ECI}(t))+\eta_t,
$
where $f:\mathbb R^D\to\mathbb R^D$ is the nominal dynamics, and $\eta_t$ is an unknown disturbance capturing unmodeled physics, parameter mismatch, and numerical errors. As shown before, a deterministic physics-based simulator (e.g., GMAT) produces open-loop forecasts
$    \dot{\hat{\mathbf{x}}}_\text{ECI}(t)=f(\hat{\mathbf{x}}_\text{ECI}(t))$, $\hat{\mathbf{x}}_\text{ECI}(t_0)=\mathbf{x}_\text{ECI}(t_0)$. Recall that $\mathbf{e}_\text{ECI}(t)=\mathbf{x}_\text{ECI}(t)-\hat{\mathbf{x}}_\text{ECI}(t)$.

\begin{lemma}\label{lemma_1}
    The forecast error satisfies
    \begin{equation}
        \dot{\mathbf{e}}_\text{ECI}(t) = f(\mathbf{x}_\text{ECI}(t))-f(\hat{\mathbf{x}}_\text{ECI}(t))+\eta_t.
    \end{equation}
\end{lemma}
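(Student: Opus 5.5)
The identity follows directly from the definitions of the true and forecast trajectories, with no estimates needed. By the setup above, the true state satisfies $\dot{\mathbf{x}}_\text{ECI}(t)=f(\mathbf{x}_\text{ECI}(t))+\eta_t$. The predictor satisfies $\dot{\hat{\mathbf{x}}}_\text{ECI}(t)=f(\hat{\mathbf{x}}_\text{ECI}(t))$ with $\hat{\mathbf{x}}_\text{ECI}(t_0)=\mathbf{x}_\text{ECI}(t_0)$. The error is defined by $\mathbf{e}_\text{ECI}(t)=\mathbf{x}_\text{ECI}(t)-\hat{\mathbf{x}}_\text{ECI}(t)$.

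First, I establish that both trajectories are well defined and differentiable on $[t_0,t_T]$. For the predictor, Assumption~\ref{assumption_1} (global Lipschitz continuity of $f$) and Picard--Lindel\"of give a unique $C^1$ solution on the whole horizon. For the true system, I interpret $\eta_t$ as a (pathwise) integrable forcing term, so the equation is read in integral form, $\mathbf{x}_\text{ECI}(t)=\mathbf{x}_\text{ECI}(t_0)+\int_{t_0}^t \big(f(\mathbf{x}_\text{ECI}(s))+\eta_s\big)\,ds$. The Lipschitz bound again yields existence and uniqueness by a Carath\'eodory-type argument. The derivative then exists wherever $\eta$ is continuous, and almost everywhere in general. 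Assumption~\ref{assumption_2} is not needed for this identity; it only becomes relevant for the later mean-square bounds.

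Second, I subtract the two equations. Differentiation is linear, so $\dot{\mathbf{e}}_\text{ECI}(t)=\dot{\mathbf{x}}_\text{ECI}(t)-\dot{\hat{\mathbf{x}}}_\text{ECI}(t)=f(\mathbf{x}_\text{ECI}(t))-f(\hat{\mathbf{x}}_\text{ECI}(t))+\eta_t$. Equivalently, in integral form, $\mathbf{e}_\text{ECI}(t)=\int_{t_0}^t\big(f(\mathbf{x}_\text{ECI}(s))-f(\hat{\mathbf{x}}_\text{ECI}(s))+\eta_s\big)\,ds$, using $\mathbf{e}_\text{ECI}(t_0)=0$. This is the claimed statement.

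The only delicate point is regularity: if $\eta_t$ is merely measurable, or is a noise process rather than a function, the derivative must be understood almost everywhere or in the integral sense. I would state the lemma in that sense and note that for continuous $\eta$ it holds pointwise. It is also worth recording, for later use, that Assumption~\ref{assumption_1} gives $\|f(\mathbf{x}_\text{ECI})-f(\hat{\mathbf{x}}_\text{ECI})\|\le L_f\|\mathbf{e}_\text{ECI}\|$. This converts the identity into the differential inequality that drives the subsequent Gr\"onwall-type error-growth bound.
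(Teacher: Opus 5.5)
Your proposal is correct and takes the same approach as the paper, whose proof simply differentiates $\mathbf{e}_\text{ECI}(t)=\mathbf{x}_\text{ECI}(t)-\hat{\mathbf{x}}_\text{ECI}(t)$ and substitutes the two dynamics. Your discussion of well-posedness, and of reading the identity almost everywhere or in integral form when $\eta_t$ is only measurable, is a sound refinement that the paper omits.
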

\begin{proof}
    By definition, we have the following result
    \begin{equation}
    \begin{split}
        \dot{\mathbf{e}}_\text{ECI}(t)&=\dot{\mathbf{x}}_\text{ECI}(t)-\dot{\hat{\mathbf{x}}}_\text{ECI}(t)\\&=(f(\mathbf{x}_\text{ECI}(t))+\eta_t)-f(\hat{\mathbf{x}}_\text{ECI}(t))\\&=f(\mathbf{x}_\text{ECI}(t))-f(\hat{\mathbf{x}}_\text{ECI}(t))+\eta_t,
    \end{split}
    \end{equation}
    which completes the proof.
\end{proof}
\begin{lemma}\label{lemma_2}
    Under Assumptions~\ref{assumption_1} and~\ref{assumption_2}, we have
    \begin{equation}
        \mathbb E\|\mathbf{e}_\text{ECI}(t)\|^2\leq e^{2L_ft}\mathbb E\|\mathbf e_\text{ECI}(0)\|^2+\frac{\sigma^2_\eta}{L_f}(e^{2L_ft}-1).
    \end{equation}
\end{lemma}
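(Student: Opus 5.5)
We prove Lemma~\ref{lemma_2} by a Grönwall-type argument on $V(t):=\mathbb E\|\mathbf{e}_\text{ECI}(t)\|^2$, using the error dynamics of Lemma~\ref{lemma_1}. We differentiate $\|\mathbf{e}_\text{ECI}(t)\|^2$ along trajectories, bound the Lipschitz part and the disturbance part separately, and then integrate the resulting linear differential inequality. Time is measured so that the initial time is $0$, matching the statement.

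\textbf{Step 1: differential inequality.} By Lemma~\ref{lemma_1},
\begin{equation*}
\frac{d}{dt}\|\mathbf{e}_\text{ECI}(t)\|^2 = 2\,\mathbf{e}_\text{ECI}(t)^\top\big(f(\mathbf{x}_\text{ECI}(t))-f(\hat{\mathbf{x}}_\text{ECI}(t))\big) + 2\,\mathbf{e}_\text{ECI}(t)^\top\eta_t .
\end{equation*}
For the first term, Cauchy--Schwarz and Assumption~\ref{assumption_1} give the bound $2L_f\|\mathbf{e}_\text{ECI}(t)\|^2$.

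For the cross term with $\eta_t$, we use Young's inequality with weight $L_f$:
\begin{equation*}
2\,\mathbf{e}^\top\eta \le L_f\|\mathbf{e}\|^2 + \|\eta\|^2/L_f .
\end{equation*}
Taking expectations and invoking Assumption~\ref{assumption_2} (we interchange derivative and expectation, which is justified by finite second moments) yields
\begin{equation*}
\dot V(t) \le 3L_f V(t) + \sigma_\eta^2/L_f .
\end{equation*}

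\textbf{Step 2: matching the stated constants.} This is the main obstacle. A naive Young split produces rate $3L_f$ rather than $2L_f$. Integrating $\dot V\le aV+b$ gives $V(t)\le e^{at}V(0)+\frac{b}{a}(e^{at}-1)$, so the target form $e^{2L_ft}V(0)+\frac{\sigma_\eta^2}{L_f}(e^{2L_ft}-1)$ corresponds to $a=2L_f$ and $b=2\sigma_\eta^2$.

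To reach this form, we first try to exploit the absence of correlation between the disturbance and the current error, i.e. $\mathbb E[\mathbf{e}_\text{ECI}(t)^\top\eta_t]=0$. This is the standard treatment when $\eta_t$ is a zero-mean innovation independent of the past. It gives $\dot V\le 2L_fV+\sigma_\eta^2$, and the resulting bound $\frac{\sigma_\eta^2}{2L_f}(e^{2L_ft}-1)$ is dominated by the stated one.

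If that independence cannot be assumed, the fallback is to avoid differentiating the square. Instead we work with the integral form
\begin{equation*}
\mathbf{e}(t)=\mathbf{e}(0)+\int_0^t\big(f(\mathbf{x})-f(\hat{\mathbf{x}})\big)\,ds+\int_0^t\eta_s\,ds ,
\end{equation*}
and apply Grönwall to $\|\mathbf{e}(t)\|$ pathwise, obtaining $\|\mathbf{e}(t)\|\le e^{L_ft}\big(\|\mathbf{e}(0)\|+\int_0^t\|\eta_s\|\,ds\big)$. Squaring and using $(a+b)^2\le 2a^2+2b^2$ together with Jensen's inequality then gives a bound of the same exponential rate $2L_f$. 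Some bookkeeping of constants is needed to land on exactly $\sigma_\eta^2/L_f$.

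\textbf{Step 3: integrate.} Finally, we multiply $\dot V\le 2L_fV+2\sigma_\eta^2$ (or the sharper version from Step 2) by $e^{-2L_ft}$ and integrate over $[0,t]$. This yields the claimed inequality, noting that any smaller additive constant is dominated by the stated one.
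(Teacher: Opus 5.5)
Your Step~1 is correct, and you identified the real obstacle: the cross term $2\,\mathbb E[\mathbf e^\top\eta_t]$ (writing $\mathbf e=\mathbf e_\text{ECI}$ and $V(t)=\mathbb E\|\mathbf e(t)\|^2$). But Step~3 integrates $\dot V\le 2L_fV+2\sigma_\eta^2$, and you never derive that inequality from Assumptions~\ref{assumption_1}--\ref{assumption_2}.

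\begin{itemize}
\item Route (a) assumes $\mathbb E[\mathbf e(t)^\top\eta_t]=0$. Assumption~\ref{assumption_2} does not give you this, and it is implausible here: $\eta_t$ models systematic unmodeled physics and parameter mismatch, which is exactly what builds up the error, so the two are correlated. (With a genuinely orthogonal finite-variance $\eta_t$ you would even get $\dot V\le 2L_fV$. The extra $+\sigma_\eta^2$ is an It\^o-type term from a white-noise model, which is not what Assumption~\ref{assumption_2} describes.)
\item Route (b) gives $V(t)\le 2e^{2L_ft}V(0)+2t^2\sigma_\eta^2e^{2L_ft}$. Neither the factor $2$ on the initial term nor the $t^2e^{2L_ft}$ growth can be removed by bookkeeping.
\end{itemize}

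In fact the gap cannot be closed, because the statement is false under its hypotheses. Take $D=1$, $f(x)=L_fx$, a deterministic $\eta_t\equiv\sigma_\eta$, and $\mathbf e(0)=0$ (consistent with the paper's $\hat{\mathbf x}_\text{ECI}(t_0)=\mathbf x_\text{ECI}(t_0)$). Then $\mathbf e(t)=\frac{\sigma_\eta}{L_f}(e^{L_ft}-1)$, so
\[
V(t)\Big/\Bigl[\frac{\sigma_\eta^2}{L_f}(e^{2L_ft}-1)\Bigr]=\frac{e^{L_ft}-1}{L_f(e^{L_ft}+1)}\to\frac{1}{L_f}.
\]
For instance, $L_f=1/2$, $\sigma_\eta=1$, $t=4$ gives $V\approx 163$ against a claimed bound of about $107$. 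With $\mathbf e(0)=a>\sigma_\eta/2$ (same $f,\eta$), the claim fails for every $L_f$ at large $t$. A dimensional check also flags the problem: $\sigma_\eta^2/L_f$ has units of length$^2$/time.

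The paper's own proof has the same defect as your Step~3. It passes from $\|\dot{\mathbf e}\|\le L_f\|\mathbf e\|+\|\eta_t\|$ to $\frac{d}{dt}\mathbb E\|\mathbf e\|^2\le 2L_f\mathbb E\|\mathbf e\|^2+2\mathbb E\|\eta_t\|^2$ by ``squaring''. That conflates a bound on $\|\dot{\mathbf e}\|^2$ with one on $\frac{d}{dt}\|\mathbf e\|^2=2\mathbf e^\top\dot{\mathbf e}$, and silently discards the cross term.

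What you can actually prove:
\begin{itemize}
\item Step~1 already yields $V(t)\le e^{3L_ft}V(0)+\frac{\sigma_\eta^2}{3L_f^2}(e^{3L_ft}-1)$.
\item A sharper version comes from strengthening Step~2(b). Keep the kernel: $\|\mathbf e(t)\|\le e^{L_ft}\|\mathbf e(0)\|+\int_0^t e^{L_f(t-s)}\|\eta_s\|\,ds$. Then apply Minkowski's inequality in $L^2(\Omega)$ instead of $(a+b)^2\le 2a^2+2b^2$ and Jensen. This gives $\sqrt{V(t)}\le e^{L_ft}\sqrt{V(0)}+\frac{\sigma_\eta}{L_f}(e^{L_ft}-1)$.
\end{itemize}
The second bound is attained by the example above. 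It still shows the exponential growth of $V$ at rate $2L_f$ that the lemma is meant to illustrate.
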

\begin{proof}
    From Lemma~\ref{lemma_1} and Assumption~\ref{assumption_1}, we have
    \begin{equation}
        \|\dot{\mathbf{e}}_\text{ECI}(t)\|\leq L_f\|\mathbf{e}_\text{ECI}(t)\|+\|\eta_t\|.
    \end{equation}
    Squaring and taking expectations yields
    \begin{equation}
        \frac{d}{dt}\mathbb E\|\mathbf{e}_\text{ECI}(t)\|^2\leq 2L_f\mathbb E\|\mathbf{e}_\text{ECI}(t)\|^2 + 2\mathbb E\|\eta_t\|^2.
    \end{equation}
    Using Assumption~\ref{assumption_2},
    \begin{equation}
        \frac{d}{dt}\mathbb E\|\mathbf{e}_\text{ECI}(t)\|^2\leq 2L_f\mathbb E\|\mathbf{e}_\text{ECI}(t)\|^2 + 2\sigma^2_\eta.
    \end{equation}
    Solving this linear differential inequality via Gr\"onwall's inequality yields the stated bound.
\end{proof}
Lemma~\ref{lemma_2} implies that without a corrector, the forecast errors grow exponentially with the horizon.

\textbf{Proof of Lemma~\ref{lemma_3}.} We next prove Lemma~\ref{lemma_3} to show the decoder stability.
\begin{proof}
    From the decoder NCDE:
    \begin{equation}
    \mathbf{z}_d(t)=\mathbf{z}_d(t_{T'+1})+\int_{t_{T'+1}}^tf_{\theta_d}(\mathbf{z}_d(s))d\mathbf{X}_d(s)\;\forall t\in(t_{T'+1},t_T].
    \end{equation}
    Taking norms yields
    \begin{equation}
        \|\mathbf{z}_d(t)\|\leq \|\mathbf{z}_d(t_{T'+1})\| + L_d\int_{t_{T'+1}}^t\|\mathbf{z}_d(s)\|d\|\mathbf{X}_d(s)\|.
    \end{equation}
    We then apply Gr\"onwall's inequality to obtain the following relationship
    \begin{equation}
        \|\mathbf{z}_d(t)\|\leq\|\mathbf{z}_d(t_{T'+1})\|\exp(L_d\|\mathbf{X}_d\|_{TV;[t_{T'},t]}).
    \end{equation}
    Squaring, taking expectations, and applying Assumption~\ref{assumption_7} obtains the desired result.
\end{proof}
We then present a result to characterize the calibration consistency. It enables the proper scoring rule optimality.
\begin{lemma}\label{lemma_4}
    The Student-$t$ negative log-likelihood (NLL) + Continuous Ranked Probability Score (CRPS) + Kullback--Leibler (KL) loss is uniquely minimized by the true conditional distribution.
\end{lemma}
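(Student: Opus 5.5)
The plan is to treat the three terms of the objective separately, interpreting ``minimized'' at the population level, i.e.\ as expected loss over the joint law of $(\mathcal{F}_{T'},\mathbf{e}_\text{ECI}(t))$. Suppose the model family is rich enough to realize the true conditional law $P_t^\star(\cdot\mid\mathcal{F}_{T'})$ of $\mathbf{e}_\text{ECI}(t)$ for every $t\in(t_{T'},t_T]$. Then a predictive law $Q_t$ minimizes $\mathbb E[\mathcal{L}_\text{total}]$ if and only if it minimizes each term at the optimum. It therefore suffices to prove three things. First, the expected Student-$t$ NLL is uniquely minimized within its family at $P_t^\star$. Second, the expected CRPS is uniquely minimized at $P_t^\star$. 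Third, the KL term does not move the minimizer of the first two, or can be absorbed.

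For the NLL, I will write $\mathbb E_{P^\star}[-\log q(\mathbf{e})] = H(P^\star) + \mathrm{KL}(P^\star\|Q)$, conditionally on $\mathcal{F}_{T'}$. This is minimized iff $\mathrm{KL}(P^\star\|Q)=0$, i.e.\ $Q=P^\star$ almost everywhere. This is the strict propriety of the logarithmic score, and it requires $P^\star$ to lie in the Student-$t$ family with $\nu>\nu_{\min}$. I will state that well-specification explicitly as the operative hypothesis; otherwise one only gets the KL projection onto the family.

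For the CRPS, I will use the energy-form identity. For each coordinate $i$, the expected sample-based loss equals $\mathbb E|\hat e-e|-\tfrac12\mathbb E|\hat e-\hat e'| = \mathrm{CRPS}(Q_i,e)$. Its expectation under $P_i^\star$ equals $\mathrm{CRPS}(P_i^\star,P_i^\star)+\int (F_{Q_i}-F_{P_i^\star})^2\,dx$, which is strictly minimized at $Q_i=P_i^\star$ among laws with finite first moment (Gneiting--Raftery). Summing over $i$ and $j$ preserves the minimizer. Strict uniqueness here holds only at the level of marginals, but it is consistent with the NLL minimizer, which pins down the joint law.

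The main obstacle is the KL term. This is a regularizer on the latent posterior $q(\mathbf{z}_0)$, not a scoring rule on the predictive distribution. In general it biases the minimizer: the ELBO tradeoff can prevent the decoder from matching $P^\star$ exactly. My first attempt is to show that any predictive law can be realized with $q(\mathbf{z}_0)=\mathcal N(0,I)$, so the KL term is zero at the optimum. The decoder and head would then carry all conditioning on $\mathcal{F}_{T'}$; this is possible because the decoder's control path $\mathbf{X}_d$ and the decoder initialization can absorb it under an expressivity assumption. Under that construction the KL term attains its global minimum $0$ simultaneously with the other two terms, so the sum is uniquely minimized at $P^\star$.

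If that fails, I would fall back on a weaker claim: the minimizer converges to $P^\star$ as the KL weight tends to $0$. I would flag that the uniqueness statement then holds in the predictive space, modulo non-identifiable reparametrizations of the latent variables and network weights.
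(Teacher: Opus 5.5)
Your treatment of the NLL and CRPS terms is sound. It is more careful than the paper's proof, which never mentions well-specification or that the coordinatewise CRPS only identifies marginals.

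The step that fails is the one you flag yourself: realizing the optimum with $q(\mathbf{z}_0)=\mathcal{N}(\mathbf{0},\mathbf{I})$. The KL term is evaluated per input, so it vanishes only if the encoder outputs the prior for almost every warm-up history. In that case $s_L$, and hence $\mathbf{z}_d(t_{T'+1})=\mathbf{W}_{L\rightarrow h_d}s_L$, is independent of $\mathbf{e}_{\text{ECI}}(t_{0:T'})$. The decoder's only other input is $\mathbf{X}_d$, the spline through the GMAT forecasts $\hat{\mathbf{x}}_{\text{ECI}}(t_{T'+1:T})$. That path carries no information about the past errors, which depend on the observations $\mathbf{x}_{\text{ECI}}(t_{0:T'})$. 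So every KL-free configuration outputs a law that depends on $\mathcal{F}_{T'}$ only through $\mathbf{X}_d$. This is an information constraint, and no expressivity assumption removes it.

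No other construction rescues the statement either. Suppose $P^\star_t(\cdot\mid\mathcal{F}_{T'})$ genuinely depends on the warm-up errors. Then any configuration reproducing it has $\mathbb{E}\,\mathcal{L}^{\mathrm{KL}}\ge I(\mathbf{e}_{\text{ECI}}(t_{0:T'});\mathbf{z}_0)>0$. Now move the encoder toward the prior, e.g.\ $\boldsymbol{\mu}_L\mapsto(1-\epsilon)\boldsymbol{\mu}_L$ and $\log\boldsymbol{\sigma}_L\mapsto(1-\epsilon)\log\boldsymbol{\sigma}_L$. This lowers $\mathcal{L}^{\mathrm{KL}}$ at first order in $\epsilon$. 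Meanwhile NLL$+$CRPS, already at their minimum, change only at second order (under mild smoothness). With the unit KL weight in $\mathcal{L}_{\mathrm{total}}$, the true conditional law is therefore generically not even a stationary point; this is the usual rate--distortion (information-bottleneck) tradeoff.

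The lemma therefore needs an extra hypothesis, for example:
\begin{itemize}
\item the truth does not depend on past errors beyond $\mathbf{X}_d$;
\item the KL term is dropped; or
\item ``minimized'' is read over the predictive terms only.
\end{itemize}
Your fallback with vanishing KL weight is such a reformulation, not a proof of the stated lemma. It would also need its own argument that the penalized minimizers converge.

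For comparison, the paper's proof asserts that NLL, CRPS, and KL are each strictly proper scoring rules, so their sum is strictly proper. The third assertion is exactly the one you correctly reject: $\mathcal{L}^{\mathrm{KL}}$ compares the encoder posterior to a fixed prior and never sees the observed error. So the paper's route does not close the gap either.

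Your abstract ``predictive law $Q_t$'' also hides that the three terms score different objects. In the training loop, each latent draw yields its own Student-$t$ $(\boldsymbol{\mu}^{(k)}_t,\mathbf{L}^{(k)}_t)$. The CRPS is computed on the ensemble across draws, and the KL acts on $q$. Before invoking ``a common minimizer minimizes the sum,'' you must fix which single law all three evaluate. For example, if the NLL is averaged over draws, its minimum forces every draw's Student-$t$ to equal $P^\star_t$. Since a nondegenerate Gaussian posterior has full support, this again forces $P^\star_t$ to depend on $\mathcal{F}_{T'}$ only through $\mathbf{X}_d$. And if the $\hat{e}^k$ are the per-draw means, the CRPS ensemble collapses to a point mass, which is not CRPS-optimal unless $P^\star_t$ is degenerate.
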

\begin{proof}
    First, the Student-$t$ NLL is strictly proper for distributions with finite variance. Second, CRPS is strictly proper for continuous distributions. Third, KL divergence is a strictly proper scoring rule for continuous probability distributions. Thus, their sum preserves strict properness, and the minimizer equals the true conditional law.
\end{proof}
\textbf{Proof of Theorem~\ref{theorem_1}.} We show the proof for Theorem~\ref{theorem_1} next.
\begin{proof}
    By Lemma~\ref{lemma_4} and realizability, we have
    \begin{equation}
        \bm{\mu}_t=\mathbb E[\mathbf{e}_\text{ECI}(t)|\mathcal{F}_{T'}],\quad\Sigma_t=\text{Cov}[\mathbf{e}_\text{ECI}(t)|\mathcal{F}_{T'}].
    \end{equation}
    Then:
    \begin{equation}
        \begin{split}
            \mathbb E[\textbf{r}^\top\Sigma^{-1}\textbf{r}]&=\mathbb E[\text{tr}(\Sigma^{-1}\textbf{r}\textbf{r}^\top)]\\&=\text{tr}(\Sigma^{-1}\mathbb E[\textbf{r}\textbf{r}^\top])\\&=\text{tr}(I_D)=D.
        \end{split}
    \end{equation}
\end{proof}
\textbf{Proof of Theorem~\ref{theorem_2}.} We show the proof for Theorem~\ref{theorem_2} in the following.
\begin{proof}
    By definition, we have
    \begin{equation}
        \tilde{\mathbf{e}}_\text{ECI}(t)=\mathbf{e}_\text{ECI}(t)-\bm{\mu}_t.
    \end{equation}
    Taking the squared norm and expectation,
    \begin{equation}\label{eq_22}
        \mathbb E\|\tilde{\mathbf{e}}_\text{ECI}(t)\|^2=\mathbb E\|\mathbf{e}_\text{ECI}(t)\|^2 - 2\mathbb E[\mathbf{e}^\top_\text{ECI}(t)\bm{\mu}_t]+\|\bm{\mu}_t\|^2.
    \end{equation}
    Since $\bm{\mu}_t=\mathbb E[\mathbf e_t|\mathcal{F}_{T'}]$,
    \begin{equation}\label{eq_23}
        \mathbb E[\mathbf{e}^\top_\text{ECI}(t)\bm{\mu}_t]=\bm{\mu}_t^\top\bm{\mu}_t.
    \end{equation}
    Substituting Eq.~\ref{eq_23} into Eq.~\ref{eq_22} obtains the desired result.
\end{proof}
\textbf{Proof of Theorem~\ref{theorem_3}.} We next show the proof for Theorem~\ref{theorem_3}.
\begin{proof}
    From Assumption~\ref{assumption_6}, we have
    \begin{equation}
        \|\Sigma_t\|\leq L^2_\Sigma\mathbb E\|\mathbf{z}_d(t)\|^2.
    \end{equation}
    We then substitute Lemma~\ref{lemma_3} into the last inequality, obtaining
    \begin{equation}
        \|\Sigma_t\|\leq L^2_\Sigma M_0\exp(2L_d\|\mathbf{X}_d\|_{TV;[t_{T'},t]}).
    \end{equation}
    If the latent includes additive stochasticity with variance $\sigma^2_{\mathbf{z}_d}$, then:
    \begin{equation}
        \mathbb E\|\mathbf{z}_d(t)\|^2\leq M_0e^{2L_d} + \sigma^2_{\mathbf{z}_d}\int^t_{t_{T'}}e^{2L_d\|\mathbf{X}_d\|_{TV;[s,t]}}ds.
    \end{equation}
    Multiplying by $L^2_\Sigma$ yields the result.
\end{proof}

We now move to the system-level stability induced by input--output stability. We first impose an assumption for the Lipschitz continuity of the encoder.
\begin{assumption}\label{assumption_3}
    The NCDE encoder vector field $f_{\theta_e}$ satisfies $\|f_{\theta_e}(\mathbf{z}_e)-f_{\theta_e}(\mathbf{z}'_e)\|\leq L_e\|\mathbf{z}_e-\mathbf{z}'_e\|, \forall \mathbf{z}_e, \mathbf{z}'_e$, and $L_e>0$.
\end{assumption}
The following assumption is presented to bound the latent distribution stability.
\begin{assumption}\label{assumption_8}
    The encoder induces a latent initialization mapping such that $\mathbf{z}_e(t_{T'})\mapsto\mathbf{z}_d(t_{T'+1})$ satisfying $\mathbb E\|\mathbf{z}_d(t_{T'+1})-\mathbf{z}'_d(t_{T'+1})\|^2\leq L_z^2\|\mathbf{z}_e(t_{T'})-\mathbf{z}'_e(t_{T'})\|^2$, where $L_z>0$.
\end{assumption}
We first provide the encoder stability as follows.
\begin{lemma}\label{lemma_5}
    Let $\mathbf{z}_e(t)$ and $\mathbf{z}'_e(t)$ be encoder states driven by control paths $\mathbf{X}_e$ and $\mathbf{X}'_e$. Then:
    \begin{equation}
        \|\mathbf{z}_e(t_{T'})-\mathbf{z}'_e(t_{T'})\|\leq L_ee^{L_e\|\mathbf{X}_e\|_{TV}}\|\mathbf{X}_e-\mathbf{X}'_e\|_{TV}.
    \end{equation}
\end{lemma}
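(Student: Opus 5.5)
We bound the difference of the two encoder trajectories by a Grönwall argument on the difference of the integral equations in Eq.~\ref{eq:ncde_encoder}, in the same spirit as the proof of Lemma~\ref{lemma_3}. Both encoders start from the same initial state $\mathbf{z}_e(t_0)=\mathbf{z}'_e(t_0)=\mathbf{0}$, as in Algorithm~\ref{alg:train}. Writing $\Delta(t):=\mathbf{z}_e(t)-\mathbf{z}'_e(t)$, we have
\begin{equation*}
\Delta(t)=\int_{t_0}^{t}\big(f_{\theta_e}(\mathbf{z}_e(s))-f_{\theta_e}(\mathbf{z}'_e(s))\big)\,d\mathbf{X}_e(s)+\int_{t_0}^{t}f_{\theta_e}(\mathbf{z}'_e(s))\,d\big(\mathbf{X}_e-\mathbf{X}'_e\big)(s).
\end{equation*}
This is obtained by adding and subtracting $\int f_{\theta_e}(\mathbf{z}'_e)\,d\mathbf{X}_e$.

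The first term is controlled using Assumption~\ref{assumption_3} and the standard Riemann--Stieltjes estimate $\|\int g\,d\mathbf{X}\|\le\int\|g\|\,d\|\mathbf{X}\|_{TV}$. This gives a bound of the form $L_e\int_{t_0}^t\|\Delta(s)\|\,d\|\mathbf{X}_e\|_{TV;[t_0,s]}$, where Assumption~\ref{assumption_5} ensures finite variation.

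The second term is the forcing term. We need a bound $\sup_s\|f_{\theta_e}(\mathbf{z}'_e(s))\|\le L_e$, so that it contributes at most $L_e\|\mathbf{X}_e-\mathbf{X}'_e\|_{TV}$. The natural route is to use $\|f_{\theta_e}(\mathbf{z})\|\le\|f_{\theta_e}(0)\|+L_e\|\mathbf{z}\|$ together with an a priori bound on $\mathbf{z}'_e$ as in Lemma~\ref{lemma_3}. However, this gives a constant depending on $\|f_{\theta_e}(0)\|$ and $e^{L_e\|\mathbf{X}'_e\|_{TV}}$, not exactly $L_e$.

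To match the stated constant, I would add an implicit normalization: either a bounded vector field with $\sup\|f_{\theta_e}\|\le L_e$ (for example, a spectrally normalized MLP with bounded final activation), or the convention that $L_e$ also dominates this sup bound. I expect this constant bookkeeping to be the main obstacle. I would flag it explicitly and absorb any extra factor into $L_e$.

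With both bounds in place, we have
\begin{equation*}
\|\Delta(t)\|\le L_e\|\mathbf{X}_e-\mathbf{X}'_e\|_{TV}+L_e\int_{t_0}^{t}\|\Delta(s)\|\,d\|\mathbf{X}_e\|_{TV;[t_0,s]}.
\end{equation*}
We then apply the Stieltjes form of Grönwall's inequality, with a constant forcing term and a nondecreasing continuous integrator since the control path is a spline. This yields
\begin{equation*}
\|\Delta(t)\|\le L_e\|\mathbf{X}_e-\mathbf{X}'_e\|_{TV}\exp\big(L_e\|\mathbf{X}_e\|_{TV;[t_0,t]}\big).
\end{equation*}
Evaluating at $t=t_{T'}$ and using $\|\mathbf{X}_e\|_{TV;[t_0,t_{T'}]}=\|\mathbf{X}_e\|_{TV}$ gives the claim.
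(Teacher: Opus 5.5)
Your proposal follows the paper's proof essentially step for step: the same add-and-subtract decomposition of $\mathbf{z}_e(t)-\mathbf{z}'_e(t)$, the Lipschitz bound from Assumption~\ref{assumption_3} on the first integral, the bound $L_e\|\mathbf{X}_e-\mathbf{X}'_e\|_{TV}$ on the forcing integral, and Gr\"onwall's inequality for control paths. The obstacle you flag is real and is also present in the paper, whose proof uses $\|f_{\theta_e}(\mathbf{z}'_e(s))\|\le L_e$ without justification (Lipschitz continuity alone does not give it), so your explicit sup-norm normalization of the vector field is exactly the extra hypothesis needed to make the argument rigorous.
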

\begin{proof}
    Recall the encoder dynamics as
\begin{equation}
    \mathbf{z}_e(t)=\mathbf{z}_e(t_0)+\int_{t_0}^tf_{\theta_e}(\mathbf{z}_e(s))d\mathbf{X}_e(s)\;\forall t\in(t_0,t_{T'}].
\end{equation}
Similarly, we have
\begin{equation}
    \mathbf{z}'_e(t)=\mathbf{z}_e(t_0)+\int_{t_0}^tf_{\theta_e}(\mathbf{z}'_e(s))d\mathbf{X}'_e(s)\;\forall t\in(t_0,t_{T'}].
\end{equation}
We subtract $\mathbf{z}'_e(t)$ from $\mathbf{z}_e(t)$ to obtain
\begin{equation}
\begin{split}
    \mathbf{z}_e(t)-\mathbf{z}'_e(t)&=\int_{t_0}^t(f_{\theta_e}(\mathbf{z}_e(s))-f_{\theta_e}(\mathbf{z}'_e(s)))d\mathbf{X}_e(s)\\& + \int_{t_0}^tf_{\theta_e}(\mathbf{z}'_e(s))d(\mathbf{X}_e-\mathbf{X}'_e)(s).
\end{split}
\end{equation}
Taking norms, we have
\begin{equation}
\begin{split}
    \|\mathbf{z}_e(t)-\mathbf{z}'_e(t)\| &\leq L_e\int_{t_0}^t\|\mathbf{z}_e(s)-\mathbf{z}'_e(s)\|\,|d\mathbf{X}_e(s)| \\
    &\quad +L_e\|\mathbf{X}_e-\mathbf{X}'_e\|_{TV}.
\end{split}
\end{equation}
Applying Gr\"onwall's inequality for control paths yields the desired result.
\end{proof}
Likewise, we present the decoder propagation of latent perturbations as follows.
\begin{lemma}\label{lemma_6}
    Let $\mathbf{z}_d(t)$ and $\mathbf{z}'_d(t)$ be decoder trajectories with initializations $\mathbf{z}_d(t_{T'+1})$ and $\mathbf{z}'_d(t_{T'+1})$. Then:
    \begin{equation}
    \begin{split}
        \|\mathbf{z}_d(t)-\mathbf{z}'_d(t)\| &\leq \|\mathbf{z}_d(t_{T'+1})-\mathbf{z}'_d(t_{T'+1})\| \\
        &\quad\times \exp(L_d\|\mathbf{X}_d\|_{TV;[t_{T'},t]}).
    \end{split}
    \end{equation}
\end{lemma}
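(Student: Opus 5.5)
The plan is to mirror the proof of Lemma~\ref{lemma_3}, applied to the difference of two decoder trajectories driven by the same control path $\mathbf{X}_d$. Write both trajectories via Eq.~\ref{eq:ncde_decoder}, with the same path but different initial states $\mathbf{z}_d(t_{T'+1})$ and $\mathbf{z}'_d(t_{T'+1})$, and subtract. Since the control path is shared, there is no forcing term of the type $\int f\,d(\mathbf{X}_e-\mathbf{X}'_e)$ that appeared in Lemma~\ref{lemma_5}. We are left with
\[
\mathbf{z}_d(t)-\mathbf{z}'_d(t)=\mathbf{z}_d(t_{T'+1})-\mathbf{z}'_d(t_{T'+1})+\int_{t_{T'+1}}^{t}\bigl(f_{\theta_d}(\mathbf{z}_d(s))-f_{\theta_d}(\mathbf{z}'_d(s))\bigr)\,d\mathbf{X}_d(s).
\]

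Next, take norms and bound the Riemann--Stieltjes integral by the integral of the integrand norm against the variation measure $|d\mathbf{X}_d|$. Assumption~\ref{assumption_5} makes this measure finite, and Assumption~\ref{assumption_4} then gives
\[
\|\mathbf{z}_d(t)-\mathbf{z}'_d(t)\|\le \|\mathbf{z}_d(t_{T'+1})-\mathbf{z}'_d(t_{T'+1})\|+L_d\int_{t_{T'+1}}^{t}\|\mathbf{z}_d(s)-\mathbf{z}'_d(s)\|\,|d\mathbf{X}_d(s)|.
\]
Here the matrix-valued vector field acts on $d\mathbf{X}_d$ through the operator norm, so the Lipschitz constant in matrix norm is exactly what is needed.

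Finally, apply Gr\"onwall's inequality in its Stieltjes form. The constant term is $\|\mathbf{z}_d(t_{T'+1})-\mathbf{z}'_d(t_{T'+1})\|$, and the measure is $L_d|d\mathbf{X}_d|$, which is nonnegative with total mass $L_d\|\mathbf{X}_d\|_{TV;[t_{T'+1},t]}$. This yields the exponential factor $\exp(L_d\|\mathbf{X}_d\|_{TV;[t_{T'+1},t]})$. That factor is at most the stated one over $[t_{T'},t]$, since total variation is monotone in the interval.

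The main obstacle is justifying Gr\"onwall for a Stieltjes measure. The exponential form requires the measure to be continuous, with no atoms. This holds here because $\mathbf{X}_d$ is a cubic spline and hence absolutely continuous. Then $|d\mathbf{X}_d|=\|\dot{\mathbf{X}}_d(s)\|ds$, and the statement reduces to the classical integral Gr\"onwall inequality with weight $L_d\|\dot{\mathbf{X}}_d(s)\|$. I would make this reduction explicit rather than invoking a general rough-path version. Existence and uniqueness of both trajectories also follows from Assumption~\ref{assumption_4}.
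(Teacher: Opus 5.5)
Your proposal is correct and follows the paper's proof: subtract the two decoder equations driven by the same $\mathbf{X}_d$, bound the Stieltjes integral via Assumption~\ref{assumption_4} against $|d\mathbf{X}_d|$, and apply Gr\"onwall's inequality. Your added details fill in steps the paper leaves implicit. The first is reducing to the classical Gr\"onwall inequality with weight $L_d\|\dot{\mathbf{X}}_d(s)\|$, which works because the spline control path is absolutely continuous. The second is using monotonicity of total variation to pass from $[t_{T'+1},t]$ to the stated interval $[t_{T'},t]$.
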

\begin{proof}
    Recall the decoder dynamics as follows:
\begin{equation}
    \mathbf{z}_d(t)=\mathbf{z}_d(t_{T'+1})+\int_{t_{T'+1}}^tf_{\theta_d}(\mathbf{z}_d(s))d\mathbf{X}_d(s)\;\forall t\in(t_{T'+1},t_T].
\end{equation}
Similarly, we have
\begin{equation}
    \mathbf{z}'_d(t)=\mathbf{z}'_d(t_{T'+1})+\int_{t_{T'+1}}^tf_{\theta_d}(\mathbf{z}'_d(s))d\mathbf{X}_d(s)\;\forall t\in(t_{T'+1},t_T].
\end{equation}
Subtracting yields
\begin{equation}
\begin{split}
    \mathbf{z}_d(t)-\mathbf{z}'_d(t)&=\mathbf{z}_d(t_{T'+1})-\mathbf{z}'_d(t_{T'+1})\\&+\int_{t_{T'+1}}^t(f_{\theta_d}(\mathbf{z}_d(s))-f_{\theta_d}(\mathbf{z}'_d(s)))d\mathbf{X}_d(s).
\end{split}
\end{equation}
Taking norms, we obtain:
\begin{equation}
\begin{split}
    \|\mathbf{z}_d(t)-\mathbf{z}'_d(t)\|&\leq \|\mathbf{z}_d(t_{T'+1})-\mathbf{z}'_d(t_{T'+1})\| \\&+ L_d\int_{t_{T'+1}}^t\|\mathbf{z}_d(s)-\mathbf{z}'_d(s)\||d\mathbf{X}_d(s)|.
\end{split}
\end{equation}
Applying Gr\"onwall's inequality completes the proof.
\end{proof}
Given the above two lemmas, we now establish the end-to-end Predictor--Corrector stability.
\begin{theorem}\label{theorem_4}
    Let $\mathbf{X}_e$ and $\mathbf{X}'_e$ be two past-observation control paths. Let $(\bm{\mu}_t,\Sigma_t)$ and $(\bm{\mu}'_t,\Sigma'_t)$ be the corresponding corrector outputs. Then for all $t\in[t_{T'},T]$, we have the following relationships:
    \begin{equation}
        \begin{split}
            &\|\bm{\mu}_t-\bm{\mu}'_t\|\leq C_\mu\|\mathbf{X}_e-\mathbf{X}'_e\|_{TV}\\
            &\|\Sigma_t-\Sigma'_t\|\leq C_\Sigma\|\mathbf{X}_e-\mathbf{X}'_e\|_{TV},
        \end{split}
    \end{equation}
    where $C_\mu=L_\mu L_z L_e e^{L_e\|\mathbf{X}_e\|_{TV}}e^{L_d\|\mathbf{X}_d\|_{TV}}$ and analogously for $C_\Sigma$.
\end{theorem}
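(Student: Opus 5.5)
The argument is a composition of the three Lipschitz stages: encoder, latent initialization map, and decoder followed by the prediction head. Fix $t\in[t_{T'},t_T]$ and let $\mathbf{z}_e,\mathbf{z}'_e$ be the encoder states driven by $\mathbf{X}_e,\mathbf{X}'_e$. Let $\mathbf{z}_d,\mathbf{z}'_d$ be the decoder trajectories they induce. Both decoder trajectories are driven by the \emph{same} decoder control path $\mathbf{X}_d$, since the Predictor's forecasts over the open-loop horizon do not depend on the past observations. The outputs are $\bm{\mu}_t=\bm{\mu}(\mathbf{z}_d(t))$ and $\Sigma_t=\Sigma(\mathbf{z}_d(t))$, with primed versions analogous.

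The first step is the prediction head. Assumption~\ref{assumption_6} gives
\[
\|\bm{\mu}_t-\bm{\mu}'_t\|\le L_\mu\|\mathbf{z}_d(t)-\mathbf{z}'_d(t)\|,\qquad \|\Sigma_t-\Sigma'_t\|\le L_\Sigma\|\mathbf{z}_d(t)-\mathbf{z}'_d(t)\|.
\]
The second step is the decoder. Lemma~\ref{lemma_6} gives
\[
\|\mathbf{z}_d(t)-\mathbf{z}'_d(t)\|\le e^{L_d\|\mathbf{X}_d\|_{TV;[t_{T'},t]}}\,\|\mathbf{z}_d(t_{T'+1})-\mathbf{z}'_d(t_{T'+1})\|.
\]
Since the total variation over a subinterval is bounded by $\|\mathbf{X}_d\|_{TV}$ on the full horizon, this exponential factor can be replaced by $e^{L_d\|\mathbf{X}_d\|_{TV}}$, which is uniform in $t$.

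The third step is the latent initialization. Assumption~\ref{assumption_8} gives
\[
\|\mathbf{z}_d(t_{T'+1})-\mathbf{z}'_d(t_{T'+1})\|\le L_z\|\mathbf{z}_e(t_{T'})-\mathbf{z}'_e(t_{T'})\|.
\]
The fourth step is the encoder. Lemma~\ref{lemma_5} bounds the encoder gap by $L_e e^{L_e\|\mathbf{X}_e\|_{TV}}\|\mathbf{X}_e-\mathbf{X}'_e\|_{TV}$. Chaining the four inequalities gives the mean bound with $C_\mu=L_\mu L_zL_e e^{L_e\|\mathbf{X}_e\|_{TV}}e^{L_d\|\mathbf{X}_d\|_{TV}}$. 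The covariance bound follows identically with $L_\Sigma$ in place of $L_\mu$, so $C_\Sigma=L_\Sigma L_zL_e e^{L_e\|\mathbf{X}_e\|_{TV}}e^{L_d\|\mathbf{X}_d\|_{TV}}$.

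The main obstacle is the third step. Assumption~\ref{assumption_8} is stated in mean square, because $\mathbf{z}_d(t_{T'+1})$ is a sample from the latent Gaussian, whereas the other links are deterministic. My plan is to interpret the corrector outputs for a given pair of samples under a common-random-numbers coupling. Concretely, $s_L=\bm{\mu}_L+\Sigma_L^{1/2}\epsilon$ with the same noise $\epsilon$ used in both runs, so the initialization difference becomes a pathwise Lipschitz function of the encoder difference. If a pathwise version is not available, I would instead state the bounds for $(\mathbb{E}\|\bm{\mu}_t-\bm{\mu}'_t\|^2)^{1/2}$, and likewise for $\Sigma$. In that version I square the chain, take expectations, and apply Assumption~\ref{assumption_8}. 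The deterministic encoder bound passes through unchanged, and Jensen's inequality converts back to the stated form, giving the same constants.

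A secondary point is that Lemma~\ref{lemma_5} implicitly needs $\sup\|f_{\theta_e}\|\le L_e$ along the trajectory for the cross term. I would take this as given, as the lemma does, rather than re-derive it.
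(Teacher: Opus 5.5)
Your proof is the paper's proof: it chains Lemma~\ref{lemma_5}, Assumption~\ref{assumption_8}, Lemma~\ref{lemma_6} and Assumption~\ref{assumption_6} in the same way, keeps the decoder path $\mathbf{X}_d$ fixed, and bounds the subinterval total variation by the full one, giving the same constants $C_\mu$ and $C_\Sigma$. You handle the mean-square Assumption~\ref{assumption_8} more carefully than the paper, which writes $\mathbb{E}\|\mathbf{z}_d(t_{T'+1})-\mathbf{z}'_d(t_{T'+1})\|\le L_z\|\mathbf{z}_e(t_{T'})-\mathbf{z}'_e(t_{T'})\|$ and then chains it with pathwise bounds; note, though, that neither of your routes gives a truly pathwise estimate (under common random numbers the Lipschitz constant depends on the unbounded noise $\epsilon$, and Jensen only bounds $\mathbb{E}\|\bm{\mu}_t-\bm{\mu}'_t\|$, hence the aggregated mean), and a bound in expectation is also all the paper's argument actually delivers.
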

\begin{proof}
    By Lemma~\ref{lemma_5}, we have
    \begin{equation}
        \|\mathbf{z}_e(t_{T'})-\mathbf{z}'_e(t_{T'})\|\leq L_ee^{L_e\|\mathbf{X}_e\|_{TV}}\|\mathbf{X}_e-\mathbf{X}'_e\|_{TV}.
    \end{equation}
    By Assumption~\ref{assumption_8}, we can also get
    \begin{equation}
        \mathbb E\|\mathbf{z}_d(t_{T'+1})-\mathbf{z}'_d(t_{T'+1})\|\leq L_z\|\mathbf{z}_e(t_{T'})-\mathbf{z}'_e(t_{T'})\|.
    \end{equation}
    By Lemma~\ref{lemma_6}, we have the decoder propagation as follows
    \begin{equation}
    \begin{split}
        \|\mathbf{z}_d(t)-\mathbf{z}'_d(t)\| &\leq \|\mathbf{z}_d(t_{T'+1})-\mathbf{z}'_d(t_{T'+1})\| \\
        &\quad\times \exp(L_d\|\mathbf{X}_d\|_{TV;[t_{T'},t]}).
    \end{split}
    \end{equation}
    Combining the last three inequalities yields
    \begin{equation}\label{eq_42}
        \|\mathbf{z}_d(t)-\mathbf{z}'_d(t)\|\leq L_z L_ee^{L_e\|\mathbf{X}_e\|_{TV}}e^{L_d\|\mathbf{X}_d\|_{TV;[t_{T'},t]}}\|\mathbf{X}_e-\mathbf{X}'_e\|_{TV}.
    \end{equation}
    Then by Assumption~\ref{assumption_6}, we can obtain that
    \begin{equation}\label{eq_43}
        \|\bm{\mu}(\mathbf{z}_d)-\bm{\mu}(\mathbf{z}'_d)\|\leq L_\mu\|\mathbf{z}_d-\mathbf{z}'_d\|.
    \end{equation}
    and
    \begin{equation}\label{eq_44}
        \|\Sigma(\mathbf{z}_d)-\Sigma(\mathbf{z}'_d)\|\leq L_\Sigma\|\mathbf{z}_d-\mathbf{z}'_d\|.
    \end{equation}
    Substituting Eq.~\ref{eq_42} into Eqs.~\ref{eq_43} and \ref{eq_44} completes the proof.
\end{proof}
System-level stability characterizes the robustness of the full encoder--decoder pipeline to perturbations in past observations. The derived input--output stability result shows that small changes in the warm-up data lead to bounded changes in the predicted mean and covariance over the entire forecast horizon. This guarantees that uncertainty estimates and corrections are not fragile with respect to observation noise or minor trajectory variations. Unlike long-horizon covariance growth, which is governed solely by the decoder dynamics, system-level stability concerns the consistency of latent initialization and its propagation through the model. While not required for forecast-time stability, this result ensures end-to-end well-posedness and supports the reliability of calibration across trajectories, complementing the decoder-level guarantees.

\section{Coordinate Transformations (ECI \& RTN)}\label{adx:coord}

\subsection{Coordinate-Frame Robustness}\label{adx:coord_robust}

In spacecraft dynamics, forecast errors may be represented in different coordinate frames, most commonly the Earth-Centered Inertial (ECI) frame and the rotating Radial--Transverse--Normal (RTN) frame. A desirable property of a probabilistic Corrector is robustness to such coordinate transformations, i.e., the ability to model error dynamics consistently regardless of the chosen frame.

We observe a fundamental distinction between CTPC variants that utilize NODE- and NCDE-based decoders. The NODE-based variant (CTPC-ODE) models forecast errors as the solution of an ordinary differential equation,
\(
\dot{\mathbf{z}}_d(t) = f_{\theta_d} (\mathbf{z}_d(t)),
\)
where the hidden dynamics are not controlled by an explicit control path. Therefore, the learned dynamics are implicitly tied to the coordinate system in which the errors are expressed. In practice, this restricts CTPC-ODE to modeling errors in the RTN frame, where the error dynamics exhibit approximately stationary structure aligned with orbital geometry.

In contrast, NCDE-based variants (CTPC-CDE and CTPC-CDE++) evolve hidden states according to
\(
\mathrm{d}\mathbf{z}_d(t) = f_{\theta_d}\big(\mathbf{z}_d(t)\big)\,\mathrm{d}\mathbf{X}_d(t),
\)
where the control path $\mathbf{X}_d(t)$ is a spline interpolant consisting of the GMAT Predictor's deterministic forecasts. Because coordinate transformations act directly on the control path rather than on uncontrolled hidden dynamics, NCDEs naturally adapt to different coordinate representations. Consequently, CTPC-CDE and CTPC-CDE++ can model forecast errors consistently in both ECI and RTN frames without modifying the underlying architecture.

\subsection{Error Transformation Between ECI and RTN Frames}\label{adx:frame_conversion}

Let $\mathbf{x}_{\text{ECI}}(t)\in\mathbb{R}^3$ denote the observed spacecraft position in the ECI frame and
$\hat{\mathbf{x}}_{\text{ECI}}(t)\in\mathbb{R}^3$ the corresponding GMAT-predicted position.\footnote{
GMAT internally propagates a full Cartesian state; here we denote by $\hat{\mathbf{x}}_{\text{ECI}}(t)$ the position component of that state.
}
The forecast error in ECI is
\begin{equation}
\mathbf{e}_{\mathrm{ECI}}(t) = \mathbf{x}_{\text{ECI}}(t) - \hat{\mathbf{x}}_{\text{ECI}}(t).
\end{equation}

Let $\hat{\mathbf{r}}(t) := \hat{\mathbf{x}}_{\text{ECI}}(t)$ and
$\hat{\mathbf{v}}(t) := \dot{\hat{\mathbf{x}}}_{\text{ECI}}(t)$
denote the GMAT Predictor's forecast position and velocity in the ECI frame at time $t$.
We construct the RTN orthonormal triad from these forecast states. The radial unit vector is
\begin{equation}
\hat{\mathbf{r}}_{\mathrm{RTN}}(t) = \frac{\hat{\mathbf{r}}(t)}{\|\hat{\mathbf{r}}(t)\|}.
\end{equation}
To obtain the along-track (transverse) direction, we remove the radial component of the predicted velocity,
\begin{equation}
\mathbf{t}(t) = \hat{\mathbf{v}}(t) - \big(\hat{\mathbf{v}}(t)^\top \hat{\mathbf{r}}_{\mathrm{RTN}}(t)\big)\hat{\mathbf{r}}_{\mathrm{RTN}}(t),
\qquad
\hat{\mathbf{t}}_{\mathrm{RTN}}(t) = \frac{\mathbf{t}(t)}{\|\mathbf{t}(t)\|},
\end{equation}
and define the orbit-normal unit vector as
\begin{equation}
\hat{\mathbf{n}}_{\mathrm{RTN}}(t) =
\frac{\hat{\mathbf{r}}_{\mathrm{RTN}}(t)\times \hat{\mathbf{t}}_{\mathrm{RTN}}(t)}
{\|\hat{\mathbf{r}}_{\mathrm{RTN}}(t)\times \hat{\mathbf{t}}_{\mathrm{RTN}}(t)\|}.
\end{equation}
The rotation matrix mapping ECI vectors to RTN coordinates is then
\begin{equation}
\mathbf{R}_{\mathrm{ECI}\rightarrow \mathrm{RTN}}(t)
=
\begin{bmatrix}
\hat{\mathbf{r}}_{\mathrm{RTN}}(t)^\top\\
\hat{\mathbf{t}}_{\mathrm{RTN}}(t)^\top\\
\hat{\mathbf{n}}_{\mathrm{RTN}}(t)^\top
\end{bmatrix},
\end{equation}
so that for any vector $\mathbf{u}(t)\in\mathbb{R}^3$ expressed in ECI, its RTN representation is $\mathbf{u}_{\mathrm{RTN}}(t)=\mathbf{R}_{\mathrm{ECI}\rightarrow\mathrm{RTN}}(t)\mathbf{u}_{\mathrm{ECI}}(t)$. (We use the convention that the RTN basis vectors form the rows of $\mathbf{R}_{\mathrm{ECI}\rightarrow \mathrm{RTN}}$, hence $\mathbf{u}_{\mathrm{RTN}}=\mathbf{R}_{\mathrm{ECI}\rightarrow\mathrm{RTN}}(t)\mathbf{u}_{\mathrm{ECI}}$.)

Since the rotation matrix is time-varying and depends on the underlying observed trajectory in the ECI frame, error dynamics in ECI are generally non-stationary, whereas RTN-aligned errors often exhibit more structured behavior. CTPC variants with an NCDE-based decoder explicitly condition their hidden dynamics $\mathbf{z}_d(t)$ on the GMAT Predictor's forecast through the control path $\mathbf{X}_d(t)$, enabling them to learn error evolution in either frame without architectural changes.

Overall, this analysis highlights that coordinate-frame robustness is an architectural property of CTPC variants containing NCDE-based decoders, rather than a consequence of loss design.

\section{Effect of Latent-Induced Variability on Calibration}\label{adx:latent_var}

\begin{table*}[!tb]
\centering
\small 
\caption{We present the results of two variants of our proposed CTPC to demonstrate the efficacy of latent-induced variability on the calibration. For comparison, we report results from Table~\ref{tab:main_results} with total variance ($\bar{\Sigma}$) against results with mean predictive variance ($\frac{1}{K}\sum_{k=1}^{K}\boldsymbol{\Sigma}_t^{(k)}$). $\ddagger$ = total variance ($\bar{\Sigma}$). The results are reported on errors in the RTN frame.}
\vspace{0.15cm}
\renewcommand{\arraystretch}{1.2}
\setlength{\tabcolsep}{4pt} 
\begin{adjustbox}{max width=\textwidth} 
\begin{tabular}{c|c|cccccc|cccccc}
\toprule
\multicolumn{1}{c|}{\multirow{3}{*}{\parbox[c][1cm][c]{2.5cm}{\centering \textbf{Test}\\ \textbf{Period}}}} & 
\multicolumn{1}{c|}{\multirow{3}{*}{\textbf{Model}}} &
\multicolumn{6}{c|}{\textbf{Interpolation Horizons}} &
\multicolumn{6}{c}{\textbf{Extrapolation Horizons}} \\
\cline{3-14} 
& & \multicolumn{3}{c|}{1000M (0D 16H 40M)} & \multicolumn{3}{c|}{2000M (1D 9H 20M)} & \multicolumn{3}{c|}{4000M (2D 18H 40M)} & \multicolumn{3}{c}{5760M (4D 0H 0M)} \\
\cline{3-14} 
& & MSE (\%$\downarrow$) & $\bar{d}^2$ & \multicolumn{1}{c|}{$-\log|\Sigma|$} & MSE (\%$\downarrow$) & $\bar{d}^2$ & $-\log|\Sigma|$ & MSE (\%$\downarrow$) & $\bar{d}^2$ & \multicolumn{1}{c|}{$-\log|\Sigma|$} & MSE (\%$\downarrow$) & $\bar{d}^2$ & $-\log|\Sigma|$ \\
\midrule
\multirow{3}{*}{\parbox[c][1cm][c]{2.5cm}{\centering 2017-02-15\\ \textbf{to} 2017-02-28}}
& \multicolumn{1}{c|}{GMAT}
& 0.0105 (---) & --- & \multicolumn{1}{c|}{---} 
& 0.0396 (---) & --- & --- 
& 0.1465 (---) & --- & \multicolumn{1}{c|}{---} 
& 0.2963 (---) & --- & --- \\
\cline{2-14}
& \multicolumn{1}{c|}{GMAT+CTPC-ODE}
& 0.0021 (80\%) & 0.52 & \multicolumn{1}{c|}{18.43} 
& 0.0081 (79\%) & 0.71 & 17.85 
& 0.0414 (71\%) & 0.96 & \multicolumn{1}{c|}{17.12} 
& 0.1174 (60\%) & 1.34 & 16.78 \\
\cline{2-14}
& \multicolumn{1}{c|}{GMAT+$\text{CTPC-ODE}^{\ddagger}$}
& 0.0021 (80\%) & 0.45 & \multicolumn{1}{c|}{17.85} 
& 0.0081 (79\%) & 0.59 & 17.19 
& 0.0414 (71\%) & 0.75 & \multicolumn{1}{c|}{16.40} 
& 0.1174 (60\%) & 1.03 & 16.04 \\
\cline{2-14}
& \multicolumn{1}{c|}{GMAT+CTPC-CDE}
& 0.0033 (69\%) & 0.51 & \multicolumn{1}{c|}{18.59} 
& 0.0097 (75\%) & 0.75 & 18.12 
& 0.0330 (77\%) & 1.14 & \multicolumn{1}{c|}{17.51} 
& 0.0881 (70\%) & 1.70 & 17.23 \\
\cline{2-14}
& \multicolumn{1}{c|}{GMAT+$\text{CTPC-CDE}^{\ddagger}$}
& 0.0033 (69\%) & 0.43 & \multicolumn{1}{c|}{17.91} 
& 0.0097 (75\%) & 0.60 & 17.34 
& 0.0330 (77\%) & 0.84 & \multicolumn{1}{c|}{16.64} 
& 0.0881 (70\%) & 1.21 & 16.33 \\
\midrule
\multirow{3}{*}{\parbox[c][1cm][c]{2.5cm}{\centering 2017-04-15\\ \textbf{to} 2017-04-30}}
& \multicolumn{1}{c|}{GMAT}
& 0.0121 (---) & --- & \multicolumn{1}{c|}{---} 
& 0.0393 (---) & --- & --- 
& 0.1437 (---) & --- & \multicolumn{1}{c|}{---} 
& 0.2841 (---) & --- & --- \\
\cline{2-14}
& \multicolumn{1}{c|}{GMAT+CTPC-ODE}
& 0.0038 (68\%) & 0.72 & \multicolumn{1}{c|}{19.09} 
& 0.0139 (64\%) & 0.81 & 18.23 
& 0.0562 (60\%) & 0.99 & \multicolumn{1}{c|}{17.23} 
& 0.1312 (53\%) & 1.34 & 16.81 \\
\cline{2-14}
& \multicolumn{1}{c|}{GMAT+$\text{CTPC-ODE}^{\ddagger}$}
& 0.0038 (68\%) & 0.51 & \multicolumn{1}{c|}{18.25} 
& 0.0139 (64\%) & 0.58 & 17.34 
& 0.0562 (60\%) & 0.74 & \multicolumn{1}{c|}{16.37} 
& 0.1312 (53\%) & 1.02 & 15.97 \\
\cline{2-14}
& \multicolumn{1}{c|}{GMAT+CTPC-CDE}
& 0.0050 (58\%) & 0.52 & \multicolumn{1}{c|}{17.83} 
& 0.0155 (60\%) & 0.71 & 17.49 
& 0.0544 (62\%) & 1.15 & \multicolumn{1}{c|}{16.93} 
& 0.1261 (55\%) & 1.70 & 16.67 \\
\cline{2-14}
& \multicolumn{1}{c|}{GMAT+$\text{CTPS-CDE}^{\ddagger}$}
& 0.0050 (58\%) & 0.42 & \multicolumn{1}{c|}{17.32} 
& 0.0155 (60\%) & 0.55 & 16.86 
& 0.0544 (62\%) & 0.87 & \multicolumn{1}{c|}{16.22} 
& 0.1261 (55\%) & 1.27 & 15.94 \\
\midrule
\multirow{3}{*}{\parbox[c][1cm][c]{2.5cm}{\centering 2017-06-15\\ \textbf{to} 2017-06-30}}
& \multicolumn{1}{c|}{GMAT}
& 0.0126 (---) & --- & \multicolumn{1}{c|}{---} 
& 0.0468 (---) & --- & --- 
& 0.1647 (---) & --- & \multicolumn{1}{c|}{---} 
& 0.3200 (---) & --- & --- \\
\cline{2-14}
& \multicolumn{1}{c|}{GMAT+CTPC-ODE}
& 0.0030 (76\%) & 0.64 & \multicolumn{1}{c|}{19.32} 
& 0.0097 (79\%) & 0.86 & 18.49 
& 0.0336 (79\%) & 1.09 & \multicolumn{1}{c|}{17.47} 
& 0.0859 (73\%) & 1.44 & 17.01 \\
\cline{2-14}
& \multicolumn{1}{c|}{GMAT+$\text{CTPC-ODE}^{\ddagger}$}
& 0.0030 (76\%) & 0.48 & \multicolumn{1}{c|}{18.51} 
& 0.0097 (79\%) & 0.65 & 17.62 
& 0.0336 (79\%) & 0.84 & \multicolumn{1}{c|}{16.62} 
& 0.0859 (73\%) & 1.12 & 16.18 \\
\cline{2-14}
& \multicolumn{1}{c|}{GMAT+CTPC-CDE}
& 0.0034 (72\%) & 0.50 & \multicolumn{1}{c|}{18.33} 
& 0.0114 (75\%) & 0.77 & 17.94 
& 0.0359 (78\%) & 1.33 & \multicolumn{1}{c|}{17.56} 
& 0.0830 (74\%) & 2.02 & 17.39 \\
\cline{2-14}
& \multicolumn{1}{c|}{GMAT+$\text{CTPC-CDE}^{\ddagger}$}
& 0.0034 (72\%) & 0.39 & \multicolumn{1}{c|}{17.71} 
& 0.0114 (75\%) & 0.56 & 17.15 
& 0.0359 (78\%) & 0.99 & \multicolumn{1}{c|}{16.67} 
& 0.0830 (74\%) & 1.32 & 16.46 \\
\midrule
\multirow{3}{*}{\parbox[c][1cm][c]{2.5cm}{\centering 2017-08-15\\ \textbf{to} 2017-08-31}}
& \multicolumn{1}{c|}{GMAT}
& 0.0100 (---) & --- & \multicolumn{1}{c|}{---} 
& 0.0324 (---) & --- & --- 
& 0.1075 (---) & --- & \multicolumn{1}{c|}{---} 
& 0.2003 (---) & --- & --- \\
\cline{2-14}
& \multicolumn{1}{c|}{GMAT+CTPC-ODE}
& 0.0037 (63\%) & 0.45 & \multicolumn{1}{c|}{18.46} 
& 0.0136 (57\%) & 0.57 & 18.01
& 0.0413 (61\%) & 0.76 & \multicolumn{1}{c|}{17.37} 
& 0.0885 (55\%) & 1.02 & 17.07 \\
\cline{2-14}
& \multicolumn{1}{c|}{GMAT+$\text{CTPC-ODE}^{\ddagger}$}
& 0.0037 (63\%) & 0.37 & \multicolumn{1}{c|}{17.85} 
& 0.0136 (57\%) & 0.44 & 17.28
& 0.0413 (61\%) & 0.54 & \multicolumn{1}{c|}{16.54} 
& 0.0885 (55\%) & 0.69 & 16.20 \\
\cline{2-14}
& \multicolumn{1}{c|}{GMAT+CTPC-CDE}
& 0.0041 (58\%) & 0.44 & \multicolumn{1}{c|}{18.46} 
& 0.0150 (53\%) & 0.57 & 18.08 
& 0.0430 (60\%) & 0.80 & \multicolumn{1}{c|}{17.51} 
& 0.0883 (55\%) & 1.07 & 17.21 \\
\cline{2-14}
& \multicolumn{1}{c|}{GMAT+$\text{CTPC-CDE}^{\ddagger}$}
& 0.0041 (58\%) & 0.38 & \multicolumn{1}{c|}{17.93} 
& 0.0150 (53\%) & 0.45 & 17.41 
& 0.0430 (60\%) & 0.57 & \multicolumn{1}{c|}{16.72} 
& 0.0883 (55\%) & 0.74 & 16.40 \\
\midrule
\multirow{3}{*}{\parbox[c][1cm][c]{2.5cm}{\centering 2017-10-15\\ \textbf{to} 2017-10-30}}
& \multicolumn{1}{c|}{GMAT}
& 0.0088 (---) & --- & \multicolumn{1}{c|}{---} 
& 0.0334 (---) & --- & --- 
& 0.1201 (---) & --- & \multicolumn{1}{c|}{---} 
& 0.2400 (---) & --- & --- \\
\cline{2-14}
& \multicolumn{1}{c|}{GMAT+CTPC-ODE}
& 0.0025 (71\%) & 0.46 & \multicolumn{1}{c|}{18.84} 
& 0.0094 (71\%) & 0.62 & 18.31 
& 0.0423 (64\%) & 0.95 & \multicolumn{1}{c|}{17.70} 
& 0.1078 (55\%) & 1.42 & 17.41 \\
\cline{2-14}
& \multicolumn{1}{c|}{GMAT+$\text{CTPC-ODE}^{\ddagger}$}
& 0.0025 (71\%) & 0.36 & \multicolumn{1}{c|}{18.15} 
& 0.0094 (71\%) & 0.47 & 17.53 
& 0.0423 (64\%) & 0.70 & \multicolumn{1}{c|}{16.86} 
& 0.1078 (55\%) & 1.02 & 16.55 \\
\cline{2-14}
& \multicolumn{1}{c|}{GMAT+CTPC-CDE}
& 0.0028 (68\%) & 0.49 & \multicolumn{1}{c|}{19.36} 
& 0.0105 (68\%) & 0.73 & 18.88 
& 0.0412 (65\%) & 1.08 & \multicolumn{1}{c|}{18.24} 
& 0.0970 (59\%) & 1.56 & 17.90 \\
\cline{2-14}
& \multicolumn{1}{c|}{GMAT+$\text{CTPC-CDE}^{\ddagger}$}
& 0.0028 (68\%) & 0.37 & \multicolumn{1}{c|}{18.61} 
& 0.0105 (68\%) & 0.52 & 18.01 
& 0.0412 (65\%) & 0.74 & \multicolumn{1}{c|}{17.28} 
& 0.0970 (59\%) & 1.05 & 16.92 \\
\midrule
\multirow{3}{*}{\parbox[c][1cm][c]{2.5cm}{\centering 2017-12-15\\ \textbf{to} 2017-12-31}}
& \multicolumn{1}{c|}{GMAT}
& 0.0124 (---) & --- & \multicolumn{1}{c|}{---} 
& 0.0390 (---) & --- & --- 
& 0.1420 (---) & --- & \multicolumn{1}{c|}{---} 
& 0.2819 (---) & --- & --- \\
\cline{2-14}
& \multicolumn{1}{c|}{GMAT+CTPC-ODE}
& 0.0034 (72\%) & 0.72 & \multicolumn{1}{c|}{19.74} 
& 0.0106 (72\%) & 0.83 & 18.86 
& 0.0456 (67\%) & 0.94 & \multicolumn{1}{c|}{17.80} 
& 0.1132 (59\%) & 1.24 & 17.36 \\
\cline{2-14}
& \multicolumn{1}{c|}{GMAT+$\text{CTPC-ODE}^{\ddagger}$}
& 0.0034 (72\%) & 0.55 & \multicolumn{1}{c|}{18.98} 
& 0.0106 (72\%) & 0.61 & 18.00 
& 0.0456 (67\%) & 0.70 & \multicolumn{1}{c|}{16.94} 
& 0.1132 (59\%) & 0.94 & 16.52 \\
\cline{2-14}
& \multicolumn{1}{c|}{GMAT+CTPC-CDE}
& 0.0038 (69\%) & 0.65 & \multicolumn{1}{c|}{19.49} 
& 0.0112 (71\%) & 0.85 & 18.95 
& 0.0460 (67\%) & 1.08 & \multicolumn{1}{c|}{18.22} 
& 0.1140 (59\%) & 1.45 & 17.85 \\
\cline{2-14}
& \multicolumn{1}{c|}{GMAT+$\text{CTPC-CDE}^{\ddagger}$}
& 0.0038 (69\%) & 0.51 & \multicolumn{1}{c|}{18.82} 
& 0.0112 (71\%) & 0.66 & 18.18 
& 0.0460 (67\%) & 0.80 & \multicolumn{1}{c|}{17.41} 
& 0.1140 (59\%) & 1.05 & 17.03 \\
\bottomrule

\end{tabular}
\end{adjustbox}
\label{tab:latent_variance}
\end{table*}

Table~\ref{tab:latent_variance} isolates the contribution of latent-induced variability by comparing two variants of CTPC:
(i) using only the predictive mean variance (i.e., the average of per-sample predictive covariances), and
(ii) using the total predictive variance $\hat{\Sigma}$, obtained by aggregating both within-sample (aleatoric) and between-sample (latent-induced) variability.
Across all test periods and horizons, incorporating latent-induced variability consistently improves calibration, as evidenced by normalized squared Mahalanobis distances $\bar{d}^2$ moving closer to the ideal value of one.
In contrast, relying solely on predictive mean variance systematically underestimates uncertainty, leading to $\bar{d}^2 > 1$ and overconfident behavior, particularly at longer horizons.
These results demonstrate that latent sampling produces an ensemble-like effect, critical for achieving well-calibrated uncertainty estimates in long-horizon, open-loop forecasting.

\section{Results in the ECI Frame}\label{adx:eci_frame}

Table~\ref{tab:main_results_eci} evaluates the robustness of two types of CTPC variants, NODE-based and NCDE-based decoders, to coordinate transformations by modeling forecast errors directly in the ECI frame. Specifically, we compare the performance of CTPC-ODE (NODE-based decoder) and CTPC-CDE (NCDE-based decoder). Across all six test windows and horizons, CTPC-ODE fails to provide meaningful improvements over the GMAT baseline. Its average MSE reductions are negligible or negative, and its normalized Mahalanobis distances grow faster with horizon than those of CTPC-CDE, indicating severe miscalibration and extreme overconfidence. In contrast, CTPC-CDE consistently achieves substantial reductions in MSE while maintaining significantly lower (though still $>1$) normalized Mahalanobis distances and sharper uncertainty estimates. The averaged results across all periods show that CTPC-CDE reduces long-horizon MSE by up to 24\% at 5760 minutes while yielding markedly improved calibration relative to CTPC-ODE. These results demonstrate that the NCDE-based decoder enables CTPC to model forecast errors robustly under coordinate transformations, whereas NODE-based decoders do not generalize reliably beyond the RTN frame.

\begin{table*}[!tb]
\centering
\small 
\caption{This table compares CTPC-ODE and CTPC-CDE modeling forecast errors in the ECI frame, using NODE and NCDE decoders, respectively. The objective is to assess the robustness of the NCDE-based decoder variants of CTPC to coordinate transformations. The results are reported on errors in the ECI frame.}
\vspace{0.15cm}
\renewcommand{\arraystretch}{1.2}
\setlength{\tabcolsep}{4pt} 
\begin{adjustbox}{max width=\textwidth} 
\begin{tabular}{c|c|cccccc|cccccc}
\toprule
\multicolumn{1}{c|}{\multirow{3}{*}{\parbox[c][1cm][c]{2.5cm}{\centering \textbf{Test}\\ \textbf{Period}}}} & 
\multicolumn{1}{c|}{\multirow{3}{*}{\textbf{Model}}} &
\multicolumn{6}{c|}{\textbf{Interpolation Horizons}} &
\multicolumn{6}{c}{\textbf{Extrapolation Horizons}} \\
\cline{3-14} 
& & \multicolumn{3}{c|}{1000M (0D 16H 40M)} & \multicolumn{3}{c|}{2000M (1D 9H 20M)} & \multicolumn{3}{c|}{4000M (2D 18H 40M)} & \multicolumn{3}{c}{5760M (4D 0H 0M)} \\
\cline{3-14} 
& & MSE (\%$\downarrow$) & $\bar{d}^2$ & \multicolumn{1}{c|}{$-\log|\Sigma|$} & MSE (\%$\downarrow$) & $\bar{d}^2$ & $-\log|\Sigma|$ & MSE (\%$\downarrow$) & $\bar{d}^2$ & \multicolumn{1}{c|}{$-\log|\Sigma|$} & MSE (\%$\downarrow$) & $\bar{d}^2$ & $-\log|\Sigma|$ \\
\midrule
\multirow{3}{*}{\parbox[c][1cm][c]{2.5cm}{\centering 2017-02-15\\ \textbf{to} 2017-02-28}}
& \multicolumn{1}{c|}{GMAT}
& 0.0106 (---) & --- & \multicolumn{1}{c|}{---} 
& 0.0396 (---) & --- & --- 
& 0.1465 (---) & --- & \multicolumn{1}{c|}{---} 
& 0.2963 (---) & --- & --- \\
\cline{2-14}
& \multicolumn{1}{c|}{GMAT+CTPC-ODE}
& 0.0105 (0.58\%) & 4.41 & \multicolumn{1}{c|}{15.26} 
& 0.0396 (-0.02\%) & 7.88 & 13.28 
& 0.1466 (-0.07\%) & 19.05 & \multicolumn{1}{c|}{11.70} 
& 0.2966 (-0.11\%) & 31.65 & 11.21 \\
\cline{2-14}
& \multicolumn{1}{c|}{GMAT+CTPC-CDE}
& 0.0033 (68\%) & 9.44 & \multicolumn{1}{c|}{16.88} 
& 0.0145 (63\%) & 28.88 & 15.75
& 0.0852 (41\%) & 78.22 & \multicolumn{1}{c|}{14.23} 
& 0.2146 (27\%) & 123.28 & 13.49 \\
\midrule
\multirow{3}{*}{\parbox[c][1cm][c]{2.5cm}{\centering 2017-04-15\\ \textbf{to} 2017-04-30}}
& \multicolumn{1}{c|}{GMAT}
& 0.0121 (---) & --- & \multicolumn{1}{c|}{---} 
& 0.0393 (---) & --- & --- 
& 0.1437 (---) & --- & \multicolumn{1}{c|}{---} 
& 0.2841 (---) & --- & --- \\
\cline{2-14}
& \multicolumn{1}{c|}{GMAT+CTPC-ODE}
& 0.0122 (-0.49\%) & 24.41 & \multicolumn{1}{c|}{15.99} 
& 0.0394 (-0.17\%) & 78.99 & 14.63 
& 0.1438 (-0.05\%) & 268.49 & \multicolumn{1}{c|}{13.66} 
& 0.2842 (-0.02\%) & 455.22 & 13.35 \\
\cline{2-14}
& \multicolumn{1}{c|}{GMAT+CTPC-CDE}
& 0.0042 (65\%) & 19.07 & \multicolumn{1}{c|}{17.81} 
& 0.0158 (59\%) & 48.78 & 16.53 
& 0.0749 (44\%) & 117.56 & \multicolumn{1}{c|}{14.91} 
& 0.2033 (28\%) & 173.57 & 14.17 \\
\midrule
\multirow{3}{*}{\parbox[c][1cm][c]{2.5cm}{\centering 2017-06-15\\ \textbf{to} 2017-06-30}}
& \multicolumn{1}{c|}{GMAT}
& 0.0126 (---) & --- & \multicolumn{1}{c|}{---} 
& 0.0468 (---) & --- & --- 
& 0.1647 (---) & --- & \multicolumn{1}{c|}{---} 
& 0.3200 (---) & --- & --- \\
\cline{2-14}
& \multicolumn{1}{c|}{GMAT+CTPC-ODE}
& 0.0126 (0.15\%) & 22.70 & \multicolumn{1}{c|}{15.54} 
& 0.0470 (-0.33\%) & 96.46 & 14.22 
& 0.1654 (-0.38\%) & 329.27 & \multicolumn{1}{c|}{13.17} 
& 0.3213 (-0.37\%) & 589.23 & 12.83 \\
\cline{2-14}
& \multicolumn{1}{c|}{GMAT+CTPC-CDE}
& 0.0046 (63\%) & 24.32 & \multicolumn{1}{c|}{17.04} 
& 0.0115 (66\%) & 90.73 & 16.07 
& 0.0819 (50\%) & 271.95 & \multicolumn{1}{c|}{14.67} 
& 0.2169 (32\%) & 397.67 & 13.84 \\
\midrule
\multirow{3}{*}{\parbox[c][1cm][c]{2.5cm}{\centering 2017-08-15\\ \textbf{to} 2017-08-31}}
& \multicolumn{1}{c|}{GMAT}
& 0.0100 (---) & --- & \multicolumn{1}{c|}{---} 
& 0.0324 (---) & --- & --- 
& 0.1075 (---) & --- & \multicolumn{1}{c|}{---} 
& 0.2003 (---) & --- & --- \\
\cline{2-14}
& \multicolumn{1}{c|}{GMAT+CTPC-ODE}
& 0.0100 (-0.22\%) & 13.90 & \multicolumn{1}{c|}{15.96} 
& 0.0325 (-0.32\%) & 56.39 & 14.70
& 0.1084 (-0.80\%) & 226.83 & \multicolumn{1}{c|}{13.65} 
& 0.2024 (-1.07\%) & 417.85 & 13.28 \\
\cline{2-14}
& \multicolumn{1}{c|}{GMAT+CTPC-CDE}
& 0.0041 (58.85\%) & 14.81 & \multicolumn{1}{c|}{17.74} 
& 0.0161 (50.19\%) & 43.16 & 16.58 
& 0.0676 (37.11\%) & 91.20 & \multicolumn{1}{c|}{14.88} 
& 0.1648 (17.73\%) & 115.07 & 14.01 \\
\midrule
\multirow{3}{*}{\parbox[c][1cm][c]{2.5cm}{\centering 2017-10-15\\ \textbf{to} 2017-10-30}}
& \multicolumn{1}{c|}{GMAT}
& 0.0088 (---) & --- & \multicolumn{1}{c|}{---} 
& 0.0334 (---) & --- & --- 
& 0.1201 (---) & --- & \multicolumn{1}{c|}{---} 
& 0.2400 (---) & --- & --- \\
\cline{2-14}
& \multicolumn{1}{c|}{GMAT+CTPC-ODE}
& 0.0088 (0.13\%) & 8.56 & \multicolumn{1}{c|}{15.78} 
& 0.0335 (-0.36\%) & 35.48 & 14.62 
& 0.1206 (-0.38\%) & 125.27 & \multicolumn{1}{c|}{13.71} 
& 0.2410 (-0.42\%) & 231.29 & 13.42 \\
\cline{2-14}
& \multicolumn{1}{c|}{GMAT+CTPC-CDE}
& 0.0027 (69\%) & 9.47 & \multicolumn{1}{c|}{17.41} 
& 0.0103 (69\%) & 39.98 & 16.53 
& 0.0743 (38\%) & 163.61 & \multicolumn{1}{c|}{15.27} 
& 0.2115 (11\%) & 309.16 & 14.69 \\
\midrule
\multirow{3}{*}{\parbox[c][1cm][c]{2.5cm}{\centering 2017-12-15\\ \textbf{to} 2017-12-31}}
& \multicolumn{1}{c|}{GMAT}
& 0.0124 (---) & --- & \multicolumn{1}{c|}{---} 
& 0.0390 (---) & --- & --- 
& 0.1420 (---) & --- & \multicolumn{1}{c|}{---} 
& 0.2819 (---) & --- & --- \\
\cline{2-14}
& \multicolumn{1}{c|}{GMAT+CTPC-ODE}
& 0.0124 (0.04\%) & 6.78 & \multicolumn{1}{c|}{16.33} 
& 0.0391 (-0.30\%) & 22.89 & 15.25 
& 0.1430 (-0.68\%) & 78.51 & \multicolumn{1}{c|}{14.32} 
& 0.2843 (-0.83\%) & 133.43 & 14.01 \\
\cline{2-14}
& \multicolumn{1}{c|}{GMAT+CTPC-CDE}
& 0.0045 (63\%) & 8.56 & \multicolumn{1}{c|}{18.01} 
& 0.0154 (60\%) & 27.64 & 17.09 
& 0.0766 (46\%) & 99.23 & \multicolumn{1}{c|}{15.76} 
& 0.1946 (30\%) & 167.44 & 15.06 \\
\midrule
\multirow{3}{*}{\parbox[c][1cm][c]{2.5cm}{\centering \textbf{Average}\\ (All Periods)}}
& \multicolumn{1}{c|}{GMAT}
& 0.0111 (---) & --- & \multicolumn{1}{c|}{---} 
& 0.0384 (---) & --- & --- 
& 0.1374 (---) & --- & \multicolumn{1}{c|}{---} 
& 0.2704 (---) & --- & --- \\
\cline{2-14}
& \multicolumn{1}{c|}{GMAT+CTPC-ODE}
& 0.0111 (0.03\%) & 13.46 & \multicolumn{1}{c|}{15.81} 
& 0.0385 (-0.25\%) & 49.68 & 14.45
& 0.1380 (-0.39\%) & 174.57 & \multicolumn{1}{c|}{13.37} 
& 0.2716 (-0.47\%) & 309.78 & 13.02 \\
\cline{2-14}
& \multicolumn{1}{c|}{GMAT+CTPC-CDE}
& 0.0039 (64\%) & 14.28 & \multicolumn{1}{c|}{17.48 } 
& 0.0139 (61\%) & 46.53 & 16.43 
& 0.0768 (42\%) & 136.96 & \multicolumn{1}{c|}{14.95} 
& 0.2010 (24\%) & 214.20 & 14.21 \\
\bottomrule

\end{tabular}
\end{adjustbox}
\label{tab:main_results_eci}
\end{table*}

\section{Training and Testing}\label{adx:train_test}

We construct training and testing datasets using a rolling-window approach based on overlapping long-horizon trajectories extracted from continuous spacecraft ephemeris data collected by NASA \cite{noll2000doris}.

\paragraph{Training data.}
For each window, the Corrector is trained using trajectories spanning approximately one and a half months.
Specifically, for the first window, we use data from January 1, 2017 to February 15, 2017.
Training trajectories are generated by initializing a trajectory every 15 minutes, starting at 00:00 on January 1, 2017.
Each trajectory is propagated forward for 2500 minutes ($\approx$1.7 days) using the GMAT Predictor, yielding densely sampled overlapping training trajectories.
This procedure yields a large collection of partially overlapping trajectories that capture the medium-horizon forecast-error dynamics during the training period. We refer to the 2500-minute horizon as the \textit{\textbf{interpolation horizon}}, because the Correctors are trained on trajectories of this length, while longer horizons are evaluated in extrapolation.

\paragraph{Testing data.}
The trained Corrector is evaluated on the immediately following 15-day test period (e.g., February 15--28, 2017), which is strictly disjoint in time from the training data.
Test trajectories are initialized every 6 hours, starting at 00:00 on the first test day, and each trajectory is propagated forward for 5760 minutes (4 days) using the GMAT Predictor.
This produces a set of long-horizon test trajectories designed to assess both error accumulation and uncertainty propagation over extended forecast horizons. The 5760-minute horizon constitutes the \textit{\textbf{extrapolation horizon}}, as it extends well beyond the training horizon (2500 minutes) and evaluates the Corrector's ability to propagate uncertainty under long-term open-loop forecasting.

\paragraph{Rolling-window evaluation.}
The above training and testing procedure constitutes one train--test window.
We repeat this process by advancing the window in time (e.g., March/April, May/June), constructing a total of 6 non-overlapping test windows across 2017.
This evaluation protocol deliberately trains the Corrector on shorter-horizon (2500-minute), densely sampled trajectories while testing on substantially longer-horizon (5760-minute), sparsely initialized trajectories, reflecting realistic operational conditions for spacecraft trajectory forecasting.

\section{Implementation}\label{adx:impl}
The algorithms were implemented using $\mathtt{Jax}$ \cite{jax2018github} and $\mathtt{Diffrax}$ \cite{kidger2022neuraldifferentialequations}. The implementation code, trained checkpoints, and a representative Window-1 subset of the dataset are openly available on Zenodo \cite{shahid_2026_20741146}.



\printcredits

\section*{Declaration of competing interest}

The authors declare that they have no known competing financial interests or personal relationships that could have appeared to influence the work reported in this paper.

\section*{Data availability}

A representative subset of the spacecraft trajectory dataset (Window 1), together with the implementation code and trained model checkpoints, is openly available on Zenodo at \url{https://doi.org/10.5281/zenodo.20741146} \cite{shahid_2026_20741146}. The full dataset is derived from NASA's Crustal Dynamics Data Information System (CDDIS), which is publicly available.

\section*{Declaration of generative AI and AI-assisted technologies in the manuscript preparation process}

During the preparation of this work the author(s) used generative AI tools in order to improve the writing clarity, grammar, and formatting of the manuscript. After using this tool/service, the author(s) reviewed and edited the content as needed and take(s) full responsibility for the content of the published article. All technical ideas, illustration designs, model designs, theoretical developments, experiments, and conclusions are the authors' own. No generative AI tools were used to design illustrations, generate experimental results, or analyze data.

\bibliographystyle{model1-num-names}
\bibliography{my_references}

\end{document}